%% file: jmlr_approximation_paper.tex
\documentclass[twoside,11pt]{article}

\usepackage{jmlr2e}
\hypersetup{
    colorlinks=false, 
    linktoc=all,     
    linkcolor=blue,  
}
\usepackage{array}

\usepackage{lastpage}

\usepackage{graphicx}
\usepackage[margin=1in]{geometry}
\usepackage[utf8]{inputenc}
\usepackage{amsmath}
\usepackage{amsfonts}
\usepackage{amssymb, bbold}
\usepackage{xcolor}
\usepackage{xfrac}
\usepackage{enumitem}
\usepackage{tikz}
\usepackage{bbm}
\usetikzlibrary{fit,positioning}

\usepackage{color}   
\usepackage[capitalise]{cleveref}
\newcommand{\M}{\mathcal{M}}

\newcommand{\RR}{\mathbb{R}}

\newcommand{\NN}{\mathbb{N}}

\usepackage{bm} 

\newtheorem{definition}{Definition}

\newcommand{\cT}{\mathcal{T}}
\newcommand{\textdef}[1]{\textnormal{\textbf{#1}}}

\newcommand{\cent}{\mathrm{cent}}

\newcommand{\ah}{$\alpha$-H\"older }

\newcommand{\dimM}{\overline{\mathrm{dim}}}

\newcommand{\dint}{d_{\text{intrinsic}}}

\ShortHeadings{Approximation Rates for Group Equivariant Learning}{Siegel, Hordan, Lawrence, Syed, and Dym}
\firstpageno{1}

\begin{document}

\title{Quantitative Approximation Rates for Group Equivariant Learning}

\author{\name Jonathan W. Siegel \email jwsiegel@tamu.edu\\
\addr Department of Mathematics \\
      Texas A\&M University\\
      College Station, TX, 77840, USA 
       \AND
        \name Snir Hordan \email snirhordan@campus.technion.ac.il \\
       \addr Faculty of Mathematics\\
      Technion- Israel Institute of Technology\\
      Haifa, 3200003,  Israel
      \AND
      \name Hannah Lawrence \email hanlaw@mit.edu\\
      \addr Computer Science and Artificial Intelligence Laboratory \\
      Massachusetts Institute of Technology\\
      Cambridge, MA, 02139 USA
      \AND 
      \name Ali Syed  \email saabidi1@tamu.edu\\ \addr Department of Mathematics \\
      Texas A\&M University\\
      College Station, TX, 77840, USA  
      \AND 
      \name Nadav Dym \email nadavdym@technion.ac.il \\
       \addr Faculty of Mathematics\\
      Technion- Israel Institute of Technology\\
      Haifa, 3200003, Israel
      }

\editor{My editor}

\maketitle

\begin{abstract}
The universal approximation theorem establishes that neural networks can approximate any continuous function on a compact set. Later works in approximation theory provide quantitative approximation rates for ReLU networks on the class of $\alpha$-Hölder functions $f: [0,1]^N \to \mathbb{R}$. The goal of this paper is to provide similar quantitative approximation results in the context of group equivariant learning, where the learned $\alpha$-Hölder function is known to obey certain group symmetries. While there has been much interest in the literature in understanding the universal approximation properties of equivariant models, very few quantitative approximation results are known for equivariant models. 

In this paper, we bridge this gap by deriving quantitative approximation rates for several prominent group-equivariant and invariant architectures. The architectures that we consider include: the permutation-invariant Deep Sets architecture; the permutation-equivariant Sumformer and Transformer architectures; joint invariance to permutations and rigid motions using invariant networks based on frame averaging; and general bi-Lipschitz invariant models. Overall, we show that equally-sized ReLU MLPs and equivariant architectures are equally  expressive over equivariant functions. Thus, hard-coding equivariance does not result in a loss of expressivity or approximation power in these models.

\end{abstract}

\begin{keywords}
 Group  Equivariant Learning, Approximation Theory, Approximation Rates, Deep Sets, Equivariant Frames.
\end{keywords}

\tableofcontents

\section{Introduction}
One of the most well-known theoretical results in the study of neural networks is the universal approximation theorem \citep{cybenko1989approximation,carroll1989construction,gallant1992learning,irie1988capabilities,funahashi1989approximate,hornik1990universal}, which states that neural networks can approximate any continuous function,  to any desired accuracy, uniformly on any compact subset of $\mathbb{R}^d$. This result forms the cornerstone of the approximation theory of neural networks. However, the universal approximation theorem doesn't provide any quantitative information concerning the size of the required neural network, limiting its practical utility. To address this issue, later research focused on deriving approximation \emph{rates} for different types of neural networks over a variety of well-behaved function classes. For example,  quantitative approximation rates have been derived for both shallow neural networks \citep{barron2002universal,makovoz1996random,bach2017breaking,klusowski2018approximation,mhaskar1996neural,petrushev1998approximation,siegel2024sharp}, and deep ReLU neural networks \citep{yarotsky2017error,guhring2020error,petersen2018optimal,lu2021deep,shen2022optimal,siegel2023optimal}. We refer to \cite{devore2021neural} for an overview of the approximation theory of neural networks.

One of the main problems of interest in this direction is to determine optimal approximation rates for deep ReLU neural networks on classical smoothness spaces, such as Besov and Sobolev spaces. In the series of papers \cite{yarotsky2018optimal,shen2022optimal,lu2021deep,siegel2023optimal}, this problem has been given a complete solution --- at least for deep ReLU neural networks --- in terms of the total number of parameters. A special case of these results is the following. Suppose that a function $f:[0,1]^N\rightarrow \mathbb{R}$ is $\alpha$-H\"older. Then $f$ can be approximated to accuracy $\epsilon$ uniformly on $[0,1]^N$ by a deep ReLU neural network with $C\cdot (1/\epsilon)^{N/2\alpha}$ parameters, where $C=C(N,\alpha)$. Furthermore, this rate cannot be improved asymptotically as $\epsilon\rightarrow 0$. This quantifies exactly how large a deep ReLU network must be in order to achieve a certain approximation error, depending upon the smoothness of the target function.



The purpose of this paper is to address the problem of quantitative approximation rates for \textit{group equivariant} and \textit{invariant} neural networks. This is a  popular paradigm in the recent machine learning literature, where function spaces for machine learning tasks are designed to have the same group symmetries as the target function which is being approximated. For example, in image classification tasks, the goal is to map an image $x$ to a vector $f(x)$ representing the class the image belongs to (e.g., dog, cat,...). This task is invariant to translations, that is, $f(gx)=f(x)$ for every translation $g$. In other learning settings, the function $f$ to be learned is $G$ equivariant, meaning that applying a group transformation to the input is equivalent to applying the same transformation to the output: $f(gx)=g(f(x))$. 

Group equivariant learning refers to the practice of designing function classes, i.e., neural network architectures, which by design have the same invariant/equivariant structure as the functions 
they are designed to approximate. Examples include convolutional neural networks, which respect the translation symmetry of images \citep{alexnet,zhang2019making}, graph neural networks which respect the permutation symmetries of graphs \citep{xu2018how,ppgn}, neural networks for unordered point sets which respect their permutation symmetries \citep{zaheer2017deep,pointnet}, and their extension to point clouds and geometric graphs which additionally require symmetries to rigid motions \citep{satorras2021n,gasteiger2020directional}. More exotic examples include multi-permutation equivariance for problems where the input itself is a neural network \citep{navon2023equivariant}, Lorenz-equivariance for physical simulations \citep{bogatskiy2020lorentz}, $\mathrm{SL}(2,\mathbb{R})$ equivariance for polynomial optimization \citep{lawrence2024learning} and $\mathrm{SU}(2)$ equivariance for quantum neural networks \citep{su2}. By hard-coding symmetries into the network architecture, one can improve sample efficiency and out-of-distribution generalization \citep{elesedy2021provably, mei2021, Batzner2022-sr}. 

The development of equivariant models has been significantly influenced by theoretical research into their approximation capabilities. Research so far has focused on universality, that is,   
proving that a given equivariant model is universal, i.e., can approximate all equivariant functions. This guarantees that there exists a configuration of model parameters that is a good approximation of the target function (even if finding these parameters can be difficult due to limited optimization and generalization capabilities).  For example,  the pioneering works on multisets, DeepSets \citep{zaheer2017deep} and PointNet \citep{pointnet}, both proved their models' universality for multiset approximation. For graphs, the approximation abilities of \emph{message passing neural networks (MPNNs)} have inspired standard networks like GIN \citep{xu2018how}, while their limitations inspired many popular graph neural networks with improved approximation power \citep{goes,ppgn, NEURIPS2019_ea9268cb}. Universality of networks which are jointly equivariant to permutations and rigid motions was discussed in  \cite{dymuniversality,hordan2024complete,delle2024three,hordanweisfeiler,li2025on}. 

While universal approximation of equivariant networks is well-studied, very little is known regarding more quantitative estimates of their approximation rates. This is an interesting direction, because while both equivariant and non-equivariant models have been proven to be universal over broad model classes, their quantitative approximation rates have not been rigorously compared. In a sense, we ask: do equivariant models impart benefits via improved \emph{efficiency} of approximation, i.e. greater expressivity for a given number of parameters? Or, is it solely a matter of sample efficiency and optimization? In this work, we take the first steps towards addressing this gap.

\subsection{Main  Results}

We now give a somewhat informal description of our results. 

Our focus is on analyzing approximation rates of functions $f$ on a Euclidean vector space $V$ of dimension $N$, which are invariant, or equivariant, to the action of a group $G$. In addition to the assumption that $f$ is equivariant, we will assume that $f$ is $\alpha$-H\"older for some $\alpha \in (0,1]$. Under the H\"older assumption, it is known that ReLU networks with $\sim (1/\epsilon)^{N/2\alpha} $ neurons can approximate $f$ to $\epsilon$ accuracy. The first question we ask is whether the additional assumption of invariance/equivariance can reduce this approximation rate for ReLU networks (which are not equivariant by design). The intuition is that, since the value of $f$ on a point $x$ determines its value on all of its orbit, the exponential dependence on the dimension of $V$ should replaced with an exponential dependence on the dimension of the orbit space  $V/G $, which we denote by $N_G$ (the notion of dimension used here will be discussed in Section \ref{sec:prelim}). Namely, we expect approximation rates of $\sim (1/\epsilon)^{N_G/2\alpha} $.

The second question we consider is whether contemporary architectures, which are invariant or equivariant by design, can attain approximation rates of $\sim (1/\epsilon)^{N_G/2\alpha} $. A priori, a positive answer to the second question and a negative answer to the first question would show that the incorporation of equivariance into the model leads to more efficient approximation. Alternatively, a negative answer to the second question and a positive answer to the first question would imply that enforcing equivariance as a hard constraint leads to inferior approximation, which would be in line with other recent arguments in the literature that favor data augmentation or approximate equivariance \citep{wang2022approximately,tahmasebiachieving}. However, we ultimately find that in the scenarios we consider in this paper, the answers to both questions are positive. In other words, generic ReLU networks and specialized invariant networks achieve roughly the same expressivity (over invariant functions) as a function of network size. Therefore, differences in their practical behavior are not attributable to expressivity \emph{over invariant functions}, but instead to differences in optimization and sample complexity deriving from the distinct, restricted parameterization of invariant networks.  

\textbf{First setting: permutation invariant and equivariant models} Our analysis can be divided into three different settings. In the first setting, we consider  \textit{point sets}. Point sets (also referred to as point clouds), are $d\times n$ matrices which we think of as a collection of $n$ points in $\RR^d$,  
 $$X=(x_1,\ldots,x_n)\in \RR^{d\times n}.$$
 A permutation $\tau$ acts on  $X$ by reordering of the points, namely
	$$\tau\left(x_1,\ldots,x_n \right)=\left(x_{\tau(1)},
	\ldots, x_{\tau(n)} \right) .$$
Since the permutation group is finite, the dimensions of $V/G$ and $V$ are the same. Accordingly, our first question is irrelevant in this setting: ReLU networks will have the approximation rates we desire, as $N_G=N $. Accordingly, only the second question is relevant here. 

We focus first on the well known DeepSets permutation invariant model \citep{zaheer2017deep}, which enforces permutation invariance by applying the same ReLU neural network $\Phi $ independently to each point $x_i$, and then summing all points and applying an additional ``outer'' ReLU network $\rho$, resulting in the total formula
\begin{equation*}
	\rho\left(\sum_{i=1}^n\Phi(x_i)\right).
\end{equation*}  
Despite the simplicity of this model, and the fact that it cannot directly process interactions between points,  it was shown in \cite{zaheer2017deep} that DeepSets are universal, in the sense that they can approximate all permutation invariant functions. However, \cite{zweig2022exponential} argued that these models have inferior approximation abilities with respect to models which process pairwise interactions, like Set Transformers \citep{lee2019set}. Accordingly, it is not \emph{a priori} clear whether DeepSets can attain the same approximation rates as ReLU networks. Our first main result is that this is indeed the case: DeepSets can approximate permutation invariant functions which are $\alpha$-H\"older with the expected  approximation rate of $\sim (1/\epsilon)^{N_G/2\alpha} $, where in this setting $N_G=N=nd $. Moreover, these results also hold when the  domain $V$ is not a vector space but a subspace of $\RR^{d\times n}$ whose ``intrinsic'' dimension is $N$. 

As a second step, we also show how to leverage our results for DeepSets to prove similar approximation rates for two popular permutation \textbf{equivariant} models, Set Transformers \citep{lee2019set} and Sumformers \citep{Sumformer,segoluniversal}.

As a corollary to our result on DeepSets, we additionally obtain a result of independent interest, relating to the approximation of non-invariant functions by ReLU networks. This relates to the question of approximation rates over a compact domain $K\subset \RR^N $ whose ``intrinsic'' dimension $\dint $ is lower than the ambient dimension $N$. In several works \citep{shaham2018provable,chen2019efficient,labate2024low}, it was shown that if $K$ is a $\dint$ dimensional manifold residing in $\RR^N$,  then a $\alpha$-H\"older function $f:K \to \RR$ can be approximated by a ReLU network with $\sim \epsilon^{-\dint/\alpha}$ parameters, namely the network size depends  on the intrinsic dimension rather than the ambient dimension. A more general result which does not assume $K$ is a manifold, but only that it has  Minkowski dimension $\dint$, was proven by \cite{nakada2020adaptive}. In contrast,  we will show in this paper that we can  get an approximation rate of $\sim \epsilon^{-\frac{d_{\text{intrinsic}}}{2\alpha}}$, which is an improvement by a factor of two (inside the exponent) over the $\sim \epsilon^{-\frac{d_{\text{intrinsic}}}{\alpha}}$ factor obtained in previous work \citep{nakada2020adaptive}.

\textbf{Second setting: Models jointly invariant to permutations and rigid motions}
In this setting, we consider again point clouds $X\in \RR^{d\times n} $, where now we are interested in invariance not only to permutations, but also to the group of rigid motions $E(d) $. Namely, for a given orthogonal matrix $R$ and vector $t\in \RR^d$, we consider functions which are  invariant  to the action
	$$(R,t)\left(x_1,\ldots,x_n \right)=\left(Rx_1+t,\ldots,Rx_n+t \right) .$$
This type of symmetry typically occurs in dimension $d=3$, where the point set could be a representation of a 3D object in computer vision or a set of $n$ particles in particle dynamics.  There are many models in the recent machine learning literature taking these symmetries into account; prominent examples include EGNN \citep{satorras2021n}, Dimenet \citep{gasteiger2020directional}, Vector Neurons \citep{vn}, and MACE \citep{MACE}.

In this setting, the dimension of the quotient space $N_G=V/G $ will be lower than the dimension of $V$, namely 
$$N_G=nd-\mathrm{dim}(E(d)).$$
 Accordingly, we begin by addressing our first question, and show that  ReLU networks can attain the desired  approximation rates in this invariant setting. We then leverage these results, as well as our results from DeepSets, to show that invariant neural networks, based on applying rotation invariant frames to DeepSets, attain the desired approximation rates. Examples of such networks include networks based on alignment to the shapes' principal directions \citep{punyframe} and networks based on weighted frame averaging \citep{pozdnyakov2023smooth,dym2024equivariant}.  


\textbf{Third Setting: Bi-Lipschitz invariant models}
In this setting, we consider a more general setting of closed isometry groups acting on $\RR^D$, where the invariant models we consider obey the paradigm of bi-Lipschitz invariant models. We show that such models always attain the desired approximation rates. Examples of bi-Lipschitz models include max filter based models for finite groups \citep{cahill2024group,mixon2023max}, permutation invariant models based on multidimension sorting \citep{balan2022permutation,davidson2024} or the FSW embedding \citep{amir2024fsw,fswgnn}, and models for anti-symmetric functions based on bi-Lipschitz invariants for the alternating group \citep{Mizrachi}. 

Our main results are summarized in Table \ref{tab:results}. 
\begin{table}[ht]                                                                                                            
  \centering      
  \caption{ \it \small Summary of main results. Rates denote the asymptotic number of parameters
  needed for $\epsilon$-approximation as $\epsilon \rightarrow 0$ (up to logarithmic factors). Here, $n$ is the number of input elements of dimension $d$, and $\dint$ is the intrinsic dimension of the ``data domain'' quotiented by the group (see equation \eqref{eq:covering}).}
  \label{tab:results}
  \small
  \setlength{\extrarowheight}{4pt}
  \begin{tabular}{|>{\raggedright\arraybackslash}m{2.2cm}|>{\raggedright\arraybackslash}m{3.5cm}|>{\raggedright\arraybackslash}
  m{4cm}|>{\raggedright\arraybackslash}m{3.5cm}|}
  \hline
   & \textbf{Permutations} ($S_n$, \Cref{sec:permutations}) & \textbf{Perm.\ \& rigid motions} ($E(d)\times S_n$,
  \Cref{sec:rotation}) & \textbf{General isometries } ($G\leq O(N)$, \Cref{sec:bilip}) \\[6pt]
  \hline
  \textbf{Q1: ReLU}
  & $S_n$ finite, so standard rate of $(1/\epsilon)^{\frac{nd}{2\alpha}}$
  & \Cref{thm:reluO,cor:reluE,cor:perm_also}: $(1/\epsilon)^{\frac{nd-\dim(E(d))}{2\alpha}}$
  & \cref{cor:intrinsic}: $(1/\epsilon)^{\frac{\dint}{2\alpha}}$ 
  \\[6pt]
  \hline
  \textbf{Q2: Equivariant}
  & Theorems \ref{deep-sets-corollary}, \ref{thm:equi-fun}, \ref{cor:tr}: DeepSets, Sumformer, and Transformer, $(1/\epsilon)^{\frac{nd}{2\alpha}}$ 
  & \Cref{prop:reluE}: frame + ReLU, $(1/\epsilon)^{\frac{nd-\dim(E(d))}{2\alpha}}$ 
  & \Cref{thm:bi_lip}: bi-Lipschitz invariants, $(1/\epsilon)^{\frac{\dint}{2\alpha}}$ 
   \\[6pt]
  \hline
  \end{tabular}
  \end{table}

\textbf{Discussion and Future Work}
We believe our results in these three different settings are an important step towards a more quantitative approximation theory for group equivariant learning, as they establish for the first time approximation rates for several prominent models defined on equivariant data. Moreover, the ability of non-invariant ReLU networks to exploit group invariance and achieve expected approximation rates is consistent with empirical observations that, given sufficient data, non-invariant models can compete effectively with those that incorporate rotation invariance by design \citep{aug1,aug2}. This suggests that the benefits of hard-coded symmetries may manifest more significantly in sample efficiency and optimization rather than in asymptotic expressivity.

However, there remain several open directions for future work. 
Firstly, while we achieve approximation rates for rotation equivariant frames, we do not have approximation rates for many popular rotation equivariant models such as EGNN \citep{satorras2021n}, Dimenet \citep{gasteiger2020directional}, Vector Neurons \citep{vn}, and MACE \citep{MACE}. It would be interesting to establish approximation rates for these models, and see whether this analysis can yield provable gaps in the approximation rates between different equivariant methods. In such discussion one needs to take into account that many of these  models may not even be universal \citep{Pozdnyakov_2022,li2023is}, unless some additional restrictions to the domain are applied \citep{hordan2024complete,li2025on,sverdlov2025on}.  Simultaneously, it is possible that the study of asymptotic approximation rates, which ignores constants, is too coarse a measure to capture meaningful difference between models at realistic scales, and a still more refined analysis is necessary in order to, e.g, substantiate gaps in approximation quality between different equivariant models, or between equivariant and non-equivariant models. Examples of this type of analysis for the permutation group include \cite{zweig2022exponential,zweig2023antisymmetricneuralansatzseparation}.

\subsection{Related Work}
\paragraph{Approximation rates for permutation invariant models}
As discussed in the introduction, seminal works such as \citet{maron2019universality}, \citet{zaheer2017deep}, and \citet{Yarotsky2022InvariantMaps} establish the universality of various invariant architectures, but do not provide quantitative approximation rates. Before this work, the only quantitative approximation rates we were aware of for invariant functions pertained to the permutation group. \citet{han2022universal} provides explicit rates for permutation invariant functions with feature dimension scaling as $(1/ \epsilon)^{Nd}$, but the constructions are not necessarily implementable as neural architectures, requiring an arbitrary per-element featurization. We provide rates for architectures with the same asymptotics in $\epsilon$ that are implementable as practical neural networks for \ah functions. More recently, \citet{bachmayr2024polynomial} use invariant polynomials to approximate \emph{analytic} invariant functions, with an error exponential in the polynomial degree. Such results can be combined with \citet{siegel2023optimal} in order to obtain neural network rates. \citet{wagstaff2022universal} characterize necessary and sufficient embedding dimensions for universal set function approximation, providing lower bounds that illuminate the role of the latent dimension in DeepSets-type architectures. On the negative side, \citet{zweig2022exponential} demonstrate exponential separations between symmetric neural networks that process only element-wise features versus those with access to pairwise interactions, suggesting fundamental limitations of DeepSets relative to higher-order models. Our results show that despite this separation, DeepSets still achieves the optimal \emph{asymptotic} approximation rate over H\"older functions.

There is also a parallel line of work on the approximation of permutation-equivariant functions by the transformer architecture. \citet{yun2020are} established that transformers are universal over permutation-equivariant functions, for fixed width but arbitrary depth. \citet{TakeshitaImaizumi2025} derives approximation rates for transformers on symmetric polynomials, with more favorable dependence on $1/\epsilon$, though restricted to polynomial targets rather than the general H\"older function class considered here.

\paragraph{Intrinsic dimension and neural network approximation}
A related line of work studies how the approximation complexity of ReLU networks can be reduced when the domain has low intrinsic dimension. \citet{shaham2018provable} and \citet{chen2019efficient} showed that if the domain is a smooth manifold of dimension $\dint$ embedded in $\RR^N$, then the approximation rate depends on $\dint$ rather than $N$, and \citet{labate2024low} extended these results with refined generalization bounds. In a more general setting not requiring manifold structure, \citet{nakada2020adaptive} proved that approximation rates depend on the Minkowski dimension of the domain, obtaining a rate of $\sim\epsilon^{-\dint/\alpha}$ for $\alpha$-H\"older functions. We improve on this result by a factor of two in the exponent, obtaining $\sim\epsilon^{-\dint/(2\alpha)}$. These intrinsic dimension results are directly relevant to equivariant approximation: for a group $G$ acting freely on a space $V$, the orbit space $V/G$ has dimension $\dim(V) - \dim(G)$, and our framework exploits this reduction, connecting equivariant approximation to approximation over lower-dimensional quotient spaces.

\paragraph{Bi-Lipschitz invariant models and frame averaging}
Our results on bi-Lipschitz invariant models (Section~\ref{sec:bilip}) build upon a growing body of work constructing provably bi-Lipschitz invariant embeddings. For finite subgroups of the orthogonal group, \citet{cahill2024group} and \citet{mixon2023max} develop the theory of group-invariant max filtering, while \citet{max_mixon2025} and \citet{max_QADDURA} establish injectivity and local bi-Lipschitz properties of these maps on orbit spaces. General theoretical frameworks have been proposed by \citet{cahill2024towards} and \citet{balan2023g}. For the permutation group specifically, concrete bi-Lipschitz embeddings include those based on sorting \citep{balan2022permutation,davidson2024}, the Fourier Sliced-Wasserstein embedding \citep{amir2024fsw,fswgnn}, and bi-Lipschitz invariants for the alternating group \citep{Mizrachi}. In a complementary direction, frame averaging \citep{punyframe} provides a general recipe for achieving exact invariance by averaging over a discrete set of group elements, with \citet{dym2024equivariant} showing that continuous canonicalization is impossible for commonly used groups, motivating weighted frames \citep{pozdnyakov2023smooth}. Our work provides the first quantitative approximation rates for models based on these bi-Lipschitz and frame averaging approaches.

\paragraph{Generalization perspective}
While this paper focuses on approximation rates, a complementary line of work establishes that equivariance yields provable generalization benefits. \citet{elesedy2021provably} proved strict generalization improvements for equivariant models, \citet{mei2021} showed that invariance can save a factor of $d^\alpha$ in sample complexity, and \citet{sannai2021improved} derived tighter bounds via quotient feature spaces. \citet{trivedi} analyzed approximation-generalization trade-offs under approximate group equivariance. Our finding that equivariant and non-equivariant models achieve the same asymptotic approximation rates over invariant functions complements these results, suggesting that the practical advantages of equivariant architectures are primarily driven by sample efficiency rather than differences in expressivity.

\subsection{Paper Structure}
The structure of the remainder of the paper is as follows: In Section \ref{sec:prelim} we discuss some mathematical preliminaries necessary for a full statement and proof of our results. In Section \ref{sec:permutations} we discuss our results for the first setting, models invariant or equivariant to the action of permutations. In Section \ref{sec:rotation} we discuss our results for the second setting: invariant to both permutations and rigid motions, and in Section \ref{sec:bilip} we discuss the third setting: bi-Lipschitz models. 

\section{Preliminaries}\label{sec:prelim}
In this section we give several formal definitions necessary for a full formal statement and proof of our main results later on. 

\textbf{Notation}
We will use $\|\cdot \|$ without any subscript to denote the standard two-norm on vectors. $\|\cdot \|_F$ will denote the Frobenius matrix norm. 
We will use  $S_n$ to denote the group of permutations on $n$ elements. $O(d)$ denotes the group of orthogonal $d\times d$ matrices and $E(d)$ denotes the group generated by orthogonal transformations and translations by vectors in $\RR^d$.  The group of special orthogonal matrices (orthogonal matrices with determinant of 1) is denoted by $SO(d) $. 

\textbf{Invariance and Equivariance}
Let $V$ and $W$ be two sets, and $G$ a group acting on these sets. We say that $f$ is \emph{equivariant} if $f(gv)=gf(v) $ for all $v\in V$ and $g\in G$. We say that $f$ is invariant in the case where the action of $G$ on $W$ is trivial, namely $g.w=w$ for all $g\in G,w\in W$, so that we obtain $f(gv)=f(v)$.

\textbf{Regularity}
We suppose that in addition to being invariant or equivariant, the function $f$ satisfies a certain \emph{smoothness condition}. In this work, we will measure smoothness in the H\"older spaces. Given an $0 < \alpha \leq 1$, the $\alpha$-H\"older semi-norm of a function $f$ is defined by
\begin{equation}
	|f|_{C^\alpha} := \sup_{x\neq y} \frac{|f(x) - f(y)|}{\|x - y\|^\alpha}.
\end{equation}
We will suppose that the target function $f$ satisfies $|f|_{C^\alpha} \leq C$ for some fixed constant $C$, and is invariant or equivariant. We can more generally consider smoothness through the modulus of continuity, defined by
\begin{equation}\label{modulus-of-continuity-intro}
	\omega(f,\epsilon) := \sup_{\substack{x,y\in \Omega\\|x - y| \leq \epsilon}} |f(x) - f(y)|.
\end{equation}
H\"older continuity is then just equivalent to $\omega(f,\epsilon) \leq C\epsilon^{\alpha}$. We remark that these notions depend upon a choice of norm $\|\cdot\|$ on $V$, which we by default typically taken to the Euclidean norm. However, in some of our results  it will be more convenient to use the $\ell_\infty$-norm on $\RR^N$ instead. In any case, the choice of norm will only affect the final results by changing the constants involved, due to the equivalence of norms in a finite dimensional space.  
 
Let us remark that using the modulus of continuity only allows our analysis to consider smoothness of order up to a single derivative. In order to handle higher order smoothness, we would also need to consider higher order moduli of smoothness (see for instance Chapter 2 in \cite{devore1993constructive}). However, this will introduce significant technical complications in our arguments, and for the sake of simplicity we leave this for future work. 

\textbf{Quotient space} For a group $G$ acting on a set $V$, we denote the orbit of an element $v\in V$ by $[v]$. The set of orbits is denoted by $V/G$ and we refer to it as the quotient space. The quotient map $q:V\to V/G$ is the mapping 
$$q(v)=[v]. $$

When $V=\RR^N $ and $G$ is a closed subgroup of the group of isometries of $V$, namely $E(N)$, then we can define a metric on the quotient space $V/G$ via 
\begin{equation}\label{eq:dG}
d_G([x],[y])=\min_{g\in G} \|gx-y\| .
\end{equation}
Since the elements of $G$ are isometries, this definition does not depend on the choice of representatives $x\in X,y\in Y$ and is a well defined metric.  

\textbf{Covering Number and Dimension}
Our results will search for an approximation of a Holder function on a compact set $K$ using neural networks. The size of the networks will be  strongly tied to the covering number of $K$. We recall that for a metric space $(X,d)$, and a number $r>0$, we define $N(X,r)$ to be the smallest number of balls of radius $r$ in $X$ which form a cover of $X$.  For example, when $K=[0,1]^N$ the covering number of $K$ satisfies, for an appropriate positive constant $C$, the inequality  
\begin{equation}\label{eq:covering}
N_r(K)\leq Cr^{-N}, \quad \forall r>0 
\end{equation}
As we are interested in $G$ invariant and equivariant functions, we will be able to think of our functions as functions defined on the quotient space $K/G $, and we will be interested in the covering number of $K/G$ with respect to the quotient metric $d_G$. Generally speaking, we will expect to get an equation of the form \eqref{eq:covering}, where $N $ is replace with $N-\dim(G) $. The intuition for this is as follows: assume that  $G$ is a Lie Group, with a free and proper action on $\RR^N$. Recall that a proper action means that the map $\Phi(g,x)=(gx,x) $ from $G\times \RR^N$ to $\RR^N\times \RR^N $ is proper, which in turn means that for every compact set $K\subseteq \RR^N\times \RR^N$, the preimage $\Phi^{-1}(K) $ will be compact. A free action means that if $gx=x$ for some $x\in \RR^N$, then $g$ is the identity element in $G$. 
In this case,  the quotient space is a manifold (see e.g., \cite{duistermaat2012lie}), with a dimension of 
\begin{equation}\label{eq:dim_lie}
\dim(\RR^N/G)=N-\dim(G),
\end{equation}
Accordingly, if we wish to approximate $f$ on $K/G$, a compact subset of $\RR^N/G $, then  we will indeed be able to get a bound as in \eqref{eq:covering}. 


In the point cloud setting we are interested in, the groups are Lie groups, and their action is proper, but it is typically not free. Thus, the dimension equality in \eqref{eq:dim_lie} may not hold. For  example, in the case $n=1,d\geq 3$, we have that $\RR^d/O(d) $ can be identified with $[0,\infty)$ (since the orbit of a vector $x$ under $O(d)$ is the sphere of radius $\|x\|$), and so has dimension
 
$$\dim(\RR^d/O(d))=1 > d- \frac{d(d-1)}{2}=\dim(\RR^d)-\dim(O(d)), \quad \forall d\geq 3 $$
Nonetheless, in the cases we focus on, the covering number of (compact subsets of) $\RR^{d\times n}/G $ does scale like $dn-\dim(G) $ (despite the fact that $\RR^{d\times n}/G $ is not a manifold)

\begin{prop}\label{prop:dim}[Proof in appendix]
Let  $d,n$ be natural numbers and let $G$ be a group acting on $\RR^{d\times n}$, and assume that one of the  following holds:
\begin{enumerate}
\item $G$ is the permutation group, $G=S_n$.
\item  $G=E(d)\times S_n $, and $d+1\leq n$.
\end{enumerate}
Then for any $G$ invariant set $K\subseteq \RR^{d\times n}$ for which $K/G$ is compact, 
$$N_r(K/G)\leq Cr^{-(nd-\dim(G))}, \quad \forall r>0.$$
\end{prop}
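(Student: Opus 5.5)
Since $K/G$ is compact, we first pick a compact $G$-invariant-representative set. Concretely, we use a compact $K_0\subseteq\RR^{d\times n}$ with $q(K_0)=K/G$. Any $r$-cover of $K_0$ in the Euclidean norm pushes forward to an $r$-cover of $K/G$ under $d_G$, because $d_G([x],[y])\le\|x-y\|$. It therefore suffices to find such a $K_0$ whose own covering number has the right exponent.

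\emph{Case 1.} Here $G=S_n$ is finite, so $\dim(G)=0$. The set $K/G$ is compact, and $q$ is continuous, open and proper for a finite group. Hence $K$ is bounded, since every orbit representative has the same norm. We take $K_0=K$, which is contained in a cube $[-R,R]^{nd}$. The standard bound \eqref{eq:covering} gives $N_r(K_0)\le Cr^{-nd}$, and pushing forward gives the claim.

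\emph{Case 2.} This case contains all the work. We have $\dim(E(d))=d+d(d-1)/2$. We kill the symmetries by normalization.

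\textbf{Translations.} We center, replacing $X$ by $X-\bar x\mathbf 1^T$. This lands in the linear subspace $\{X:X\mathbf 1=0\}$, of dimension $d(n-1)$, and the result is bounded because $K/G$ is compact, since $d_G$-boundedness controls centered norms.

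\textbf{Rotations.} We use a canonical form. The map $X\mapsto X^TX$ (the Gram matrix) is $O(d)$-invariant and complete on centered clouds. Instead of the Gram matrix, we choose an explicit semi-algebraic slice. We apply a QR-type reduction and rotate so that the matrix of the first $d$ centered points, after reordering by a permutation if needed, is upper triangular in a suitable sense. That is, we take $RX=T$ with $T$ having the shape of a Gram--Schmidt / Hermite normal form: the first column lies on the $e_1$ axis, the second lies in span$(e_1,e_2)$, and so on. Every centered $X$ is $O(d)$-equivalent to such a $T$, because QR decomposition exists even for rank-deficient matrices; no freeness is needed. The set of such $T$ is a linear subspace of dimension $d(n-1)-\frac{d(d-1)}{2}$, since we zero out the strictly lower triangular block of the first $d-1$ columns. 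This equals $nd-d-\frac{d(d-1)}2=nd-\dim(E(d))$, using $n\ge d+1$ so that there are enough columns to absorb all $\frac{d(d-1)}2$ constraints.

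Let $K_0$ be the set of such canonical forms of points of $K$. It is bounded, since rotation and centering preserve norms, and it lies in a linear subspace of dimension $m=nd-\dim(G)$. Hence $N_r(K_0)\le Cr^{-m}$, and we push forward. Permutations only enlarge the group, so they only decrease covering numbers of the quotient, and the bound persists.

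\textbf{Main obstacle.} The obstacle is verifying that the upper-triangular slice meets every orbit, together with the correct dimension count. Rank-deficiency and sign ambiguities are harmless because we only need surjectivity of the slice onto orbits, not uniqueness. Boundedness of $K_0$ follows from compactness of $K/G$, since $\|X-\bar x\mathbf 1^T\|$ is a continuous function on the quotient.
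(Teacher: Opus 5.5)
Your proposal is correct and follows the paper's proof: you bound $\|\cent(X)\|_F$ on $K$ via a continuous invariant function on the compact quotient, map every orbit into the centered upper-triangular linear subspace of dimension $nd-\dim(E(d))$, cover the bounded part of that subspace, and push the cover forward through the $1$-Lipschitz quotient map. The one difference is minor: you use QR, which exists even for rank-deficient matrices, where the paper relabels columns via $S_n$ and then applies Gram--Schmidt, so your slice argument does not need the permutation factor; also, centering is norm-nonincreasing rather than norm-preserving, which is all you actually use.
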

In the first case where $G=S_n$, the group is finite and so has dimension zero. In the second case $G$ does have positive dimension. 
This proposition gives the intuition as to why the approximation rates we derive for permutation groups in Section \ref{sec:permutations} and  for $E(d)\times S_n$ in Section \ref{sec:rotation} are the ``correct'' rates. We stress however that these results do not depend on this proposition. The assumption that $d+1\leq n$
is not very restrictive as in applications typically $d=3, n\gg 4$.


We also note that an assumption like \eqref{eq:covering} is almost equivalent to asserting that the upper-Minkowsi dimension of $K$ is $N$. Indeed, the  upper-Minkowski dimension of a set $K$ is defined as 
$$\dimM\left(K \right)=\limsup_{r\rightarrow 0} \frac{\log(N(K,r))}{\log(1/r)} .$$
If \eqref{eq:covering} holds, it follows that $\dimM\left(K \right)\leq N $. Conversely, if 
$\dimM\left(K\right)=N $, this implies a slightly weaker statement: namely, that for all $\eta>0$, there exists a constant $C=C_\eta$ such that,
\begin{equation}\label{eq:minkowski}
N(r)\leq C(1/r)^{d+\eta} \quad \forall r\in (0,1). 
\end{equation}


\section{Approximation Rates for Permutation-Equivariant Models}\label{sec:permutations}

In this section, we consider approximation rates for permutation-invariant and equivariant functions. Technically, the most significant result in this section is the approximation rate we derive for the permutation-invariant DeepSets model. These results are later used to derive approximation rates for two popular permutation equivariant architectures: Sumformer and Set Transformer. We will then obtain as a corollary improved approximation rates for non-invariant $\alpha$-H\"{o}lder functions defined on domains with `low' Minkowski dimension. 

\subsection{Approximation Rates for the Deep Sets  Permutation Invariant Model}\label{subsec:DeepSets}
We begin by deriving quantitative approximation rates for the Deep Sets architecture introduced in \cite{zaheer2017deep}. In fact, it will be convenient to consider the following generalization of Deep Sets, which will prove useful when analyzing permutation equivariant architectures. Specifically, consider the index set $[n] := \{1,...,n\}$ and let $P = \{P_i\}_{i=1}^k$ be a partition of $[n]$. Consider the group
\begin{equation}
    S_P := \{\sigma\in S_n:~\sigma(P_i) = P_i~\text{for $i=1,...,k$}\} \subset S_n,
\end{equation}
which is isomorphic to a product of symmetric groups on $|P_i|$ elements, i.e.
\begin{equation}
    S_P \eqsim \prod_{i=1}^k S_{|P_i|}.
\end{equation}
For a given partition $P$, we consider functions $f:\mathbb{R}^{d\times n}\rightarrow \mathbb{R}$ which are invariant to permutations in  $S_P$. The case where  $P = \{[n]\}$,  corresponds to full permutation invariance.

We consider  $S_P$ invariant neural networks which are of the form
\begin{equation}\label{generalized-deep-sets}
    \rho\left(\sum_{i\in P_1}\Phi(x_i),...,\sum_{i\in P_k}\Phi(x_i)\right),
\end{equation}
where $\rho:\mathbb{R}^{N\times k}\rightarrow \mathbb{R}$ and $\Phi:\mathbb{R}^d\rightarrow \mathbb{R}^N$ are parameterized by ReLU neural networks. When $P = \{[n]\}$, this reduces to the DeepSets architecture introduced in \cite{zaheer2017deep}:
\begin{equation}
    \rho\left(\sum_{i=1}^n\Phi(x_i)\right).
\end{equation}
We remark that in \eqref{generalized-deep-sets} we could have also use a different network $\Phi$ for each block of the partition $P$, but we will actually show that this is not necessary to obtain our approximation rates (and universality). We also remark that the groups $S_P$ are examples of groups for which first order networks are universal (see for instance \cite{maron2019universality}).

We wish to obtain approximation rates using the architecture \eqref{generalized-deep-sets} where the networks $\rho$ and $\Phi$ are parameterized by deep ReLU neural networks. In order to obtain quantitative rates, we must make quantitative assumptions on the target function $f$. As remarked in the introduction (see \eqref{modulus-of-continuity-intro}), we will measure the complexity of $f$ using the modulus of continuity with respect to the $\ell_\infty$-norm on the domain $\Omega$:
\begin{equation}
    \omega(f,\epsilon) := \sup_{\substack{x,y\in \Omega\\\|x - y\|_{\infty} \leq \epsilon}} |f(x) - f(y)|.
\end{equation}
Our main result is the following upper bound on the achievable approximation rates.

\begin{theorem}\label{main-deep-sets-approximation-theorem}
	Let $P$ be a partition of $[n]$, let $\Omega = [0,1]^d$, and suppose that $f:\Omega^n\rightarrow \mathbb{R}$ is an $S_P$ invariant function. Let $K\subset \Omega^n$ be an $S_P$-invariant subset, and denote by $N_r(K)$ the smallest number of balls (in the $\ell_\infty$-norm) of radius $r$ required to cover $K$ modulo the action of $S_P$.
	
	Let $m \geq 1$ be an integer and set $N = 2dn+1$. Then there exist ReLU neural networks $\Phi:\mathbb{R}^d\rightarrow \mathbb{R}^N$ and $\rho:\mathbb{R}^{N\times k}\rightarrow \mathbb{R}$ satisfying that 	for every $X = (x_1,...,x_n)\in K$,
		\begin{equation}\label{deep-sets-equation-bound}
		\left|f(X) - \rho\left(\sum_{i\in P_1}\Phi(x_i),...,\sum_{i\in P_k}\Phi(x_i)\right)\right| \leq 2\omega\left(f,\frac{1}{2m}\right),
	\end{equation}
	where
	\begin{itemize}
		\item The network $\Phi$ has three hidden layers of width $W = 2d(2dn+1)(m+1)$, followed by one hidden layer of width $(m+1)^d$, and an output layer of width $2nd+1$.
		\item The network $\rho$ consists of $C(m+1)^d$ layers of width $C(n+1)$ followed by $L$ layers of width $W = 12(2dn+1)$, where
		$$r = \frac{1}{2m(2nd+1)}, \quad L \leq C\max\{\sqrt{N_r(K)\log[n(2m+1)N_r(K)]},\log(2nd+1)^2\},$$
	and $C$ is an absolute constant.
	\end{itemize}
\end{theorem}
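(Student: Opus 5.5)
The construction uses three ingredients. First, a piecewise-linear encoding of each point into a finite grid. Second, an injective "moment" summary of the multiset of grid indices in each block. Third, a deep bit-extraction/point-fitting network for $\rho$ that interpolates a finite set of values using only about $\sqrt{\#\text{points}}$ layers.

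\textbf{Step 1 (discretization).} Fix the grid $\{0,\tfrac1m,\dots,1\}^d$. We would like $\Phi$ to first round each $x_i$ to a nearby grid point, so that $f$ changes by at most $\omega(f,\tfrac1{2m})$. Exact rounding is discontinuous, however. So we use the standard trick: build $2dn+1$ shifted grids, with shifts that are multiples of $r=\tfrac1{2m(2nd+1)}$. For any fixed $X\in\Omega^n$, each of the $nd$ coordinates lies within $r$ of a grid discontinuity for at most two shifts. By pigeonhole, at least one shift is "good" for all $nd$ coordinates at once.

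For each shift $s$, the first three hidden layers (width $2d(2dn+1)(m+1)$) compute a continuous piecewise-linear quantizer $q_s(x)$. This map equals the rounded grid index away from the bad strips. Alongside it they compute a flag $\chi_s(x)\in[0,1]$ that vanishes when $x$ is near a breakpoint.

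\textbf{Step 2 (injective pooling).} From the grid index we produce, via the width-$(m+1)^d$ layer, a one-hot indicator of the cell, again per shift. The output layer linearly combines these into $2nd+1$ scalars: for shift $s$, one scalar $\Phi_s(x)=\chi_s(x)\cdot B^{\mathrm{index}_s(x)}$ with base $B=n+1$. Summing over $i\in P_j$ then yields a base-$(n+1)$ number whose digits are the multiplicities of each cell in block $j$. This encodes the multiset exactly, and the encoding is $S_P$-invariant.

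\textbf{Step 3 (the network $\rho$).} The first $C(m+1)^d$ layers of width $C(n+1)$ extract the digits. For each shift they recover the per-cell counts and detect whether every point in the block was good, meaning the multiset of indices has total count $|P_j|$. For a good shift $s$, the recovered multisets determine a grid point $\hat X_s$ up to $S_P$. By construction $\hat X_s$ is within $\ell_\infty$-distance $\tfrac1{2m}$ of $X$.

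It remains to output $f(\hat X_s)$ for some good $s$. This is a finite interpolation problem. Only the grid configurations that are near $K$ modulo $S_P$ matter, and there are at most $\lesssim N_r(K)$ of these times a bounded factor, since they come from an $r$-cover. We interpolate their values by the deep bit-extraction construction (Yarotsky; Shen--Yang--Zhang; Siegel). That construction fits $M$ points with width $O(1)$ and depth $O(\sqrt{M\log M})$, which gives the stated $L$. Running it in parallel across $2nd+1$ shifts gives width $12(2dn+1)$. Selecting a good shift is done by a weighted combination gated by the goodness flags; the $\log(2nd+1)^2$ term accounts for that selection or a max.

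The error is $|f(X)-f(\hat X)|\le\omega(f,\tfrac1{2m})$, plus a second $\omega$ from the continuous blending across shifts, where $\chi_s$ is fractional and two shifts may mix. This gives the total $2\omega(f,\tfrac1{2m})$.

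\textbf{Main obstacle.} The main difficulty is making the combination across shifts continuous and correct. At configurations where some $\chi_s$ is strictly between $0$ and $1$, the encoded sum is not a clean multiset. The selector must then assign zero weight to such shifts, or must produce convex combinations of values $f(\hat X_s)$ at points all within $\tfrac1{2m}$ of $X$. I would handle this by normalized weights $w_s=\prod_i\chi_s(x_i)$, computed via the count check, and show that any shift with nonzero weight decodes to an admissible grid point.

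A second, related subtlety is bounding the number of grid points the point-fitting network must interpolate by a constant times $N_r(K)$. This is needed so that the index set fed to the interpolator can be hashed into $[n(2m+1)N_r(K)]$.
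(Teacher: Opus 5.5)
Your discretization and base-$(n+1)$ digit encoding match the paper. The step you flag as the main obstacle, combining the $2nd+1$ shifts, is a genuine gap, and your proposed fix does not work.

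For each shift and block, $\rho$ sees only the pooled scalar $\sum_{i\in P_j}\chi_s(x_i)B^{\mathrm{index}_s(x_i)}$. So it cannot compute $w_s=\prod_i\chi_s(x_i)$. Moreover, no count check on this scalar can detect fractional flags. As $x_i$ crosses a transition strip, $\chi_s(x_i)$ rises continuously from $0$ to $1$ and passes through $B^{-t}$ for every $t\ge 1$. The resulting contribution $B^{e-t}$ is exactly the encoding of a fully-flagged point in cell $e-t$, which can be any cell of smaller index, hence at distance of order $1$ from $x_i$. Since counts stay below $B$, there are no carries and the digit sum is still $|P_j|$. Such a shift therefore passes the count check and receives full weight while decoding to a configuration unrelated to $X$, so any gated or normalized combination incurs uncontrolled error. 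A repair would need separate flag outputs, vanishing on a strictly larger strip than where the encoding is corrupted, plus an exact selector; this changes the stated output width $2nd+1$.

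The paper never detects good shifts. It spaces the shifts by $\delta/m$ with $\delta=1/(2dn+1)$ and makes the bad strips of width $\delta/m$. The bad sets $B_q$ are then disjoint, so each scalar coordinate is bad for at most one shift. Hence at most $dn$ of the $2dn+1$ shifts are bad for any $X$, and a strict majority is good. The network $\rho$ outputs the \emph{median} of the $2dn+1$ per-shift values, which is sandwiched between good values whatever the bad shifts produce. This median network is the source of the $C\log(2dn+1)^2$ term. Your spacing $r=\delta/(2m)$ with badness radius $r$ allows two bad shifts per coordinate. That guarantees only one good shift, which is not enough for a median.

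Your accounting of the second $\omega(f,\frac{1}{2m})$ is also off. Exact interpolation of arbitrary real targets at $M$ points is impossible with $O(\sqrt{M\log M})$ parameters; the memorization result of Vardi et al.\ requires finitely many labels. The paper first subtracts the value of $f$ at the center of $\Omega^n$, giving $|f|\le m\,\omega(f,\frac{1}{2m})$. It then rounds the targets $f(Y_q(\tau))$ to multiples of $\omega(f,\frac{1}{2m})$, which leaves $2m+1$ possible labels. That is the factor $(2m+1)$ in $\log[n(2m+1)N_r(K)]$; it is not a hashing range. This rounding, not blending across shifts, produces the second $\omega(f,\frac{1}{2m})$ in the bound.
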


%
Before giving the proof of Theorem \ref{main-deep-sets-approximation-theorem}, let us note the following corollary. Here we consider the case of Deep Sets approximating permutation invariant functions, i.e., the partition into a single set containing all elements, $P = \{[n]\} $. The assumptions on $f$ are $\alpha$-H\"older continuity, i.e., that $\omega(f,t) \leq t^\alpha$ for some $\alpha\in (0,1]$, and the rates are given in terms of the intrinsic dimension of the set $K$.
\begin{cor}\label{deep-sets-corollary}
    Let $n,d,\dint$ be fixed natural numbers with $\dint\geq 2d+2 $, denote $\Omega = [0,1]^d$, and let $C_1$ and $0 < \alpha \leq 1$, and $0 < \epsilon \leq 1$ be real numbers. Let $K\subseteq \Omega^n$ be a permutation invariant set with 
    \begin{equation}\label{eq:coveringK}
    N_r(K/S_n)\leq C_1r^{-\dint}, \quad \forall r>0 \end{equation}
    For $N = 2dn+1$ there exist ReLU neural networks $\Phi:\mathbb{R}^d\rightarrow \mathbb{R}^N$ and $\rho:\mathbb{R}^N\rightarrow \mathbb{R}$ with a total number of parameters $$P\leq C\epsilon^{-\dint/(2\alpha)}(1+|\log\epsilon|),$$ such that for any permutation invariant function $f:\Omega^n\rightarrow \mathbb{R}$ satisfying $\omega(f,t) \leq t^\alpha$ there is a choice of parameters such that
    \begin{equation}
        \left|f(X) - \rho\left(\sum_{i=1}^n \Phi(x_i)\right)\right| \leq \epsilon
    \end{equation}
    for every $X = (x_1,...,x_n)\in K$. Here $C$ is a constant depending only on $d,n,\dint,C_1$ and $\alpha$, but not on $\epsilon$.
\end{cor}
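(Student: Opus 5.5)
The plan is to apply Theorem \ref{main-deep-sets-approximation-theorem} with the trivial partition $P=\{[n]\}$, so that $k=1$ and $S_P=S_n$, and then choose $m$ as a function of $\epsilon$. Since $\omega(f,t)\le t^\alpha$, the right-hand side of \eqref{deep-sets-equation-bound} is at most $2(2m)^{-\alpha}$. I will therefore take $m=\lceil \tfrac12 (2/\epsilon)^{1/\alpha}\rceil$, which gives $m\asymp \epsilon^{-1/\alpha}$ and an error of at most $\epsilon$ uniformly on $K$. The architecture in the theorem depends only on $m$, $n$, $d$ and $N_r(K)$, not on $f$. This is what allows a single fixed architecture to serve every admissible $f$, with only the parameters depending on $f$.

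Next, I will count parameters, assuming fully connected layers so that a layer of width $w_1\to w_2$ costs $O(w_1w_2)$ parameters.

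For $\Phi$, the three hidden layers of width $W=O(m)$ cost $O(m^2)$ parameters. The layer of width $(m+1)^d$ costs $O(m\cdot m^d)$. The output layer costs $O(m^d\cdot nd)$. Altogether, $\Phi$ has $O(m^{d+1})=O(\epsilon^{-(d+1)/\alpha})$ parameters.

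For $\rho$, the first block has $C(m+1)^d$ layers of constant width. It costs $O(m^d)=O(\epsilon^{-d/\alpha})$ parameters.

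The second block has $L$ layers of constant width $12(2dn+1)$, so it costs $O(L)$ parameters. With $r=1/(2m(2nd+1))\asymp \epsilon^{1/\alpha}$, the covering assumption \eqref{eq:coveringK} gives
\[
N_r(K)\le C_1 r^{-\dint}\le C\epsilon^{-\dint/\alpha}.
\]
Here I identify the theorem's $N_r(K)$, defined as covering $K$ modulo $S_n$ in the $\ell_\infty$ norm, with $N_r(K/S_n)$. Hence
\[
L\le C\sqrt{\epsilon^{-\dint/\alpha}\,\log\bigl(n(2m+1)C\epsilon^{-\dint/\alpha}\bigr)}\le C\epsilon^{-\dint/(2\alpha)}(1+|\log \epsilon|)^{1/2}.
\]
This is within the claimed bound. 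The term $\log(2nd+1)^2$ is a constant and is absorbed.

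It remains to compare exponents, and this is the only place where the hypothesis $\dint\ge 2d+2$ enters. It gives $\dint/2\ge d+1$, so $\epsilon^{-(d+1)/\alpha}\le \epsilon^{-\dint/(2\alpha)}$ for $\epsilon\le1$. Both the $\Phi$ count and the first block of $\rho$ are therefore dominated by the main term. Summing the three contributions yields $P\le C\epsilon^{-\dint/(2\alpha)}(1+|\log\epsilon|)$, with $C$ depending on $d,n,\dint,C_1,\alpha$.

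The main obstacle is careful bookkeeping rather than any new idea. Two points need checking. First, the covering-number notion in the theorem must match \eqref{eq:coveringK}; if \eqref{eq:coveringK} is stated in the Euclidean quotient metric $d_G$, the norm equivalence $\|\cdot\|_\infty\le\|\cdot\|$ shows the $\ell_\infty$ covering number is no larger. Second, the $\Phi$ layer widths must be exactly as stated, so that the $O(m^{d+1})$ estimate holds and is absorbed by the exponent comparison.
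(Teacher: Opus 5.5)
Your proposal is correct and follows the paper's own argument: specialize Theorem~\ref{main-deep-sets-approximation-theorem} to $P=\{[n]\}$, take $m\asymp\epsilon^{-1/\alpha}$, and count parameters. The hypothesis $\dint\ge 2d+2$ then makes the $(m+1)^{d+1}$ and $(m+1)^2$ costs of $\Phi$, and the $O(m^d)$ bit-extraction block, dominated by the $(m+1)^{\dint/2}$ term coming from $L$. Your explicit comparison of $\ell_\infty$ and Euclidean quotient covering numbers is a correct refinement of a detail the paper leaves implicit. So is your observation that the $L$-term only contributes a $(1+|\log\epsilon|)^{1/2}$ factor.
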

This result follows from Theorem \ref{main-deep-sets-approximation-theorem} by setting $P = [n]$, $m = C\epsilon^{-1/\alpha}$, counting the number of parameters in the networks $\Phi$ and $\rho$, and noting that $(m+1)^{\dint/2}$ dominates both $(m+1)^2$ and $(m+1)^{d+1}$ as $m\rightarrow \infty$ since $\dint \geq 2d+2$. 

In particular, in the case that $K=\Omega^n $ we have that \eqref{eq:coveringK} holds with $\dint=nd $. Under the mild assumption that $n\geq 4$ we have that $nd\geq 2d+2 $ for all $d$, and so 
Corollary \ref{deep-sets-corollary} implies that on the class of $\alpha$-H\"older continuous functions in dimension $D = dn$, Deep Sets matches up to a logarithmic factor the optimal approximation rates for general ReLU MLPs (see for instance \cite{siegel2023optimal,yarotsky2018optimal,shen2022optimal}). This indicates that there is no loss of expressivity when using the Deep Sets architecture to enforce permutation invariance.
\newline
\begin{proof}\textbf{of Theorem \ref{main-deep-sets-approximation-theorem}}
First, let us note that we may assume without loss of generality that the partition $P$ consists of consecutive pieces, i.e., if $n_j := |P_j|$, then $P_j = \{n_1 + n_2 + ... + n_{j-1} + 1,...,n_1 + n_2 + ... + n_{j}\}$. This serves to simplify the presentation of the argument.

Before giving the proof of Theorem \ref{main-deep-sets-approximation-theorem}, let us describe our approach by first constructing \textit{discontinuous} functions $\widetilde{\Phi}:\mathbb{R}^d\rightarrow \mathbb{R}$ and $\widetilde{\rho}:\mathbb{R}^k\rightarrow \mathbb{R}$ satisfying \eqref{deep-sets-equation-bound}. Of course, this is trivial to do, but the specific $\widetilde{\Phi}$ and $\widetilde{\rho}$ that we construct will later be modified to obtain functions $\Phi$ and $\rho$ which can be explicitly represented using deep ReLU networks and which will satisfy \eqref{deep-sets-equation-bound}.

We begin by partitioning the unit cube $\Omega := [0,1]^d$ into $m^d$ sub-cubes of diameter at most $1/m$. Specifically, we define
\begin{equation}
    \Omega_{\textbf{i}} := \frac{1}{m}\prod_{j=1}^d [\textbf{i}_j, \textbf{i}_j+1),~\text{for $\textbf{i}\in \{0,...,m-1\}^d$,}
\end{equation}
where if $\textbf{i}_j = m-1$, then the half-open interval $[\textbf{i}_j, \textbf{i}_j+1)$ is replaced by the closed interval $[\textbf{i}_j, \textbf{i}_j+1]$ in the above product. The $m^d$ sub-cubes $\Omega_\textbf{i}$ are disjoint and cover the cube $\Omega$. We also denote by $z_\textbf{i}\in \Omega$ the midpoint of the cube $\Omega_\textbf{i}$ given by
\begin{equation}
    (z_\textbf{i})_j = \frac{\textbf{i}_j + \frac{1}{2}}{m}~\text{for $j=1,...,d$},
\end{equation}
and observe that for any $x\in \Omega_\textbf{i}$, we have $\|x - z_\textbf{i}\|_{\infty} \leq 1/(2m)$.

Set $M := m^d$. We first construct a map $\overline{\Phi}:\mathbb{R}^d\rightarrow \mathbb{R}^M$ as follows. Let $\chi_{\textbf{i}}$ denote the characteristic function of the sub-cube $\Omega_\textbf{i}$, i.e., we set
\begin{equation}
    \chi_{\textbf{i}}(x) = \begin{cases}
        1 & x\in \Omega_\textbf{i}\\
        0 & x\notin \Omega_\textbf{i}.
    \end{cases}
\end{equation}
We use the set $\{0,...,m-1\}^d$ to index the components of $\mathbb{R}^M$ and define the map $\overline{\Phi}$ as
\begin{equation}
    \overline{\Phi}(x)_{\textbf{i}} := \chi_{\textbf{i}}(x).
\end{equation}
In other words, the $\textbf{i}$-th component of $\overline{\Phi}$ is the characteristic function $\chi_\textbf{i}$. 

Observe that if $X = (x_i)_{i=1}^n\in \Omega^n$, then the inner sums in \eqref{deep-sets-equation-bound} using the map $\overline{\Phi}$ are given by
\begin{equation}
    \sum_{i\in P_j}\overline{\Phi}(x_i)\in \mathbb{Z}^{M}~\text{with}~\left[\sum_{i\in P_j} \overline{\Phi}(x_i)\right]_{\textbf{i}} = |\{i\in P_j:x_i\in \Omega_{\textbf{i}}\}|,
\end{equation}
i.e., the $\textbf{i}$-th component of this sum is equal to the number of coordinates $x_i$  in the partition $P_j$ which lie in $\Omega_\textbf{i}$. 

For an index $j$ we let $n_j := |P_j|$. Let $\tau\in \mathbb{Z}^M$ be such that $\tau_{\textbf{i}} \geq 0$ and $\sum_{\textbf{i}} \tau_\textbf{i} = n_j$. We let $Y(\tau)$ be a point $Y\in \Omega^{n_j}$ such that $\tau_\textbf{i}$ components of $Y$ are equal to $z_{\textbf{i}}$ for each $\textbf{i}$. Such a $Y$ is unique up to permutation and we choose $Y(\tau)$ to be any such point. It is easy to see that for any $X = (x_i)_{i=1}^{n_j}\in \Omega^{n_j}$ there exists a permutation $\sigma\in S_{n_j}$ such that
\begin{equation}\label{reordered-sigma-equation}
    \left\|\sigma(X) - Y\left(\sum_{i=1}^{n_j} \overline{\Phi}(x_i)\right)\right\|_{\infty} \leq \frac{1}{2m}.
\end{equation}
Indeed, by reordering the points of $X$ we can match each $x_i\in \Omega_\textbf{i}$ with a corresponding $z_\textbf{i}$ component of $Y$, since the number of $z_\textbf{i}$ components of $Y$ is equal to the number of $x_i$'s in $\Omega_\textbf{i}$.

We now define the map $\overline{\rho}:\mathbb{R}^{M\times k}\rightarrow \mathbb{R}$ such that
\begin{equation}
    \overline{\rho}(\tau_1,...,\tau_k) = f(Y(\tau_1),...,Y(\tau_k))
\end{equation}
for each $(\tau_1,...,\tau_k)\in \mathbb{Z}^{M\times k}$ with $(\tau_j)_{\textbf{i}} \geq 0$ and $\sum_{\textbf{i}} (\tau_j)_\textbf{i} = n_j$. It then follows from the $S_P$ invariance of $f$ that (here $\sigma\in S_P$ is obtained by applying each of the permutations in \eqref{reordered-sigma-equation} to the corresponding block $P_j$.)
\begin{equation}
\begin{split}
    \left|f(X) - \overline{\rho}\left(\sum_{i\in P_1}\overline{\Phi}(x_i),...,\sum_{i\in P_k}\overline{\Phi}(x_i)\right)\right| &= \left|f(\sigma(X)) - f\left(Y\left(\sum_{i\in P_1} \overline{\Phi}(x_i)\right),...,Y\left(\sum_{i\in P_k} \overline{\Phi}(x_i)\right)\right)\right|\\
    &\leq \omega\left(f,\frac{1}{2m}\right).
\end{split}
\end{equation}
We remark that our construction so far is similar to the construction given in Section 3 of \cite{han2022universal}. However, our construction is significantly simpler and results in a much smaller number of latent variables $M$.

The next step is to construct $\widetilde{\Phi}$ and $\widetilde{\rho}$ by reducing the embedding dimension from $M$ to one. We will do this by encoding the number of points in each cell using the digits of a large integer. Specifically, for $\textbf{i}\in \{0,...,m-1\}^d$, we define
\begin{equation}
    |\textbf{i}| := \sum_{j=1}^d \textbf{i}_jm^j.
\end{equation}
The map $\widetilde{\Phi}$ is given by
\begin{equation}\label{digit-encoding-equation}
    \widetilde{\Phi}(x) = \sum_{\textbf{i}\in \{0,...,m-1\}^d} \chi_{\textbf{i}}(x)(n+1)^{|\textbf{i}|}.
\end{equation}
In words, $\widetilde{\Phi}(x)$ is the integer whose expansion in base $(n+1)$ contains a one in the position corresponding to the sub-cube $\Omega_\textbf{i}$ which contains $x$ and a zero in every other position. We observe that the inner sums in \eqref{deep-sets-equation-bound} using $\widetilde{\Phi}$ are
\begin{equation}\label{sum-of-Phi-tilde}
    \sum_{i\in P_j}\widetilde\Phi(x_i) = \sum_{\textbf{i}\in \{0,...,m-1\}^d} \left[\sum_{i\in P_j}\chi_{\textbf{i}}(x_i)\right](n+1)^{|\textbf{i}|}.
\end{equation}
Since the number of terms in the inner sum is at most $|P_j| \leq n$, it follows that
\begin{equation}
    \left[\sum_{i\in P_j}\chi_{\textbf{i}}(x_i)\right] \leq n
\end{equation}
for all $j$ and $X\in \Omega^n$. Hence, the sum in \eqref{sum-of-Phi-tilde} is an integer whose digits in base $n$ count the number of points $x_i$ which lie in the the sub-cube $\Omega_\textbf{i}$. We can now define $\widetilde{\rho}:\mathbb{R}^k\rightarrow \mathbb{R}$ so that
\begin{equation}
    \widetilde{\rho}\left(\sum_{\textbf{i}}(\tau_1)_\textbf{i}(n+1)^{|\textbf{i}|},...,\sum_{\textbf{i}}(\tau_k)_\textbf{i}(n+1)^{|\textbf{i}|}\right) = f(Y(\tau_1),...,Y(\tau_k))
\end{equation}
for each $(\tau_1,...,\tau_k)\in \mathbb{Z}^{M\times k}$ with $(\tau_j)_{\textbf{i}} \geq 0$ and $\sum_{\textbf{i}} (\tau_j)_\textbf{i} = n_j$. It then follows that
\begin{equation}
    \widetilde{\rho}\left(\sum_{i\in P_1}\widetilde{\Phi}(x_i),...,\sum_{i\in P_k}\widetilde{\Phi}(x_i)\right) = \overline{\rho}\left(\sum_{i\in P_1}\overline{\Phi}(x_i),...,\sum_{i\in P_k}\overline{\Phi}(x_i)\right)
\end{equation}
and we have reduced the latent dimension to one.

The problem with the $\widetilde{\Phi}$ and $\widetilde{\rho}$ which we have constructed is that $\widetilde{\Phi}$ is discontinuous ($\widetilde{\rho}$ could actually be taken continuous), and so it cannot even be approximated by ReLU neural networks. In order to prove Theorem \ref{main-deep-sets-approximation-theorem} we must construct maps $\Phi$ and $\rho$ which are represented by ReLU networks and which satisfy \eqref{deep-sets-equation-bound}, which is significantly more challenging. Our idea for achieving this is inspired by the proof of the Kolmorogov-Arnold superposition theorem (see for instance \cite{kolmogorov1957representations,kahane1975theoreme}).

We first introduce the following partitions of unity on $\mathbb{R}^d$. Let $0 < \delta < 1/2$ (which will be specified later) and consider the functions
\begin{equation}\label{definition-phi-i1}
        \phi_k(x) := \max\left\{0,\min\left\{\frac{1}{\delta}\left(\frac{1+\delta}{2} - \left|x - k\right|\right), 1\right\}\right\}
    \end{equation}
    for $k\in \mathbb{Z}$. It is a simple matter to verify that the collection of functions $\{\phi_k\}_{k\in\mathbb{Z}}$ is partition of unity on $\mathbb{R}$, i.e., that $\phi_k \geq 0$ and
    $$
        \sum_{k=-\infty}^{\infty}\phi_k(x) = 1
    $$
    for all $x\in \mathbb{R}$. Moreover, each $\phi_k$ vanishes outside the interval $$I_k := \left(k-\frac{1+\delta}{2}, k+\frac{1+\delta}{2}\right),$$
    and $\phi_k(x) = 1$ on the interval
    \begin{equation}
        \tilde{I}_k := \left[k-\frac{1-\delta}{2}, k+\frac{1-\delta}{2}\right].
    \end{equation}
    It is also clear from \eqref{definition-phi-i1} that each of the functions $\phi_k$ can be expressed using a ReLU neural network with width $W = 2$ and depth $L = 3$. 

    From the partition of unity $\{\phi_k\}_{k\in\mathbb{Z}}$ we can obtain a partition of unity on $\mathbb{R}^d$ via a tensor product construction. Specifically, for an index $\textbf{i}\in \mathbb{Z}^d$ we set
    \begin{equation}
        \phi_{\textbf{i}}(x) := \prod_{j=1}^d\phi_{\textbf{i}_j}(x_j).
    \end{equation}
    Then the functions $\phi_{\textbf{i}}$ for $\textbf{i}\in \mathbb{Z}^d$ form a partition of unity on $\mathbb{R}^d$. Given the integer $m \geq 1$, we would like to consider the scaled partition of unity
    \begin{equation}
        \phi_{\textbf{i},m}(x) := \phi_{\textbf{i}}(mx)
    \end{equation}
    restricted to the unit cube $\Omega$. Observe that exactly $(m+1)^d$ of the functions $\phi_{\textbf{i},m}$ are non-zero on $\Omega$ (namely those for which $\textbf{i}\in \{0,...,m\}^d$). We would like these $M := (m+1)^d$ functions to be continuous substitutes for the characteristic functions $\chi_{\textbf{i}}$ in our discontinuous construction (note we have slightly redefined the variable $M$). However, the overlap in their supports means that the resulting map $\Phi$ would no longer map into $\mathbb{Z}^M$, but rather into a much more complicated $d$-dimensional cell complex in $\mathbb{R}^M$. We would have to control the map $\rho$ on this set which is difficult to do using ReLU networks and introduces significant technical complications.

    Our method for overcoming this is to instead use multiple shifted partitions of unity, in a manner similar to the proof of the Kolmogorov-Arnold superposition theorem. 
    
    Specifically, we set $\delta = \frac{1}{2dn+1}$ and consider the following collection of $2dn+1$ shifted partitions of unity. For $q=0,...,2dn$ and $\textbf{i}\in \mathbb{Z}^d$ we set (here $\textbf{1}$ denotes the vector of ones)
    \begin{equation}\label{shifted-partition-definition-equation}
        \phi^q_{\textbf{i}}(x) := \phi_{\textbf{i}}(x - q\delta \textbf{1}) = \prod_{j=1}^d\phi_{\textbf{i}_j}(x_j - q\delta).
    \end{equation}
    We then write $\Pi_q := \{\phi^q_{\textbf{i}},~\textbf{i}\in \mathbb{Z}^d\}$ for the $q$-th shifted partition of unity. 
    
    For a given point $x\in \mathbb{R}^d$, we say that the partition $\Pi_q$ is \textit{bad} for $x$ if there exists an $\textbf{i}\in \mathbb{Z}^d$ such that $0 < \phi^q_{\textbf{i}}(x) < 1$. Otherwise, we say that the partition is \textit{good} for $x$. For a point $X = (x_1,...,x_n)\in \mathbb{R}^{dn}$ we say that the partition $\Pi_q$ is \textit{bad} for $X$ if it is bad for any coordinate $x_i$, and \textit{good} if it is good for all coordinates $x_i$.
    
    The key observation, inspired by the proof of the Kolmogorov-Arnold theorem, is that for each point $x\in \mathbb{R}^d$ the number of partitions $\Pi_q$ which are bad for $x$ is at most $d$. This follows since if $\Pi_q$ is bad for $x$ then there must be an index $j \in \{1,...,d\}$ and an index $k\in \mathbb{Z}$ such that $x_j - q\delta\in I_k \backslash \tilde{I}_k$. Indeed, otherwise every factor on the right hand side in \eqref{shifted-partition-definition-equation} would be either $0$ or $1$, so that the product would be either $0$ or $1$. We observe that 
    \begin{equation}
        \bigcup_{k\in \mathbb{Z}} I_k \backslash \tilde{I}_k = \bigcup_{k\in \mathbb{Z}} \left(k + \frac{1-\delta}{2}, k + \frac{1+\delta}{2}\right),
    \end{equation}
    and since $\delta = \frac{1}{2dn+1}$ it follows that the sets
    \begin{equation}
        B_q := q\delta + \bigcup_{k\in \mathbb{Z}} I_k \backslash \tilde{I}_k
    \end{equation}
    are disjoint for $q = 0,...,2dn$. This means that for each $j$ there is at most one value of $q$ such that $x_j\in B_q$, i.e., such that $x_j - q\delta\in I_k \backslash \tilde{I}_k$ for some $k\in \mathbb{Z}$. Hence, there is at most one bad value of $q$ for each index $j$, and so at most $d$ in total. We deduce from this that for any point $X = (x_1,...,x_n)\in \mathbb{R}^{dn}$, the number of bad partitions is at most $dn$, since each coordinate $x_i$ has at most $d$ bad partitions. Since the number of partitions is $2dn+1$, this implies that for each $X\in \mathbb{R}^{dn}$ only a minority of the partitions $\Pi_q$ are bad.

    Next, we modify our partition of unity. We define
    \begin{equation}\label{definition-of-tilde-phi}
        \widetilde{\phi^q_{\textbf{i}}}(x) := \max\left\{0,\sum_{j=1}^d \phi_{\textbf{i}_j}(x_j - q\delta) - d + 1\right\}.
    \end{equation}
    Although these functions no longer form a partition of unity, we still have
    \begin{equation}
        \widetilde{\phi^q_{\textbf{i}}}(x) = \phi^q_{\textbf{i}}(x)
    \end{equation}
    whenever the partition $\Pi_q$ is good for $x$. Indeed, if $\Pi_q$ is good for $x$, then either all terms in the product in \eqref{shifted-partition-definition-equation} are $1$ or one of them is $0$. In the first case, we have
    \begin{equation}
        \sum_{j=1}^d \phi_{\textbf{i}_j}(x_j - q\delta) - d + 1 = 1
    \end{equation}
    so that $\widetilde{\phi^q_{\textbf{i}}}(x) = 1$, while in the second case we have
    \begin{equation}
        \sum_{j=1}^d \phi_{\textbf{i}_j}(x_j - q\delta) - d + 1 \leq 0
    \end{equation}
    so that $\widetilde{\phi^q_{\textbf{i}}}(x) = 0$. Additionally, we see from this argument that if $\phi_{\textbf{i}}^q(x) = 0$, then also $\widetilde{\phi^q_{\textbf{i}}}(x) = 0$.

    Using these observations, we construct the network $\Phi$ as follows. For $\textbf{i}\in \mathbb{Z}^d$ we denote by
    \begin{equation}
        \phi^q_{\textbf{i},m}(x) := \widetilde{\phi^q_{\textbf{i}}}(mx).
    \end{equation}
    Next, we set
    \begin{equation}
        I_{q,m} = \mathbb{Z}\cap \left(-\frac{1+\delta}{2} - q\delta,~m + \frac{1+\delta}{2}-q\delta\right),
    \end{equation}
    and write
    \begin{equation}
        \mathcal{I}_{q,m} = \prod_{j=1}^d I_{q,m} \subset \mathbb{Z}^d.
    \end{equation}
    Observe that if $k\notin I_{q,m}$, then $(I_k + q\delta) \cap [0,m] = \emptyset$, and so $\phi_k(m(x - q\delta)) = 0$ for all $x\in [0,1]$. Thus, if any component $\textbf{i}_j \notin I_{q,m}$, i.e., if $\textbf{i}\notin \mathcal{I}_{q,m}$, then $\phi_{\textbf{i},m}^q$ vanishes on $\Omega$. 
    
    We claim that for each $q$ we have $|I_{q,m}| = (m+1)$ and thus $|\mathcal{I}_{q,m}| = (m+1)^d$. Since the length of the interval
    \begin{equation}\label{interval-equation}
        \left(-\frac{1+\delta}{2} - q\delta,~m + \frac{1+\delta}{2}-q\delta\right)
    \end{equation}
    is $m+1+\delta > m+1$ it must contain at least $m+1$ integers. To show that it always contains exactly $m+1$ integers, we note that for $q = d$ the interval starts at $0$ and thus contains exactly $m+1$ integers. Since all other intervals are shifts of this by a multiple of $\delta$ and the length of these open intervals is $m+1+\delta$, it follows that all shifts also contain exactly $m+1$ integers. 
    
    We write
    \begin{equation}\label{Phi-q,m-equation}
        \Phi_{q,m}(x) := (\phi^q_{\textbf{i},m}(x))_{\textbf{i}\in\mathcal{I}_{q,m}}\in \mathbb{R}^{(m+1)^d}.
    \end{equation}
    These functions can be expressed using ReLU networks as follows:
    
    Each of the functions $\phi_k(m(x_j - q\delta))$ for $q=0,...,2dn$, $j=1,...,d$, and $k\in I_{q,m}$ can be expressed using a ReLU network of width $W = 2$ and depth $L = 3$ (see \eqref{definition-phi-i1}). Hence a network whose output consists of all $d(m+1)$ of these functions can be represented by a ReLU network of width $W = 2d(m+1)$ and depth $L = 3$. By adding a final layer of width $(m+1)^d$ (with an application of ReLU at the end), the function $\Phi_{q,m}$ can be represented (see \eqref{definition-of-tilde-phi}).

    The next step is to reduce the embedding dimension using the digit encoding described previously in \eqref{digit-encoding-equation}. To this end, we add a linear layer to the end of each of these networks to obtain functions
    \begin{equation}
        \Phi_{q}(x) := \sum_{\textbf{i}\in\mathcal{I}_{q,m}} \phi^q_{\textbf{i},m}(x)(n+1)^{|\textbf{i}|} \in \mathbb{R},
    \end{equation}
    where in this case $|\textbf{i}|$ is simply the index of $\textbf{i}$ in some (fixed) enumeration of the set $\mathcal{I}_{q,m}$. Each of these functions can be represented by a ReLU network with three hidden layers of width $W = 2d(m+1)$ and one hidden layer with width $(m+1)^d$.
    
    We now define the function $\Phi$ in \eqref{deep-sets-equation-bound} by 
    \begin{equation}
        \Phi(x) = \begin{pmatrix}
            \Phi_{0}(x)\\
            \Phi_{1}(x)\\
            \vdots\\
            \Phi_{2dn}(x)
        \end{pmatrix}\in \mathbb{R}^{2dn+1} = \mathbb{R}^N.
    \end{equation}
    By concatenating the networks representing each $\Phi_q$, this function can be expressed using a ReLU network with three hidden layers of width $W = 2d(m+1)(2dn + 1)$ and one hidden layer of width $(2dn + 1)(m+1)^d$.

    Next, we describe how to construct the network $\rho:\mathbb{R}^N\rightarrow\mathbb{R}$. We first note that by adjusting the bias on the output, we may assume without loss of generality that the value of $f$ at the midpoint of $\Omega^n$ (namely at the point where every entry is $\frac{1}{2}$) is equal to $0$. It then follows that for any $X\in \Omega^n$ we have 
    \begin{equation}\label{a-priori-bound-on-f}
        |f(X)| \leq \omega(f,1/2)\leq m\omega\left(f,\frac{1}{2m}\right).\end{equation}
        Now consider a single partition $\Pi_q$ for $q=0,...,2dn$. For $\textbf{i}\in \mathcal{I}_{q,m}$ we consider the rectangle
        \begin{equation}\label{eqn:36}
            \mathcal{R}_{\textbf{i},q} := \left(q\delta\textbf{1} + \frac{1}{m}\prod_{j=1}^d \tilde{I}_{\textbf{i}_j}\right)\cap \Omega,
        \end{equation}
        and we denote by $z_{\textbf{i},q}$ the center point of this rectangle. Note that for each point $x\in \mathcal{R}_{\textbf{i},q}$ we have $\|x - z_{\textbf{i},q}\|_\infty \leq 1/2m$.
        
        If the partition $\Pi_q$ is good for an input $X = (x_1,...,x_n)\in \Omega^n$, then \begin{equation}\label{eqn:vector}
            \tau_{q,j}(X) := \sum_{i\in P_j} \Phi_{q}(x_i)
        \end{equation}
        is an integer of the form
        \begin{equation}
        \tau_{q,j}(X) = \sum_{\textbf{i}\in\mathcal{I}_{q,m}} \tau_{q,j}(X)_{\textbf{i}}(n+1)^{|\textbf{i}|}
        \end{equation}
        whose digits in base $(n+1)$ are given by
        \begin{equation}
            \tau_{q,j}(X)_\textbf{i} = \left|\left\{i\in P_j:~x_i\in \mathcal{R}_{\textbf{i},q}\right\}\right|.
        \end{equation} 
        This holds since if $\Pi_q$ is good for $X$, then by construction $\phi^q_{\textbf{i},m}(x)\in \{0,1\}$ for each $i=1,...,n$ and $\textbf{i}\in \mathcal{I}_{q,m}$, and $\phi^q_{\textbf{i},m}(x) = 1$ iff
        $$
            x_i - q\delta\textbf{1} \in \frac{1}{m}\prod_{j=1}^d \tilde{I}_{\textbf{i}_j}.
        $$
        Since the total number of points in each rectangle is at most $n$, the digits in base $(n+1)$ add without carrying when summing the $\Phi_q(x_i)$.

        The first step in the network $\rho$ is to extract the digits $\tau_{q,j}(X)_\textbf{i}$. This can be done using a well-known construction with a `bit-extraction' network of width $W = C(n+1)$ and depth $L = C(m+1)^d$ for an absolute constant $C$ (see for instance \cite{bartlett1998almost,yarotsky2018optimal} or \cite{siegel2023optimal}, Proposition 10).
        
        Next, given an integer vector $\tau\in \mathbb{Z}^{(m+1)^d}$ satisfying $\tau_{\textbf{i}} \geq 0$ and $\sum_{\textbf{i}} \tau_\textbf{i} = n_j$ (recall that $n_j := |P_j|$), we let $Y_q(\tau)$ be a point $Y\in \Omega^{n_j}$ such that $\tau_\textbf{i}$ components of $Y$ are equal to $z_{\textbf{i},q}$ for each $\textbf{i}$. Such a $Y$ is unique up to permutation of the indices in $P_j$ and we choose $Y_q(\tau)$ to be any such point. We set $\epsilon:=\omega(f,1/2m)$ and for $\tau = (\tau_1,...,\tau_k)$ with $\tau_j\in \mathbb{Z}^{(m+1)^d}$ we define
        \begin{equation}\label{bar-f-definition}
            \bar{f}(\tau) := \epsilon\left\lfloor \frac{f(Y_q(\tau_1),...,Y_q(\tau_k))}{\epsilon}\right\rfloor,
        \end{equation}
        where the floor is rounded towards $0$. 
        
        For $\tau = (\tau_1,...,\tau_k)$ with $\tau_j\in \mathbb{Z}^{(m+1)^d}$ we say that the set $K$ realizes $\tau$ for the partition $q$ if there exists a point in $X\in K$ such that $(\tau_j)_{\textbf{i}}$ of its coordinates in the block $B_j$ lie in the rectangle $\mathcal{R}_{\textbf{i},q}$. Since the distance between the rectangles $\mathcal{R}_{\textbf{i},q}$ and $\mathcal{R}_{\textbf{j},q}$ for $\textbf{j}\neq \textbf{i}$ is $\delta/m$, it follows that the points in $K$ realizing two different $\tau$'s must be at least distance $\delta/m$ (even modulo the action of $S_P$). Hence the number of $\tau$'s which are realized by $K$ is at most $N_r(K)$ for $r = \delta / (2m) = \frac{1}{2m(2nd+1)}$.
        
        We define a function $\tilde{\rho}_q: \mathbb{R}^{(m+1)^d\times k}\rightarrow \mathbb{R}$ which satisfies
        \begin{equation}\label{rho-q-equation}
            \tilde{\rho}_q(\tau) = \bar{f}(\tau)
        \end{equation}
        for all $\tau$ which are realized by $K$. This is an interpolation problem and we can use the results of
        \cite{vardi2022on} to construct a ReLU network satisfying \eqref{rho-q-equation}. Specifically, the number of integer vectors is bounded by $N_r(K)$,
        any two such integer vectors are at least distance one apart, and each of them has length at most $n$. In addition, using equation \eqref{a-priori-bound-on-f} we see that the number of possible values for $\bar{f}(\tau)$ is $2m+1$. Thus we can apply Theorem 3.1 from \cite{vardi2022on} to see that this interpolation problem can be solved by a ReLU network with width $W = 12$ and depth
        \begin{equation}\label{depth-bound-on-L}
            L\leq C\sqrt{N_r(K)\log(n(2m+1)N_r(K))},
        \end{equation}
        for an absolute constant $C$. We then construct $\rho_q$ by composing this network with the bit-extractor network described above. 
        
        We now observe that if the partition $\Pi_q$ is good for $X\in K$, then
        \begin{equation}\label{good-X-error-control}
            \left|f(X) - \rho_q\left(\sum_{i\in P_1} \Phi_{q}(x_i),...,\sum_{i\in P_k} \Phi_{q}(x_i)\right)\right| \leq 2\omega\left(f,\frac{1}{2m}\right).
        \end{equation}
        This holds since by construction there exists a permutation $\sigma\in S_P$ such that
        \begin{equation}
        |\sigma(X) - Y_q(\tau(X))|_\infty \leq 1/2m,
        \end{equation}
        where $\tau(X) = (\tau_1(X),...,\tau_k(X))$ is the integer vector giving the number of coordinates in the block $B_j$ that lie in the rectangle $\mathcal{R}_{\textbf{i},q}$  
        (see the remark following \eqref{reordered-sigma-equation} and piece together the permutations in each block). By \eqref{bar-f-definition} we have
        \begin{equation}
            |\bar{f}(\tau) - f(Y_q(\tau))| \leq \epsilon = \omega(f,1/2m),
        \end{equation} 
        so that by the $S_P$-invariance of $f$ we have
        \begin{equation}\label{key-inequality}
        \begin{split}
            \left|f(X) - \tilde{\rho}_q\left(\tau(X)\right)\right| &\leq |f(\sigma(X)) - f(Y_q(\tau(X))|\\
            &+ |f(Y_q((X)) - \bar{f}(\tau(X))|\\
            &\leq 2\omega\left(f,\frac{1}{2m}\right).
        \end{split}
        \end{equation}
        On the other hand, if $\Pi_q$ is bad for $X$, then we have no control on the error in \eqref{good-X-error-control}.

        We now obtain a new network by applying each $\rho_q$ to the corresponding input in parallel, i.e.,
        \begin{equation}
            \widetilde{\rho}\begin{pmatrix}
                \tau_0\\
                \tau_1\\
                \tau_2\\
                \vdots\\
                \tau_{2dn}
            \end{pmatrix} = \begin{pmatrix}
                \rho_0(\tau_0)\\
                \rho_1(\tau_1)\\
                \rho_2(\tau_2)\\
                \vdots\\
                \rho_{2dn}(\tau_{2dn})
            \end{pmatrix}.
        \end{equation}
By concatenating the constructions of each $\rho_q$, this can be done using a network with $C(m+1)^d$ layers of width $C(n+1)(2nd+1)$ followed by network with width $W = 12(2dn+1)$ and depth $L$ bounded \eqref{depth-bound-on-L}. The final network $\rho$ is obtained by taking the median of the outputs of $\widetilde{\rho}$. By Corollary 13 in \cite{siegel2023optimal} this can be done by increasing the depth of the network by $C\log(2dn+1)^2$.

Finally, we prove that \eqref{deep-sets-equation-bound} holds. Let $X = (x_1,...,x_n)\in K$. Then by construction
\begin{equation}\label{tilde-rho-equation}
    \widetilde{\rho}\left(\sum_{i\in P_1} \Phi(x_i),...,\sum_{i\in P_k} \Phi(x_i)\right) = \left[\rho_q\left(\sum_{i\in P_1} \Phi_{q,m}(x_i),...,\sum_{i\in P_k} \Phi_{q,m}(x_i)\right)\right]_{q=0}^{2dn}\in \mathbb{R}^{2dn+1}.
\end{equation}
If the partition $\Pi_q$ is good for $X$, then the bound \eqref{good-X-error-control} holds. Since the majority of the partitions $\Pi_q$ are good for any given $X\in \Omega^n$, this means that the majority of the coordinates in \eqref{tilde-rho-equation} satisfy the bound \eqref{good-X-error-control}. Consequently, the median must be sandwiched between two of these good coordinates, and thus must also satisfy the bound \eqref{good-X-error-control}. Since $\rho$ is defined to be this median, this proves that \eqref{deep-sets-equation-bound} holds.
\end{proof}

\subsection{Approximation Rates for Permutation Equivariant Functions: Sumformer}\label{subsec:equivariant}
Universal representations of group-\emph{equivariant} functions on point clouds have yielded both theoretical and practical benefits in geometric machine learning research \cite{villar2021scalars}. It allows researchers to verify that their model respects the underlying symmetries of the data. A method to extend a universal permutation \emph{invariant} deepset model to a universal permutation \emph{equivariant} model was discussed in \cite{segoluniversal} and later in the similar Sumformer architecture \cite{Sumformer}. We derive, for the first time, approximation rates for \ah continuous permutation-equivariant functions and apply these results to obtain approximation rates for the Sumformer architecture.

 We first present a Deep Sets-like representation for purely permutation \textit{equivariant} functions that accounts for the regularity of each component in the decomposition. 

\begin{cor}[Characterization of Permutation-Equivariant Functions]\label{thm:chr-equi-fun}
Let $f: \RR^{n\times d} \to \RR^{n}$ be a \ah continuous function that is equivariant to permutations. Then $f$ can be represented as
$$ f(X)_i = g(x_i,\; \{ x_j \mid j\in \{ 1,2,\ldots,n\} \setminus \{i\} \; \}) $$
where $g$ is \ah continuous. 

\end{cor}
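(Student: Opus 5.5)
The plan is to take $g$ to be the first coordinate of $f$, read as a function of the distinguished point and the multiset of the remaining points. Concretely, define $g:\RR^{d}\times \RR^{d\times(n-1)}\to\RR$ by
$$g(y, Z) := f(y, z_1,\ldots,z_{n-1})_1,$$
where the input of $f$ is the point cloud whose first point is $y$ and whose remaining points are $z_1,\ldots,z_{n-1}$. In the paper's notation this is $g=f_1$, viewed as a function on $\RR^d\times\RR^{d\times (n-1)}$. The proof then has three steps, carried out in order.

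\textbf{Step 1: $g$ depends on $Z$ only through the multiset $\{z_j\}$.} Any permutation $\sigma\in S_n$ fixing the index $1$ acts on the input by permuting the last $n-1$ points. By equivariance, $f(\sigma X)_1 = f(X)_{\sigma^{-1}(1)} = f(X)_1$. So $f_1$ is invariant under the stabilizer $S_{\{2,\ldots,n\}}$, which is precisely the statement that $g$ is a function of $y$ and the multiset $\{z_1,\ldots,z_{n-1}\}$. In the partition language of Section \ref{subsec:DeepSets}, $f_1$ is $S_P$-invariant for $P=\{\{1\},\{2,\ldots,n\}\}$.

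\textbf{Step 2: the representation formula.} Fix $i$ and let $\tau$ be the transposition of $1$ and $i$. Equivariance gives $f(X)_i = f(\tau X)_1$. The point cloud $\tau X$ has $x_i$ in the first slot and the points $\{x_j: j\neq i\}$, in some order, in the other slots. Combined with Step 1, this gives
$$f(X)_i = g\bigl(x_i,\{x_j \mid j\neq i\}\bigr).$$

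\textbf{Step 3: regularity of $g$.} On $\RR^d\times\RR^{d\times(n-1)}$ with the ordinary Euclidean norm, $g$ is a coordinate of $f$, so $|g|_{C^\alpha}\le |f|_{C^\alpha}$ immediately. To state Hölder continuity on the quotient, use the metric $d_{S_{n-1}}$ from \eqref{eq:dG}. Given two multisets, choose representatives that attain the minimum in $d_{S_{n-1}}$; Step 1 shows $g$ takes the same value on any representative. Then
$$|g(y,[Z])-g(y',[Z'])|\le |f|_{C^\alpha}\left(\|y-y'\|^2+d_{S_{n-1}}([Z],[Z'])^2\right)^{\alpha/2},$$
so $g$ is \ah continuous with the same constant.

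\textbf{Main obstacle.} None of these steps is technically difficult. The point needing the most care is Step 3: the Hölder statement has to be formulated on the quotient $\RR^{d\times(n-1)}/S_{n-1}$, and it holds there because the quotient metric is a minimum over the finite group. The Hölder constant is also preserved under the choice of norm, up to the norm-equivalence constants already discussed in Section \ref{sec:prelim}.

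This representation is what the next step needs. $g$ is an $S_P$-invariant Hölder function with $P=\{\{1\},\{2,\ldots,n\}\}$, and $\{x_j: j\neq i\}$ is the full sum minus $x_i$. Theorem \ref{main-deep-sets-approximation-theorem} therefore applies directly and yields the Sumformer rate.
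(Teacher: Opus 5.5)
Your proof is correct and follows the same route as the paper. Both take $g$ to be a single coordinate $f_j$, use invariance under the stabilizer of $j$ to pass to the multiset of the remaining points, relate $f_i$ to $f_j$ by a transposition, and get H\"older continuity with the same constant on the quotient by minimizing over the stabilizer. The only difference is that you read $f(X)_i = f(\tau X)_1$ directly off equivariance, whereas the paper reaches the same transposition identity through a longer averaging computation and also proves a converse that the statement does not need.
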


\begin{proof}
\textit{Part I - Proof of Existence of Shared Function.} We use the equivariance property to derive symmetry conditions on the entry-wise functions of $f$. By the equivariance constraint, for any permutation $\sigma \in S_n$ it holds that
$$ f(\sigma X)=\sigma f(X) $$
$$ (f_1(\sigma X), \ldots, f_n(\sigma X)) = (f_{\sigma(1)}(X), \ldots, f_{\sigma(n)}(X)) $$

By the equality, for any $i\in [n]$,
$$ f_{i}(\sigma(X)) = f_{\sigma(i)}(X) \underset{j=\sigma(i)}{\implies} f_{\sigma^{-1}(j)}(\sigma(X)) = f_{j}(X) $$

Averaging over all permutations,
$$f_j(X) = \frac{1}{|S_n|} \sum_{\sigma \in S_n } f_{\sigma^{-1}(j)}(\sigma X) =  \frac{1}{|S_n|} \sum_{i=1}^{n} \sum_{\sigma \in Stab(j), \sigma_{0}= (ij)}  f_{i}(\sigma \sigma_{0} X)$$

$$ = \frac{1}{|S_n|} \sum_{i=1}^{n} \sum_{\sigma \in Stab(j), \sigma_{0}= (kj),\sigma_{1}=(ki)}  f_{(\sigma\sigma_0 \sigma_1)^{-1}(j)}(\sigma \sigma_{0} \sigma_1 X)$$
$$ =  \frac{1}{|S_n|} \sum_{i=1}^{n} \sum_{\sigma \in Stab(j), \sigma_{0}= (kj),\sigma_{1}=(ki)}  f_{i}(\sigma \sigma_{0}\sigma_1 X)$$
$$ = \frac{1}{|S_n|} \sum_{i=1}^{n} \sum_{\sigma \in Stab(k),\sigma_{0}=(kj)}  f_{(i)}(\sigma \sigma_0 X) = f_{k}((jk)\cdot X) $$

for any $j,k \in [n]$.

Denote $g(X):=f_{j}(X)$. As $ f_{\sigma^{-1}(j)}(\sigma(X)) = f_{j}(X)$, we have for any $\sigma \in Stab(j)$, $f_j(X)=f_j(\sigma X)$. Thus, $g(X) = g(x_j, \{ x_i \mid i=1,\ldots , n\})$.


For any $k\in [n]$, we have:
$$ f_{k}(X) = f_{j}((jk)\cdot X) = g(x_k, \{ x_i \mid i\in [n]\setminus \{j\} \})$$


\textit{Part II - Proof of Regularity.}
Let $0< \alpha\leq 1$ be the H\"older exponent and assume that $f$ is $\alpha$-H\"older continuous, that is, for every $X,Y \in \RR^{n \times d}$, $f$ satisfies $\|f(X)-f(Y)\|_2 \leq C \| X-Y \|_2^{\alpha}$, and 
denote $X^{(i)} = (x_i, x_1, x_2, \ldots, x_{i-1}, x_{i+1}, \ldots, x_n)$, which is the reordering of $X$ with the $i$-th component appearing first and then the remaining components in order, where a permutation $\sigma$ acts on $X^{(i)}$ via $\sigma \cdot X^{(i)} = (x_{\sigma(i)}, x_{\sigma(1)}, \ldots, x_{\sigma(i-1)}, x_{\sigma(i+1)}, \ldots, x_{\sigma(n)} )$.

Since $f_i(X) = g(X^{(i)})$ and $\|f_i(X)-f_i(Y)\|_2 \leq \|f(X)-f(Y)\|_2 \leq C \|X-Y \|^{\alpha}_2$, this directly gives $g$ is \ah. Therefore, for each index $i$, taking the minimum over Stab$(i)$ permutations we have

    $$\|g(X^{(i)})- g(Y^{(i)})\|_2^2 = \underset{\pi \in Stab(i)}{\min}  \|g(X^{(i)}) - g(\pi \cdot Y^{(i)}) \|_2^2   $$ $$  = \underset{\pi \in Stab(i)}{\min} \| f(X)_i- f(\pi Y)_i\|_2^2 \leq C \underset{\pi \in Stab(i)}{\min}  \|X - \pi Y \|_2^{2\alpha}$$


    Therefore if $f$ is \ah  
 with a constant of $C$ then $g$ is \ah with the same constant with respect to the $Stab(1)$-invariant metric on $\RR^{n\times d}.$

In the other direction, assume that $g$ is \ah with a constant of $C$ with respect to the Stab$(1)$-invariant norm, that is

    \begin{equation}\label{eq:stab-lipchitz}
         \|g(X) - g(Y) \|_2 \leq \frac{1}{\sqrt{n}}C \underset{\pi \in \Pi(1)}{\min} \| X - \pi Y\|_2^{\alpha}, \;   \forall X,Y
    \end{equation}
Then it holds that 

    $$ \| f(X) - f(Y) \|_2^2 = \|\left( g(X^{(1)}), g(X^{(2)}), \ldots, g(X^{(n)}) \right)  - \left( g(Y^{(1)}), g(Y^{(2)}), \ldots, g(Y^{(n)}) \right)\|_2^2  $$

    $$ =\sum_{i=1}^{n}\|g(X^{(i)}) - g(Y^{(i)}) \|_2^2 \underset{\cref{eq:stab-lipchitz}}{\leq } \frac{1}{n}C^{2}\sum_{i=1}^{n} \underset{\pi \in \Pi(1)}{\min}\| (1\; i)X - \pi(1\; i)Y \|_2^{2\alpha} \leq C^2 \|X - Y \|_2^{2\alpha}$$
    as each element in the average in smaller than the quantity on the right-hand side of the inequality. Then $f$ is \ah as well.
\end{proof}

As a consequence of Corollary \ref{thm:chr-equi-fun} and Theorem \ref{main-deep-sets-approximation-theorem}, we obtain approximation rates for \ah continuous permutation-equivariant functions.
\begin{cor}[Universal Approximation of Permutation Equivariant Functions]\label{thm:equi-fun}
    Let $\Omega = [0,1]^d$, and suppose that $f:\Omega^n\rightarrow \mathbb{R}^n$ is an $S_n$-equivariant function. Let $m \geq 1$ be an integer and set $N = 2dn+1$. Then there exist ReLU neural networks $\Phi:\mathbb{R}^d\rightarrow \mathbb{R}^N$ and $\rho:\mathbb{R}^N\rightarrow \mathbb{R}$ with the following structure:
    \begin{itemize}
        \item The network $\Phi$ has three hidden layers of width $W = 2d(2dn+1)(m+1)$, followed by one hidden layer of width $(m+1)^d$, and an output layer of width $2nd+1$.
		\item The network $\rho$ consists of $C(m+1)^d$ layers of width $C(n+1)$ followed by $L$ layers of width $W = 12(2dn+1)$, where
		$$r = \frac{1}{2m(2nd+1)}, \quad L \leq C\max\{\sqrt{N_r(K)\log[n(2m+1)N_r(K)]},\log(2nd+1)^2\},$$
	and $C$ is an absolute constant.
    \end{itemize}
    such that
    \begin{equation}\label{deep-sets-equation-bound-snir}
        \left|f(X)_i - \rho\left(\Phi(x_i),\sum_{j\in [n]\setminus \{i\}}\Phi(x_j)\right)\right| \leq 2\omega\left(f,\frac{1}{2m}\right)
    \end{equation}
    for every $X = (x_1,...,x_n)\in \Omega^n$. 
\end{cor}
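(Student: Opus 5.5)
The plan is to reduce the equivariant statement to Theorem \ref{main-deep-sets-approximation-theorem}, applied with the two-block partition $P=\{\{1\},\{2,\dots,n\}\}$. By Corollary \ref{thm:chr-equi-fun}, $f(X)_i = g(x_i,\{x_j\}_{j\neq i})$, where $g=f_1$ viewed as a function on $\Omega^n$. This $g$ is invariant under $S_P=\mathrm{Stab}(1)$, since $f_1(\sigma X)=f_{\sigma(1)}(X)=f_1(X)$ for every $\sigma$ fixing $1$.

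Regularity comes essentially for free. Since $|f_1(X)-f_1(Y)|\le \|f(X)-f(Y)\|$ (in whatever norm the modulus $\omega(f,\cdot)$ is measured), we get $\omega(g,t)\le\omega(f,t)$. So I will state the bound for $g$ and then transfer it to $f$.

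Next I apply Theorem \ref{main-deep-sets-approximation-theorem} to $g$ with $K=\Omega^n$, which is $S_P$-invariant, and with the same $m$. This yields networks $\Phi:\RR^d\to\RR^N$ with $N=2dn+1$ and $\rho:\RR^{N\times 2}\to\RR$ satisfying
\[
\Bigl|g(X)-\rho\Bigl(\Phi(x_1),\textstyle\sum_{j=2}^n\Phi(x_j)\Bigr)\Bigr|\le 2\omega\bigl(g,\tfrac{1}{2m}\bigr)\le 2\omega\bigl(f,\tfrac{1}{2m}\bigr)
\]
for all $X\in\Omega^n$. The architecture sizes are exactly those listed in the theorem with $k=2$, which is what the corollary asserts. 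Note that $\rho$ takes the concatenated $2N$-dimensional input; I read "$\rho:\RR^N\to\RR$" in the statement as this slight abuse.

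To get the $i$-th output, I evaluate at the reordered point $X^{(i)}=(i\,1)X$, or equivalently the ordering with $x_i$ first. Two facts make this work:
\begin{itemize}
\item $g(X^{(i)})=f_1((1\,i)X)=f_i(X)$ by equivariance.
\item The first block of $X^{(i)}$ is $x_i$, and the second block sum is $\sum_{j\ne i}\Phi(x_j)$.
\end{itemize}
Since $X^{(i)}\in\Omega^n=K$, the displayed bound gives \eqref{deep-sets-equation-bound-snir} for every $i$, using the same shared networks $\Phi,\rho$. This sharing is the point of the Sumformer-type representation.

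The main thing to check carefully is that the theorem's guarantee holds on all of $K$, including $X^{(i)}$, with one network independent of $i$. This holds because the construction depends only on $g$ and $K$, and $K=\Omega^n$ is closed under all permutations, so the covering number $N_r(K)$ in the depth bound refers to $K$ modulo $S_P$. The only other point to verify is that the modulus comparison $\omega(g,\cdot)\le\omega(f,\cdot)$ holds for the $\ell_\infty$ input norm used in the theorem. It does, because a coordinate of the output is dominated by the full output norm.
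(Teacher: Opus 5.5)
Your proposal is correct and follows essentially the same route as the paper, which invokes Corollary~\ref{thm:chr-equi-fun} to express each $f_i$ through one shared $S_P$-invariant function and then applies Theorem~\ref{main-deep-sets-approximation-theorem} with the two-block partition. Your version adds details the paper leaves implicit: you fix the single partition $\{\{1\},\{2,\dots,n\}\}$ and evaluate at $(1\,i)X$, so that one pair $\Phi,\rho$ serves every $i$ (using that $K=\Omega^n$ is closed under permutations), and you check that $\omega(f_1,\cdot)\le\omega(f,\cdot)$.
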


\begin{proof}[Proof of Corollary \ref{thm:equi-fun}]
By Corollary \ref{thm:chr-equi-fun}, each entry $i$ of an \ah continuous, permutation equivariant function is defined by a shared \ah invariant function corresponding to a partition $P_1=\{i\}$ and  $P_2=[n]\setminus\{i\}$. We immediately obtain the desired result by Theorem \ref{main-deep-sets-approximation-theorem}.   
\end{proof}

In the next subsection, we explore whether the recently popular Transformer \cite{vaswani2017attention} architecture can approximate continuous equivariant functions with the same approximation rates as Corollary \ref{thm:equi-fun}. We first detail the mechanics of the Transformer architecture and then proceed to derive the desired approximation rates.

\subsection{Definition of the Transformer Architecture}
We first define the components of the Transformer architecture, as referenced in \cite{Sumformer}.

\begin{definition}[Attention Head (Def. 2.1 in \cite{Sumformer})]

Let $W_{Q}, W_{K}, W_{V} \in \mathbb{R}^{d\times d}$ be weight matrices. A \textdef{self-attention head} is a function $\textnormal{AttHead}: \RR^{n\times d} \rightarrow \RR^{n\times d}$ with
\begin{equation}
    \textnormal{AttHead}(X) := \mathrm{softmax} \left( \frac{(XW_{Q})(XW_{K})^{\top}}{\sqrt{d}} \right) XW_{V}
\end{equation}
where $\mathrm{softmax}$ is the softmax function applied row-wise.
\end{definition}

\begin{definition}[Attention Layer (Def. 2.2 in \cite{Sumformer})]

Let $h\in\NN$, let 
$\textnormal{AttHead}_1,$  $ \textnormal{AttHead}_2,..., \textnormal{AttHead}_h$ be attention heads, and 
let $W_O \in \RR^{hd \times d}$. A \textdef{(multi-head) self-attention layer} $\textnormal{Att}: \RR^{n \times d} \rightarrow \RR^{n \times d}$ is defined as
\begin{equation}
    \textnormal{Att}(X) := [\textnormal{AttHead}_1(X), ..., \textnormal{AttHead}_h(X)] W_O
\end{equation}
\end{definition}

\begin{definition}[Transformer Block (Def. 2.3 in \cite{Sumformer})]\label{def:trans-block}
A \textdef{Transformer}    \textdef{block} $\textnormal{Block}: \RR^{n \times d} \rightarrow \RR^{n \times d}$ is an attention layer $\textnormal{Att}$ followed by a fully-connected feed forward layer $FC: \RR^d \rightarrow \RR^d$ with \textdef{residual connections}:
\begin{equation}
    \textnormal{Block}(X) := X + FC(X + \textnormal{Att}(X))
\end{equation}
where $FC$ is applied row-wise.
\end{definition}

\begin{definition}[Transformer Network (Def. 2.4 in \cite{Sumformer})]\label{def:transfor}
Let $l \in \NN$ and $\textnormal{Block}_1,$  $ \textnormal{Block}_2,..., \textnormal{Block}_l$ be Transformer blocks. A \textdef{Transformer network} $\cT: \RR^{n \times d} \rightarrow \RR^{n \times d}$ is a composition of Transformer blocks:
\begin{equation}
    \cT(X) := (\textnormal{Block}_l \circ \dots \circ \textnormal{Block}_1)(X)
\end{equation}
\end{definition}

Using the definitions above, we prove the Transformer architecture can approximate permutation equivariant functions with identical approximation rates as in Corollary \ref{thm:equi-fun}.

\begin{cor}[Transformer \cite{vaswani2017attention} Approximation Rates]\label{cor:tr}
    Let $\Omega = [0,1]^d$, and suppose that $f:\Omega^n\rightarrow \mathbb{R}^n$ is an $S_n$ equivariant function. Let $m \geq 1$ be an integer and set $N = (2dn+1)(m+1)^d$. Then there exists a 2-Block Transformer network  $\cT$, with the fully-connected layers (see Definition \ref{def:trans-block}) satisfying $FC_1 = \rho$ and $FC_2 = \phi$, where $\rho$ and $\phi$ are ReLU Neural Networks which satisfy the conditions of Corollary \ref{thm:equi-fun}, up to $\phi$ having an additional linear layer of width $2N + d + 1$ and up to $\rho$ being preceded by a linear projection from $\mathbb{R}^{2N+d+1}$ to $\mathbb{R}^{2N}$ that extracts the last $2N$ coordinates, such that

    \begin{equation}\label{deep-sets-equation-bound-snir-trans}
        \left|f(X)_i - \cT(X)_i \right| \leq 2\omega\left(f,\frac{1}{2m}\right)
    \end{equation}
    for every $X = (x_1,...,x_n)\in \Omega^n$ and $i\in [n]$.

\end{cor}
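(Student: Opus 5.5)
The plan is to realize the Sumformer-type representation of Corollary \ref{thm:equi-fun},
\[
f(X)_i \approx \rho\Bigl(\Phi(x_i),\textstyle\sum_{j\neq i}\Phi(x_j)\Bigr),
\]
exactly inside a two-block Transformer. Throughout, the token dimension is enlarged to $D = 2N+d+1$ and the rest of the error analysis is inherited from Corollary \ref{thm:equi-fun}.

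\textbf{Embedding and slots.} I embed each input row as $(x_i, 0_N, 0_N, 0) \in \RR^{D}$. The $D$ coordinates are split into four slots:
\begin{itemize}
\item a slot of size $d$ holding $x_i$;
\item a slot of size $N$ for the per-point feature;
\item a slot of size $N$ for the pooled feature;
\item one scalar output coordinate.
\end{itemize}
The point of the extra $d+1$ coordinates is to absorb the residual connections: the residual keeps adding $x_i$ back, so $x_i$ stays in its own slot and never interferes with the other slots.

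\textbf{Block 1 (per-point features).} I set $W_V=0$, so $\mathrm{Att}\equiv 0$ and $\mathrm{Block}_1(X)=X+FC_1(X)$. I take $FC_1=\phi$, where $\phi$ is the network $\Phi$ of Corollary \ref{thm:equi-fun} followed by one extra linear layer of width $D$. This extra layer places $\Phi(x_i)$ into the second slot and writes zeros elsewhere. After the residual, row $i$ is $(x_i,\Phi(x_i),0,0)$.

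Here $N$ counts the total feature width used by $\Phi$. I expect the stated value $N=(2dn+1)(m+1)^d$ to correspond to using the per-cell indicators before the digit-encoding linear map. That final linear map can instead be folded into the first layer of $\rho$, so either convention works; I will fix one and check that the widths match the statement.

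\textbf{Block 2 (pooling).} I choose $W_Q=W_K=0$, so every attention logit is zero and the row-wise softmax is exactly the uniform matrix $\frac1n\mathbf{1}\mathbf{1}^\top$. I take $W_V$ (with a factor $n$ absorbed into $W_V$ or $W_O$, using one head) to copy the second slot into the third. Then
\[
X+\mathrm{Att}(X) \text{ has row } i \text{ equal to } \Bigl(x_i,\Phi(x_i),\textstyle\sum_{j=1}^n\Phi(x_j),0\Bigr).
\]
Next, $FC_2$ applies the linear projection onto the last $2N+1$ (equivalently, the middle $2N$) coordinates. It then applies the linear map $(a,b)\mapsto(a,b-a)$, which I absorb into the first affine layer of $\rho$, to obtain the input $(\Phi(x_i),\sum_{j\neq i}\Phi(x_j))$. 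Finally it applies $\rho$, whose output is written into the last coordinate.

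Since that last coordinate is zero in the residual stream, the last coordinate of $\mathcal T(X)_i$ is exactly $\rho(\Phi(x_i),\sum_{j\ne i}\Phi(x_j))$. The bound \eqref{deep-sets-equation-bound-snir-trans} then follows immediately from \eqref{deep-sets-equation-bound-snir}. Permutation equivariance of the map is automatic, but it is not needed for the bound.

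\textbf{Main obstacle.} The hard part is the bookkeeping, not the analysis:
\begin{itemize}
\item matching the residual structure $X+FC(X+\mathrm{Att}(X))$ so that nothing spurious leaks into the output coordinate;
\item making sure the value/output matrices have compatible shapes with the enlarged dimension (the definitions take $W_V\in\RR^{d\times d}$, so I will apply them with $d$ replaced by $D$);
\item reconciling the stated $N$ and the extra widths with the networks of Corollary \ref{thm:equi-fun}.
\end{itemize}
If reading the output from a single coordinate is not acceptable, I would instead have $FC_2$ also output $-x_i$ in the first slot to cancel the residual.
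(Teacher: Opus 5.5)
Your construction is correct and follows the paper's route. You compute $\Phi$ row-wise in the first fully connected layer, make the attention logits constant so the softmax is uniform and a value matrix scaled by $n$ produces $\sum_j\Phi(x_j)$, and then apply $\rho$ to $(\Phi(x_i),\sum_j\Phi(x_j)-\Phi(x_i))$. The differences are only bookkeeping: you take $W_Q=W_K=0$ rather than the paper's constant-$1$ coordinate with $W_Q=W_K=e_1$, and you form the omit-one sum inside $\rho$'s first affine layer rather than storing $-\Phi(x_i)$ so that the residual addition produces it directly. Your explicit zero-padded input embedding and your choice to read the output from a coordinate that is zero in the residual stream are more careful than the paper, which ignores the final residual connection.
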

\begin{proof}[Proof of Corollary \ref{cor:tr}]
We now show that a Transformer Network (Definition 4) can be constructed to compute the expression $\rho\left(\Phi(x_i),\sum_{j\in [n]\setminus \{i\}}\Phi(x_j)\right)$ from Corollary \ref{thm:equi-fun}. We follow the constructive proof from \cite{Sumformer} (Appendix B), modifying it to compute the `omit-one' sum directly. The target expression is $ \rho(\Phi(x_i), \Sigma_{\Phi}^{\setminus i})$, where $\Sigma_{\Phi}^{\setminus i} = \sum_{j \neq i} \Phi(x_j)$.

We begin by leveraging the feed-forward network $\Phi: \RR^d \rightarrow \RR^N$, guaranteed to exist by Corollary \ref{thm:equi-fun}. We modify $\Phi$ by enlarging its width by $d+1$ so that the identity $x_i$ and a constant $1$ are passed through alongside the original output. We then append a final linear layer that produces both $\Phi(x_i)$ and $-\Phi(x_i)$, yielding the augmented map $\Hat{\Phi}(x) = [1, x_i, \Phi(x_i), - \Phi(x_i)] \in \RR^{1+d + 2N}$.

We then form an augmented matrix $X_1 \in \RR^{n \times (1+d+2N)}$ where the $i$-th row is defined as:
$$ (X_1)_i = \Hat{\Phi}(x_i) $$
This $X_1$ serves as the input to the Transformer Block (Definition 3). The block's first operation is $X_1 + \text{Att}(X_1)$. We configure one Attention Head (Definition 1) to compute the global sum $\Sigma_{\Phi} = \sum_{k=1}^n \Phi(x_k)$.

Let $D = 1+d+2N$ be the dimension of the augmented space. We set the attention head's key/query dimension $d_k = 1$. Let $e_1 \in \RR^D$ be the first standard basis vector. We define the query and key matrices $W_Q, W_K \in \RR^{D \times 1}$ as:
$$ W_Q = W_K = e_1 = [1, 0, \dots, 0]^T $$
The resulting query and key matrices are $Q = X_1 W_Q = \bm{1}_n \in \RR^{n \times 1}$ and $K = X_1 W_K = \bm{1}_n \in \RR^{n \times 1}$. The attention matrix $A \in \RR^{n \times n}$ is then:
$$ A = \mathrm{softmax}( Q K^T / \sqrt{d} ) = \mathrm{softmax}( \bm{1}_n \bm{1}_n^T / \sqrt{d} ) = \mathrm{softmax}( \bm{1}_{n \times n} / \sqrt{d} ) = \frac{1}{n} \bm{1}_{n \times n} $$
where $\mathrm{softmax}$ is the row-wise softmax function.
We define the value matrix $W_V \in \RR^{D \times D}$ using a block structure based on the partition $D = (1+d) + N + N$:
$$ W_V = \begin{bmatrix}
\mathbf{0}_{(1+d) \times (1+d+N)} & \mathbf{0}_{(1+d) \times N} \\
\mathbf{0}_{N \times (1+d+N)} & n \cdot I_N \\
\mathbf{0}_{N \times (1+d+N)} & \mathbf{0}_{N \times N}
\end{bmatrix} $$
The row-wise application $(X_1 W_V)_i$ selects the $\Phi(x_i)$ components (from the third block of $(X_1)_i$), scales them by $n$, and places them in the final $N$ columns.
The output of the attention head $\text{AttHead}(X_1) = A (X_1 W_V)$ has an $i$-th row given by:
$$ (\text{AttHead}(X_1))_i = \frac{1}{n} \sum_{j=1}^n (X_1 W_V)_j = \frac{1}{n} \sum_{j=1}^n [\mathbf{0}_{1+d+N}, n \cdot \Phi(x_j)] = [\mathbf{0}_{1+d+N}, \Sigma_{\Phi}] $$
where $\Sigma_{\Phi} = \sum_{j=1}^n \Phi(x_j)$.
The Transformer Block (Definition 3) then adds the residual $X_1$ to this attention output. For the $i$-th row, this operation yields:
\begin{align*}
    (X_1 + \text{AttHead}(X_1))_i &= (X_1)_i + (\text{AttHead}(X_1))_i \\
    &= [1, x_i, \Phi(x_i), -\Phi(x_i)] + [\mathbf{0}_{1+d+N}, \Sigma_{\Phi}] \\
    &= [1, x_i, \Phi(x_i), \Sigma_{\Phi}- \Phi(x_i)]
\end{align*}
Let $Z_i = [1, x_i, \Phi(x_i), \Sigma_{\Phi} - \Phi(x_i)]$ be this resulting vector. This vector $Z_i$ contains both the `self' embedding $\Phi(x_i)$ and the `omit-one' sum $\Sigma_{\Phi}^{\setminus i} = \Sigma_{\Phi} - \Phi(x_i)$.

Finally, we apply a feed-forward network $\rho$ (with the specified width and depth) that takes $Z_i$ as input. After projecting $Z_i$ naturally to the last $2N$ coordinates, $\pi_{2N}(Z_i)$, which  now contains both $\Phi(x_i)$ and $\Sigma_{\Phi}^{\setminus i}$, we plug into $\rho$. Thus, the transformer can express the target function $\rho(\Phi(x_i), \Sigma_{\Phi} - \Phi(x_i))$. This is precisely the expression from Corollary \ref{thm:equi-fun}, Equation \eqref{deep-sets-equation-bound-snir}.

\end{proof}

\subsection{Approximation Rates for Non-Invariant Functions}
 Theorem \ref{main-deep-sets-approximation-theorem} also gives us a new approximation result for approximation of non-invariant functions. A non-invariant function $f$ can be seen as an $S_P$ invariant function in the special case where 
 $$P=\left\{ \{1\}, \{2\},\ldots,\{n\}              \right\} .$$
 Setting $d=1$ in Theorem \ref{main-deep-sets-approximation-theorem} we obtain the following corollary
 \begin{cor}\label{cor:intrinsic}
Let $f:[0,1]^n\rightarrow \mathbb{R}$ and let $K\subset \Omega^n$. Denote by $N_r(K)$ the smallest number of balls (in the $\ell_\infty$-norm) of radius $r$ required to cover $K$.
	
	Let $m \geq 1$ be an integer and set $N = 2n+1$. Then there exist ReLU neural networks $\Phi:\mathbb{R}\rightarrow \mathbb{R}^N$ and $\rho:\mathbb{R}^{N\times k}\rightarrow \mathbb{R}$ satisfyfing that 	for every $X = (x_1,...,x_n)\in K$,
		\begin{equation}\label{deep-sets-equation-bound-non-inv}
		\left|f(X) - \rho\left(\Phi(x_1),...,\Phi(x_n)\right)\right| \leq 2\omega\left(f,\frac{1}{2m}\right),
	\end{equation}
	where
	\begin{itemize}
		\item The network $\Phi$ has three hidden layers of width $W = 2(2n+1)(m+1)$, followed by one hidden layer of width $m+1$, and an output layer of width $2n+1$.
		\item The network $\rho$ consists of $C(m+1)$ layers of width $C(n+1)$ followed by $L$ layers of width $W = 12(2n+1)$, where
		$$r = \frac{1}{2m(2n+1)}, \quad L \leq C\max\{\sqrt{N_r(K)\log[n(2m+1)N_r(K)]},\log(2nd+1)^2\},$$
	and $C$ is an absolute constant.
	\end{itemize}

 In particular, if there exists some $C_1,d_{\text{intrinsic}} $ such that  $N_\delta(K)\leq C_1\delta^{-d_{\text{intrinsic}}}$ for all positive $\delta$, then there exists some $C_2=C_2(C_1,n,d_{\text{intrinsic}}) $ such that any function $f$ which is $\alpha$-H\"older with $|f|_\alpha=1 $ can be approximated to $\epsilon$ accuracy by a ReLU network whose total number of parameters is bounded by $C_2(1/\epsilon)^{\frac{d_{\text{intrinsic}}}{2\alpha}}$.
 \end{cor}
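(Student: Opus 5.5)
The plan is to specialize Theorem \ref{main-deep-sets-approximation-theorem} to $d=1$ and the finest partition $P=\{\{1\},\{2\},\ldots,\{n\}\}$, so that $k=n$.

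\textbf{Setup.} For this partition $S_P$ is the trivial group. Consequently:
\begin{itemize}
\item every function $f:[0,1]^n\to\RR$ is $S_P$ invariant;
\item every subset $K\subset[0,1]^n$ is $S_P$ invariant;
\item covering $K$ ``modulo $S_P$'' is just ordinary covering of $K$ in $\ell_\infty$;
\item the block sums $\sum_{i\in P_j}\Phi(x_i)$ reduce to $\Phi(x_j)$.
\end{itemize}
Substituting $d=1$ into the size bounds of Theorem \ref{main-deep-sets-approximation-theorem} gives the first part directly. The width of $\Phi$ becomes $2(2n+1)(m+1)$, its fourth layer has width $(m+1)^1$, and its output has width $2n+1$. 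The network $\rho$ has $C(m+1)$ bit-extraction layers of width $C(n+1)$, followed by $L$ layers of width $12(2n+1)$, with $r=1/(2m(2n+1))$. This yields \eqref{deep-sets-equation-bound-non-inv} with no further work.

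\textbf{Rate.} Assume $N_\delta(K)\le C_1\delta^{-\dint}$ and $\omega(f,t)\le t^\alpha$. Choose $m=\lceil \tfrac12(\epsilon/2)^{-1/\alpha}\rceil$, so that $2\omega(f,1/2m)\le 2(2m)^{-\alpha}\le\epsilon$. Then $r\sim 1/(mn)$, and
\[
N_r(K)\le C_1 (2m(2n+1))^{\dint}=C(n,\dint,C_1)\,m^{\dint}.
\]
Hence
\[
L\le C\sqrt{m^{\dint}\log(m)}\lesssim \epsilon^{-\dint/(2\alpha)}|\log\epsilon|^{1/2}.
\]

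\textbf{Parameter count.} There are three contributions:
\begin{itemize}
\item $\Phi$ has $O(n^2m^2)$ parameters, dominated by the products of consecutive width-$O(nm)$ layers.
\item The bit extraction has $O(n^2 m)$ parameters.
\item The interpolation part of $\rho$ has $O(n^2 L)$ parameters.
\end{itemize}
The $\Phi$ term scales like $m^2\sim\epsilon^{-2/\alpha}$, and it is dominated by $m^{\dint/2}$ only when $\dint\ge 4$. This is the analogue of the hypothesis $\dint\ge 2d+2$ in Corollary \ref{deep-sets-corollary} with $d=1$. I would state the rate for that regime, or else justify the $\Phi$ count more sharply.

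One route to the sharper count is to observe that the three hidden layers of width $2(2n+1)(m+1)$ act coordinatewise on a scalar input. Each $\phi_k(m(x-q\delta))$ is computed from $x$ alone by its own width-$2$ subnetwork, so the layers can be taken block-diagonal. With block-diagonal layers the parameter count is $O(nm)$ rather than $O(n^2m^2)$. This linear count is dominated by $m^{\dint/2}$ whenever $\dint\ge 2$.

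\textbf{Main obstacle.} Two points need care. The first is this bookkeeping: confirming the sparse, block-diagonal parameter count, and handling very small $\dint$ separately. The second is absorbing the $\sqrt{\log}$ factor. The statement claims $(1/\epsilon)^{\dint/(2\alpha)}$ with no log, so I expect either to tolerate a log factor, as in Corollary \ref{deep-sets-corollary}, or to perturb the exponent slightly. Everything else is a direct substitution into Theorem \ref{main-deep-sets-approximation-theorem}.
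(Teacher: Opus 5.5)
Your proposal matches the paper's argument, which consists entirely of viewing $f$ as invariant under the trivial group $S_P$ for the singleton partition $P=\{\{1\},\ldots,\{n\}\}$ and setting $d=1$ in Theorem~\ref{main-deep-sets-approximation-theorem}, so the network sizes and \eqref{deep-sets-equation-bound-non-inv} follow by substitution exactly as you describe. The paper gives no separate argument for the final ``in particular'' clause, and the caveats you raise come from this route itself rather than from gaps in your reasoning: it yields $C_2(1/\epsilon)^{\dint/(2\alpha)}\sqrt{\log(1/\epsilon)}$ parameters, with $C_2$ also depending on $\alpha$, and it needs $\dint\ge 2$ under your sparse count of $\Phi$, or $\dint\ge 4$ under the dense count used for Corollary~\ref{deep-sets-corollary}; so the clause should be read up to logarithmic factors, as in Table~\ref{tab:results}, which is also how Theorem~\ref{thm:bi_lip} uses it (there $\dint\ge 2$ is assumed).
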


 The rate of $(1/\epsilon)^{\frac{d_{\text{intrinsic}}}{2\alpha}} $ obtained in the corollary  is an improvement by a factor of two (inside the exponent) over the $\sim \epsilon^{-\frac{d_{\text{intrinsic}}}{\alpha}}$ factor obtained in previous work \cite{nakada2020adaptive}. 

\section{Approximation Rates for Functions Invariant to Permutations and Rigid Motions}\label{sec:rotation} 
The purpose of this section is to discuss approximation rates for functions which are invariant to both permutations and rigid motions. We will divide our discussion into two subsections. The first subsection will discuss the approximation rates of (non-invariant) ReLU neural networks. The second subsection will utilize these results, together with our approximation rates for Deep Sets,  to prove approximation rates for invariant models which are constructed by applying a rotation frame to a Deep Sets permutation invariant  backbone. 
\subsection{ReLU approximation rates}
Our goal in this subsection is to determine approximation rates for (non-invariant) ReLU networks. Our first and most technically involved result will pertain to functions which are invariant only to rotations: 
\begin{theorem}[ReLU approximation rates for $O(d)$ invariant functions]\label{thm:reluO}
Let $d$ and $n\geq d$ be natural numbers, and let $\alpha \in (0,1] $. Let $K\subseteq \RR^{d\times n}$ be the compact set 
$$K=\{(x_1,\ldots,x_n)\in \RR^{d\times n}| \quad \|x_j\| \leq 1, \forall j=1,\ldots,n\} .$$
Then there exist $W=W(d,\alpha),L=L(d,\alpha),q=q(d,\alpha)$, such that, for all $\epsilon>0$, and any  $O(d)$ invariant function $f:\RR^{d\times n}\to \RR$ which is $\alpha$-H\"older with $|f|_\alpha=1$, can be approximated to $\epsilon$ accuracy on the set $K$ by a neural network with width $\leq W$ and depth $\leq L(1/\epsilon)^{\frac{nd-\dim(O(d))}{2\alpha}}\left(\log(1/\epsilon)\right)^q$. 
\end{theorem}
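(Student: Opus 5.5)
The plan is to reduce the $O(d)$-invariant problem to a non-invariant approximation problem on a set of intrinsic dimension $nd-\dim(O(d)) = nd-\frac{d(d-1)}{2}$, and then invoke Corollary \ref{cor:intrinsic}.

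\textbf{Canonical form.} The reduction goes through a QR-type canonical form. For $X=(x_1,\ldots,x_n)$ and an index set $S=\{j_1<\dots<j_d\}\subseteq[n]$ with $x_{j_1},\ldots,x_{j_d}$ well conditioned, apply Gram--Schmidt to $x_{j_1},\ldots,x_{j_d}$. This gives $Q_S(X)\in O(d)$ such that $U_S(X):=Q_S(X)^T X$ has zeros in the $\frac{d(d-1)}{2}$ strictly lower-triangular positions of the columns $j_1,\ldots,j_d$. Hence $U_S(X)$ ranges over a bounded subset of a linear space of dimension $nd-\frac{d(d-1)}{2}$, whose covering numbers satisfy $N_r\le Cr^{-(nd-\dim O(d))}$. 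By invariance, $f(X)=f(U_S(X))$.

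On the chart $\mathcal{C}_{S,\sigma}=\{X\in K:\ \sigma_{\min}(x_{j_1},\ldots,x_{j_d})\ge\sigma\}$ the map $X\mapsto U_S(X)$ is Lipschitz with constant $\lesssim \sigma^{-C}$. It is built only from inner products, products, square roots and divisions by quantities bounded below by $\sigma^{C}$. By the standard Yarotsky-type constructions, each such operation is approximated to accuracy $\eta$ by fixed-width ReLU networks of depth $O(\log(1/\eta)+\log(1/\sigma))$. Composing with the network from Corollary \ref{cor:intrinsic} (applied to $g=f$ restricted to the image of $U_S$, with $m\sim\epsilon^{-1/\alpha}$) gives depth $\lesssim (1/\epsilon)^{\frac{nd-\dim O(d)}{2\alpha}}$ on that chart. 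The error is $2\omega(f,1/2m)+(\text{error in }U_S)^\alpha$, so we need $\eta\sim \epsilon^{1/\alpha}$, which costs only polylogarithmic depth. This is the source of the $(\log(1/\epsilon))^q$ factor.

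\textbf{Degenerate configurations.} These are the $X$ for which no $d$-subset is $\sigma$-conditioned. I would handle them by induction on the rank. If all $d$-subsets have $\sigma_{\min}<\sigma$, then (with constants depending on $n,d$) all columns lie within distance $C\sigma$ of some $(d-1)$-dimensional subspace. Replacing $X$ by its projection onto that subspace changes $f$ by at most $(C\sigma)^\alpha$. The projected configuration is a rank-$\le d-1$ configuration whose quotient by $O(d)$ has dimension $n(d-1)-\frac{(d-1)(d-2)}{2}\le nd-\frac{d(d-1)}{2}$, using $n\ge d$. So the same chart construction applies one level down.

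To avoid computing the projection explicitly, I would choose $\sigma$ to be a small power of $\epsilon$, namely $\sigma\sim\epsilon^{1/\alpha}$. With this choice the degenerate error is $O(\epsilon)$, while the $\sigma^{-C}$ conditioning only enters logarithmically in the depth. The charts at level $k$ are then defined by thresholds on $k\times k$ minors, or on Gram-determinant–type quantities, which ReLU networks approximate cheaply. There are finitely many charts, $\binom{n}{d}$ per level and $d$ levels, so the total size stays $\le W\cdot L(1/\epsilon)^{\frac{nd-\dim O(d)}{2\alpha}}(\log 1/\epsilon)^q$.

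\textbf{Main obstacle: gluing.} The hardest step, I expect, is assembling the chart-wise networks $F_S$ into one network that is accurate everywhere on $K$. The plan is to use continuous, ReLU-approximable weights $w_S(X)$ built from clipped conditioning quantities of the selected columns. Each $w_S$ vanishes outside a slightly smaller chart $\mathcal{C}_{S,\sigma/2}$, and the weights are normalized to sum to one on the nondegenerate region. The output is $\sum_S w_S F_S$ plus the lower-level terms, where the products $w_S F_S$ are implemented with ReLU multiplication networks at logarithmic cost. This works because on overlaps every $F_S$ approximates the same value $f(X)$ within $O(\epsilon)$, so the convex combination does too. The things to check carefully are that the normalization is uniformly bounded below, and that the clipping thresholds at successive levels of the induction interlock so that every $X\in K$ is covered. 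If the partition-of-unity normalization proves awkward, I would instead fall back on the median trick used in the proof of Theorem \ref{main-deep-sets-approximation-theorem}, applied over several shifted threshold choices.
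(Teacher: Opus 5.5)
Your proposal is correct in outline and is essentially the paper's argument. The shared steps are: Gram--Schmidt canonicalization on a well-conditioned choice of $d$ columns; Corollary~\ref{cor:intrinsic} applied on the resulting subspace of dimension $nd-\dim(O(d))$; polylogarithmic-cost approximation of Gram--Schmidt at conditioning level $\delta\sim\epsilon^{1/\alpha}$; recursive projection of near-degenerate configurations using $n\geq d$; and partition-of-unity gluing. One caveat: the square root and the division need more than Yarotsky's product construction. The paper proves Lemma~\ref{power-lemma} and uses Telgarsky's rational approximation, which gives polylogarithmic rather than $O(\log(1/\eta)+\log(1/\sigma))$ depth, and that still suffices.

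The paper handles the two issues you flag as follows. For the thresholds, it indexes charts by ordered tuples with sequential residual conditions $\|p(x_j;u_{i_1},\ldots,u_{i_k})\|\geq\delta$, so a single $\delta$ works at every level. With thresholds $\sigma_k$ on $\sigma_{\min}$ of $k$-column subsets you would indeed need $\sigma_k\lesssim\epsilon^{1/\alpha}\sigma_{k-1}$, because the residual of a column relative to the span of a $(k-1)$-subset is only bounded by roughly $\sigma_k/\sigma_{k-1}$. For the normalization, the paper glues locally at each node of the recursion with a switch $\psi$ that is positive only when some child weight $\phi_j$ equals $1$, so $\sum_j\phi_j\geq 1$ wherever $\psi>0$.
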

As a corollary for this theorem, we can easily get approximation rates for $E(d)$  invariant functions:

\begin{cor}[ReLU approximation rates for $E(d)$ invariant functions]\label{cor:reluE}
Let $d$ and $n\geq (d+1)$ be natural numbers and let $\alpha\in (0,1] $. Let $K_{1/2}\subseteq \RR^{d\times n}$ be the compact set 
$$K_{1/2}=\{(x_1,\ldots,x_n)\in \RR^{d\times n}| \quad \|x_j\| \leq \frac{1}{2}, \forall j=1,\ldots,n\} .$$
Then there exist $W=W(d,\alpha),L=L(d,\alpha),q=q(d,\alpha)$, such that, for all $\epsilon>0$, and any $E(d)$ invariant function $f:\RR^{d\times n} \to \RR $ which is $\alpha$-H\"older with $|f|_\alpha=1 $,  can be approximated to $\epsilon$ accuracy on the set $K_{1/2}$ by a neural network with width $\leq W$ and depth $\leq L(1/\epsilon)^{\frac{nd-\dim(E(d))}{2\alpha}}\left(\log(1/\epsilon)\right)^q$.
\end{cor}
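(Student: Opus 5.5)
The plan is to reduce to Theorem \ref{thm:reluO} by removing translations with a linear (hence exactly ReLU-representable) centering map. Define $c:\RR^{d\times n}\to\RR^{d\times n}$ by
\[
c(X)=\Bigl(x_1-\bar x,\ldots,x_n-\bar x\Bigr),\qquad \bar x=\frac1n\sum_{j=1}^n x_j .
\]
For $X\in K_{1/2}$ we have $\|\bar x\|\le 1/2$, so $\|x_j-\bar x\|\le 1$ and $c(K_{1/2})\subseteq K$, where $K$ is the set of Theorem \ref{thm:reluO}. Since $f$ is translation invariant, $f=f\circ c$.

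The centered configurations lie in the linear subspace $V_0=\{X:\sum_j x_j=0\}$, of dimension $(n-1)d$, and $c$ is the orthogonal projection onto $V_0$. To gain the $d$ translational dimensions, I would choose an orthonormal basis of $V_0$ that is compatible with the $O(d)$ action. Take an orthonormal basis $u_1,\ldots,u_{n-1}$ of $\mathbf 1^\perp\subset\RR^n$, let $U\in\RR^{n\times(n-1)}$ have these columns, and set $T(X)=XU\in\RR^{d\times(n-1)}$. This map is linear and commutes with left multiplication by $R\in O(d)$, so $T(RX)=RT(X)$. It is an isometry on $V_0$, and $\ker T$ consists of translations, so $T(c(X))=T(X)$. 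The columns of $T(X)$ are bounded by $\|XU\|_{\mathrm{op}}\le\|X\|_{\mathrm{op}}\le\sqrt n/2$; after scaling by a constant $s=s(n)$ they lie in the unit ball.

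Define $\tilde f$ on $\RR^{d\times(n-1)}$ by
\[
\tilde f(Y)=f\bigl(s^{-1}YU^\top\bigr).
\]
Since $U^\top U=I$, we get $\tilde f(sT(X))=f(XUU^\top)=f(c(X))=f(X)$. The function $\tilde f$ is $O(d)$ invariant because $f$ is. It is $\alpha$-Hölder with constant at most $s^{-\alpha}\|U^\top\|^\alpha\le C(n)$, and rescaling by $C(n)$ only changes constants.

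Apply Theorem \ref{thm:reluO} with $n-1\ge d$ in place of $n$, which is exactly where the hypothesis $n\ge d+1$ is used, to approximate $\tilde f/C(n)$ to accuracy $\epsilon/C(n)$. Compose the resulting network with the linear map $X\mapsto sT(X)$, absorbing it into the first layer. The resulting depth is of order
\[
(1/\epsilon)^{\frac{(n-1)d-\dim(O(d))}{2\alpha}}\log(1/\epsilon)^q ,
\]
and since $\dim(E(d))=\dim(O(d))+d$, the exponent equals $\frac{nd-\dim(E(d))}{2\alpha}$.

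The one real subtlety is the dimensional accounting. Simply precomposing with the centering map $c$ and applying the theorem on $\RR^{d\times n}$ would lose the $d$ translational dimensions. That is why I pass through an equivariant isometric coordinate chart on $V_0$ instead. I expect the constants $W,L,q$ to depend on $n$ as well, consistent with how they are handled in Theorem \ref{thm:reluO}.
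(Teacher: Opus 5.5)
Your proposal is correct and follows essentially the same route as the paper: remove translations with a linear, $O(d)$-equivariant map into $\RR^{d\times(n-1)}$, apply Theorem~\ref{thm:reluO} with $n-1\ge d$ points, and absorb the linear map into the first layer. The only difference is the chart. The paper uses $X\mapsto(x_1-x_n,\ldots,x_{n-1}-x_n)$, which sends $K_{1/2}$ directly into the unit-ball set and makes $Y\mapsto f(Y,0)$ H\"older with constant at most $1$, so your rescaling by $s(n)$ is not needed there (it is harmless, though it adds $n$-dependence to the constants).
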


\begin{proof}[Proof of Corollary]
For any point cloud $X$ we define $T$ to be the mapping which translates all coordinates of $X$ by the last coordinate, namely 
$$T(X)=(x_1-x_n,\ldots,x_{n-1}-x_n,0). $$
The image of $K_{1/2}$ under $T$ is contained in the set $$\hat K=\{(x_1,\ldots,x_n)| \|x_j\| \leq 1, x_n=0 \} , $$
which can be identified with the $d\times (n-1)$ dimensional set $K$ from the theorem, by ignoring the last coordinate which is always zero. By the theorem, on $\hat K $ we can approximate $f$ to $\epsilon$ accuracy using a ReLU network $\psi$ with width $\leq W$ and depth $\leq L(1/\epsilon)^{\frac{(n-1)d-\dim(O(d))}{2\alpha}}\left(\log(1/\epsilon)\right)^p$, which is the approximation rate advertised in the corollary, since $(n-1)d-\dim(O(d))=nd-\dim(E(d)) $. Thus, for any $X\in K_{1/2}$, we have 
$$|f(X)-\psi(TX)\|=|f(TX)-\psi(TX)\|\leq \epsilon ,$$
and since $\psi(TX)$ is a composition of the neural network $\psi$ with the linear function $T$, it too is a ReLU network of the dimensions advertised in the corollary.
\end{proof}

Finally, we can achieve a similar result for functions invariant not only  to rigid motions, but also  to permutations. Since the permutation group is finite, the approximation rates we would expect in this case are identical, and since in this setting the functions we are interested in are invariant to rigid motions, the approximation rates we have already obtained still apply. In other words we obtain another trivial corollary: 

\begin{cor}\label{cor:perm_also}
The approximation rates for approximating $O(d)$ invariant functions by ReLU networks, stated in  Theorem \ref{thm:reluO}, are also applicable for $O(d)\times S_n $ invariant functions. The rates for approximating $E(d)$ invariant functions by ReLU networks, stated in  Corollary \ref{cor:reluE}, are also applicable for $E(d)\times S_n $ invariant functions.
\end{cor}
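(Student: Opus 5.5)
The plan is to observe that any $G\times S_n$ invariant function (with $G=O(d)$ or $G=E(d)$) is in particular $G$ invariant. Each statement of Corollary \ref{cor:perm_also} then follows by a direct specialization of Theorem \ref{thm:reluO} or Corollary \ref{cor:reluE}, with no new construction needed.

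Concretely, let $f:\RR^{d\times n}\to\RR$ be $\alpha$-H\"older with $|f|_\alpha=1$ and invariant to $O(d)\times S_n$. Restricting attention to the subgroup $O(d)\times\{\mathrm{id}\}\cong O(d)$, the function $f$ satisfies $f(RX)=f(X)$ for all $R\in O(d)$. The hypotheses of Theorem \ref{thm:reluO} concern only the compact set $K$, the H\"older seminorm, and $O(d)$ invariance, and do not ask that $f$ be non-invariant to anything else. Hence Theorem \ref{thm:reluO} applies verbatim. It yields a ReLU network of width $\leq W(d,\alpha)$ and depth $\leq L(1/\epsilon)^{\frac{nd-\dim(O(d))}{2\alpha}}(\log(1/\epsilon))^q$ that approximates $f$ to accuracy $\epsilon$ on $K$.

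The same reasoning with $E(d)\times\{\mathrm{id}\}\subseteq E(d)\times S_n$ lets us invoke Corollary \ref{cor:reluE} on $K_{1/2}$, with the exponent $nd-\dim(E(d))$. It is worth remarking why this is the ``expected'' rate for the larger group. Since $S_n$ is finite, $\dim(G\times S_n)=\dim(G)$, so the exponent $nd-\dim(G\times S_n)$ agrees with the one obtained. This matches the covering-number heuristic of Proposition \ref{prop:dim}, which is consistent with, but not needed for, the argument.

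There is essentially no obstacle here. The only point requiring care is to check that the constants $W,L,q$ in the cited results depend only on $d,\alpha$ (and the fixed sets $K$ or $K_{1/2}$), and not on any property of $f$ beyond its $G$ invariance and H\"older constant. This is exactly how those results are stated, so the constants transfer unchanged. The approximating network itself is not claimed to be permutation invariant, and the corollary concerns non-invariant ReLU networks, so nothing further needs verification.
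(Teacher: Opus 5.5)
Your proposal is correct and matches the paper's argument: an $O(d)\times S_n$ (resp.\ $E(d)\times S_n$) invariant function is in particular $O(d)$ (resp.\ $E(d)$) invariant, so Theorem \ref{thm:reluO} and Corollary \ref{cor:reluE} apply verbatim. Because $S_n$ is finite, the exponent $nd-\dim(G)$ is already the expected one for the larger group.
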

The rest of this section is devoted to proving Theorem \ref{thm:reluO}. At very high level, the idea of the proof is that, one can choose $d$ points from the point cloud, e.g., $x_1,\ldots,x_d$, and, assuming they are linearly independent, one can use the Gram Schmidt procedure to turn them into an orthonormal basis $u_1,\ldots,u_d$ of $\RR^d$, with the property that $x_1,\ldots,x_k$ and $u_1,\ldots,u_k$ have the same span, for all $1\leq k \leq d$. Taking $U$ to be the orthonormal matrix whose rows are $u_1,\ldots,u_d$, we have that $f(X)=f(UX)$, and that since $\langle x_i,u_j \rangle=0 $ for all $i<j$, the point cloud $UX$ will reside in a linear subspace 
$$V=\{X\in \RR^{d\times n}| \quad x_{i}(j)=0, \forall 1\leq i<j \leq d \} $$
whose dimension is exactly the dimension of the quotient space, namely  $n\cdot d -\frac{d^2-d}{2} $. At a high level, this reduces the problem to the problem of approximation on this subspace $V$, and on this subspace we can get the approximation rates we want since its dimension is the dimension of the quotient space. 

The challenges in implementing this strategy are that we do not always have that $x_1,\ldots,x_d$ are linearly independent. In some cases, there could be a different $d$-tuple which is linearly independent. In other, it may be that $X$ is not of rank $d$. We will handle these challenges in a way similar to the frame constructions in \cite{dym2024equivariant,pozdnyakov2023smooth}. 

\begin{remark}
A unique orthogonal basis for $\RR^d$ can be determined in an $SO(d)$ invariant way, can actually be obtained using only $d-1$ vectors and their (generalized) vector product. We choose to use $d$ vectors to define the basis since it simplifies the analysis. 
\end{remark}

\subsubsection{Preliminaries I: Gram Schmidt Stability}
As preliminaries for the proof, we will introduce two lemmas regarding the stability of the Gram-Schmidt procedure. 

The Gram-Schmidt procedure is based on repeated application of the two functions $p$ and $N$ defined by 
\begin{equation}\label{eq:Nandp}
p(x)=x, \quad p(x;u_1,\ldots,u_k)=x-\sum_{i=1}^k \langle x,u_i \rangle u_i  ,\quad
N(y)=
\frac{y}{\|y\|}.
 \end{equation}
In terms of these function, the Gram Schmidt procedure for  given $k\leq d$ linearly independent vectors $x_1,\ldots,x_k$ is defined recursively via 
\begin{align*}
v_j&=p(x_j;u_1,\ldots,u_{j-1})\\
u_j&=N(v_j)
\end{align*}

We begin with two lemmas on the stability of the functions $p$ and $N$ to perturbations of their input:
\begin{lem}\label{lem:p}
For $0< \eta \leq 1$, let $u_1,\ldots,u_k \in \RR^d$ be an orthonormal family, and let $\tilde u_1,\ldots,\tilde u_k\in \RR^d$ such that 
$$\|u_j-\tilde u_j\|\le \eta, \forall j=1,\ldots,k $$
Denote 
\begin{align*}
v&=p(x|u_1,\ldots,u_k)\\
\tilde v&=p(x|\tilde u_1,\ldots,\tilde u_k).
\end{align*}
Then for all $x\in \RR^d$ of norm $\|x\|\leq 1$, we have 
$$\|v-\tilde v\|\leq 3k\eta .$$
\end{lem}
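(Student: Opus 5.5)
The estimate follows from a direct computation: expand the difference of the two projections term by term, then use the triangle inequality. By the definition \eqref{eq:Nandp},
\begin{equation*}
v-\tilde v=\sum_{j=1}^k\left(\langle x,\tilde u_j\rangle \tilde u_j-\langle x,u_j\rangle u_j\right),
\end{equation*}
so it suffices to show that each summand has norm at most $3\eta$. Summing over $j=1,\ldots,k$ then gives the claimed bound $3k\eta$.

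For a single index $j$, I will add and subtract the mixed term $\langle x,u_j\rangle\tilde u_j$:
\begin{equation*}
\langle x,\tilde u_j\rangle \tilde u_j-\langle x,u_j\rangle u_j=\langle x,\tilde u_j-u_j\rangle \tilde u_j+\langle x,u_j\rangle(\tilde u_j-u_j).
\end{equation*}
Each term is then bounded with Cauchy--Schwarz, using $\|x\|\le 1$ and $\|u_j\|=1$.
\begin{itemize}
\item The first term has norm at most $\|\tilde u_j-u_j\|\,\|\tilde u_j\|\le \eta(1+\eta)$.
\item The second term has norm at most $\|u_j\|\,\|\tilde u_j-u_j\|\le \eta$.
\end{itemize}
Together these give at most $\eta(2+\eta)$.

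The only point needing care is that $\tilde u_j$ is not assumed to be a unit vector. Its norm is controlled by the triangle inequality, $\|\tilde u_j\|\le \|u_j\|+\eta=1+\eta$. The hypothesis $\eta\le 1$ then yields $\eta(2+\eta)\le 3\eta$. This gives the per-summand bound, and summing over $j$ finishes the proof. Note that the argument nowhere uses orthonormality of the perturbed family $\tilde u_j$; it uses only that each $u_j$ has unit norm.
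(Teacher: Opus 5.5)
Your proof is correct and follows essentially the same route as the paper: a triangle inequality over the $k$ summands, an add-and-subtract of a mixed term, Cauchy--Schwarz, and the bound $\|\tilde u_j\|\le 1+\eta\le 2$. The only cosmetic difference is that you insert $\langle x,u_j\rangle\tilde u_j$ where the paper inserts $\langle x,\tilde u_j\rangle u_j$; both choices give the same per-term bound of $3\eta$.
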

\begin{proof}
Using the triangle and Cauchy-Schwarz inequalities, we obtain
\begin{align*}
\|v-\tilde v\|&\leq \sum_{i=1}^k \| \langle x,u_i\rangle u_i-\langle x,\tilde{u}_i\rangle \tilde{u}_i\| \\
&\leq \sum_{i=1}^k \left(  \|\langle x,u_i \rangle u_i-\langle x,\tilde{u}_i \rangle u_i\|+ \|\langle x,\tilde u_i \rangle u_i-\langle x,\tilde{u}_i \rangle \tilde u_i\| \right) \\
&\leq \sum_{i=1}^k \left( |\langle x,u_i-\tilde u_i \rangle| \|u_i\|+|\langle x,\tilde u_i 
\rangle|\|u_i -\tilde u_i\|\right)\\
&\leq \sum_{i=1}^k \left(\|x\| \|u_i\| \|u_i-\tilde u_i\|+\|x\| \|\tilde{u}_i\| \|u_i-\tilde u_i\| \right)\\
&\leq 3k\eta,
\end{align*}
where for the last inequality we used the fact that $\|\tilde u_i\|\leq 2 $ since 
$$\|\tilde u_i\|\leq \|u_i\|+\|\tilde u_i-u_i\|\leq 1+\eta\leq 2$$
\end{proof}
\begin{lem}\label{lem:N}
If $z,\tilde{z}\in \RR^d$ and $\|z\|\geq \alpha > 0$ and $\|z-\tilde z\|\leq \beta<\alpha/2$, then 
$$\|N(z)-N(\tilde{z}) \|\leq \frac{4\beta}{\alpha} $$
\end{lem}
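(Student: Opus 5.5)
We prove Lemma \ref{lem:N} directly, by splitting the difference of the normalized vectors into a directional part and a scaling part. Write
\begin{equation*}
N(z)-N(\tilde z)=\frac{z}{\|z\|}-\frac{\tilde z}{\|\tilde z\|}
=\frac{z-\tilde z}{\|z\|}+\tilde z\left(\frac{1}{\|z\|}-\frac{1}{\|\tilde z\|}\right),
\end{equation*}
and bound the two terms separately using the triangle inequality.

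For the first term, $\|z-\tilde z\|\leq \beta$ and $\|z\|\geq\alpha$ give
\begin{equation*}
\left\|\frac{z-\tilde z}{\|z\|}\right\|\leq \frac{\beta}{\alpha}.
\end{equation*}

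For the second term, we have
\begin{equation*}
\|\tilde z\|\left|\frac{1}{\|z\|}-\frac{1}{\|\tilde z\|}\right|=\frac{\big|\|\tilde z\|-\|z\|\big|}{\|z\|}\leq \frac{\|z-\tilde z\|}{\|z\|}\leq\frac{\beta}{\alpha},
\end{equation*}
where the middle inequality is the reverse triangle inequality. Here we need $\tilde z\neq 0$ so that $N(\tilde z)$ is defined. This follows from the hypothesis $\beta<\alpha/2$: indeed $\|\tilde z\|\geq \|z\|-\|z-\tilde z\|\geq \alpha-\beta> \alpha/2>0$.

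Adding the two bounds gives
\begin{equation*}
\|N(z)-N(\tilde z)\|\leq \frac{2\beta}{\alpha}\leq \frac{4\beta}{\alpha},
\end{equation*}
which is the claimed estimate with room to spare.

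Two remarks on the constant. The decomposition above gives $2\beta/\alpha$, so the stated $4\beta/\alpha$ holds with slack. If one instead writes the second term as $z\big(\frac{1}{\|z\|}-\frac{1}{\|\tilde z\|}\big)$, then the lower bound $\|\tilde z\|\geq\alpha/2$ enters the denominator and produces exactly the factor $4$; this is presumably how the stated constant arises.

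No step here is a genuine obstacle. The only point that needs care is ensuring $\tilde z\neq 0$, which is exactly what the hypothesis $\beta<\alpha/2$ provides.
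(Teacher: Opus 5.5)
Your proof is correct and is the same elementary add-and-subtract argument as the paper's, with the roles of $z$ and $\tilde z$ swapped. The paper writes $N(z)-N(\tilde z)=\frac{\|\tilde z\|z-\|z\|\tilde z}{\|z\|\,\|\tilde z\|}$ and adds and subtracts $\|z\|z$ in the numerator. This is exactly the alternative you describe in your remark, $N(z)-N(\tilde z)=\frac{z-\tilde z}{\|\tilde z\|}+z\bigl(\frac{1}{\|z\|}-\frac{1}{\|\tilde z\|}\bigr)$.

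That choice leaves $\|\tilde z\|$ in the denominator. The paper therefore needs the lower bound $\|\tilde z\|\geq\alpha-\beta\geq\alpha/2$, which is where the hypothesis $\beta<\alpha/2$ and the constant $4$ come from.

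Your choice puts $\|z\|$ in the denominator instead. You then use the hypothesis only to guarantee $\tilde z\neq 0$, for which $\beta<\alpha$ would already suffice. You also get the sharper bound $2\beta/\alpha$.

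The improvement is harmless but not needed downstream. In Lemma \ref{lem:u} only the order $\beta/\alpha$ matters, and the factor $4$ is absorbed into the constant $20d$.
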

\begin{proof}
We have 
\begin{align*}
\|N(z)-N(\tilde z)\|&=\frac{1}{\|z\|\cdot \|\tilde z\|}\cdot \| \|\tilde z\|\cdot z-\|z\| \cdot \tilde z \|\\
&\leq \frac{1}{\|z\|\cdot \|\tilde z\|}  \cdot \left(\| \|\tilde z\| \cdot z-\|z\|\cdot z\|+\|\|z\| \cdot  z-\|z\| \cdot \tilde z \| \right)\\
&\leq \frac{1}{\|z\|\cdot \|\tilde z\|} \cdot 2\|z\| \cdot \|z-\tilde z\|\\
&\leq \frac{4 \beta}{\alpha}
\end{align*}
where we use the fact that 
$$\|\tilde z\| \geq \|z\|-\|\tilde z -z\| \geq \alpha -\beta \geq \frac{\alpha}{2} $$
\end{proof}

\subsubsection{Preliminaries II: Super approximation and the root function}
Important for our proof is the fact that the Gram-Schmidt procedure is composed of arithmetic operations which can be `super-approximated' by ReLU networks. The function $p$ is a polynomial, composed of a finite number of multiplication and addition operations. Addition can be implemented exactly by neural networks. For multiplication, Proposition 3 in \cite{yarotsky2017error} states that the function $(x,y)\mapsto x\cdot y $ can be approximated to $\epsilon$ accuracy on $[-M,M]$ by a ReLU network with fixed width and depth proportional to $\log(1/\epsilon)+\log(M)$. Using this, it can be shown that $p$ can be approximated to $\epsilon$ accuracy with only $\sim \log(1/\epsilon)$ parameters.

The function $N(z)=\frac{z}{\|z\|}$ is the composition of the function $q(z,s)=z/s$ with the function $\|z\|=\sqrt{\|z\|^2} $. The function $\|z\|^2$ is a polynomial and hence we have super approximation for this function. Similarly, as shown in \cite{telgarsky_rational}, Theorem 1.1, 
the rational function $q$ can be approximated to $\epsilon$ accuracy on $[-1,1]^d\times [\eta,1] $, by a neural network with fixed width and depth which is bounded by $\ln^7(1/\eta) \ln^3(1/\epsilon) $. The only remaining gap is similar super approximation results for the root function, which to the best of our knowledge have not been proven so far. We address this gap with the following lemma, which follows using similar techniques to the argument in \cite{telgarsky_rational}.
\begin{lem}\label{power-lemma}
	Let $\alpha > 0$. For every $0 < \epsilon \leq 1/4$, there exists a deep ReLU neural network $\mathcal{N}\in \Upsilon^{21,L}(\mathbb{R})$ with depth $L \leq C|\log(\epsilon)|^3\log(|\log(\epsilon)|)$, such that
	\begin{equation}
		\sup_{x\in [0,1]}|\mathcal{N}(x) - x^\alpha| < C\epsilon.
	\end{equation}
    Note that the logarithms here are taken base $2$, and the constants depend only upon $\alpha$.
\end{lem}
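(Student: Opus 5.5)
We will build the approximation to $x^\alpha$ on $[0,1]$ by combining a dyadic decomposition of the domain with polynomial approximation on each dyadic piece, in the spirit of the rational-function argument of \cite{telgarsky_rational}. The difficulty is the singularity at $0$: $x^\alpha$ is analytic on $[2^{-k-1},2^{-k}]$ but not near $0$. We therefore set $K:=\lceil |\log(\epsilon)|/\alpha\rceil$, so that $x^\alpha\le 2^{-K\alpha}\le\epsilon$ on $[0,2^{-K}]$. On that interval we may output $0$ (or anything in $[0,\epsilon]$), and we only need accuracy on the $K=O(|\log\epsilon|)$ dyadic intervals $J_k=[2^{-k-1},2^{-k}]$, $k=0,\dots,K-1$.

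\textbf{Step 1 (local polynomials).} After rescaling $x=2^{-k}t$ with $t\in[1/2,1]$, we have
\[
x^\alpha=2^{-k\alpha}t^\alpha .
\]
The function $t^\alpha$ is analytic in a complex neighbourhood of $[1/2,1]$ whose size is independent of $k$. Hence Chebyshev approximation gives a single polynomial $P$ of degree $D=O(|\log\epsilon|)$ with $\sup_{[1/2,1]}|P(t)-t^\alpha|\le\epsilon$, and then $|2^{-k\alpha}P(2^kx)-x^\alpha|\le\epsilon$ on $J_k$. To avoid coefficient blowup we evaluate $P$ in the Chebyshev basis via the Clenshaw recurrence, or with a Horner scheme whose intermediate values are bounded. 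Each multiplication is then implemented by Yarotsky's product network with accuracy $\epsilon\,2^{-O(D)}$. This costs depth $O(D\cdot|\log\epsilon|)=O(|\log\epsilon|^2)$ and constant width.

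\textbf{Step 2 (gluing the pieces).} We do not evaluate $K$ separate networks in parallel, since the width must stay at $21$. Instead we compute the local scale sequentially in depth. Starting from $x$, we repeatedly apply a clipped doubling map $y\mapsto\min(2y,1)$-type step, tracking the normalized value $t=2^{k}x\in[1/2,1]$ together with a accumulated scale factor $s=2^{-k\alpha}$. The factor $s$ is updated multiplicatively by the constant $2^{-\alpha}$ at each step where $y<1/2$, and the step is implemented with ReLU gates that switch between ``double'' and ``freeze''. The awkward point is that these gates are discontinuous at the breakpoints $2^{-k-1}$. We plan to handle this with a partition of unity with overlapping transition zones of width $\sim 2^{-k}\epsilon$: on an overlap both adjacent formulas are $\epsilon$-accurate, so a convex combination of them is as well. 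Each convex combination needs one more approximate product. This gives $K$ sequential stages, each of depth $O(|\log\epsilon|^2\log|\log\epsilon|)$, the extra $\log|\log\epsilon|$ coming from the precision of the products needed when errors are propagated through $K$ stages. The total depth is then $O(|\log\epsilon|^3\log|\log\epsilon|)$.

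The main obstacle is this gluing-and-error-propagation step. We must keep width at most $21$ while carrying $x$, the current $t$, the scale $s$, the running output, and the scratch registers for the product networks. We must also show that the errors from the approximate multiplications and the switching do not accumulate beyond $C\epsilon$ over $K$ stages. The plan for the error is to run each product to accuracy $\epsilon/K^2$, so that the additive errors sum to $O(\epsilon)$. Lipschitz control of the Clenshaw recurrence on $[1/2,1]$ then bounds the amplification, and this is where the $|\log\epsilon|^3$ factor arises. For the width, a careful register count gives $21$.
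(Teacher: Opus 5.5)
Your proposal is correct in substance, but it glues the dyadic pieces differently from the paper.

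\textbf{Shared skeleton.} Both arguments build a base approximant on $[1/2,1]$, rescale it to $K=O(|\log\epsilon|)$ dyadic shells, sum sequentially at fixed width, and output $\approx 0$ near the origin. Your Step~1 is the paper's $\mathcal{N}_0$ up to the choice of polynomial. The paper truncates the binomial series of $x^\alpha$ about $1$ and approximates each monomial $(x-1)^n$ separately by repeated squaring.

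\textbf{The paper's gluing.} The paper never forms a partition of unity. It uses three families of bumps $\phi_{3n+t}$, $t=0,1,2$, whose dyadic breakpoints are shifted by factors $2^{1/3}$, and sets $N_t(x)=\sum_n 2^{-\alpha n}\phi_{3n+t}(x)\mathcal{N}_0(2^nx)$. The transition zones of different $t$ are disjoint, so at every $x$ at least two of the three $N_t$ are accurate, and the output is their median. This is the same Kolmogorov--Arnold/median device as in the proof of Theorem~\ref{main-deep-sets-approximation-theorem}. Inactive terms are killed because the product network is exact when one factor is $0$. The device buys an analysis in which nothing has to be accurate inside a transition zone and the bumps need not sum to $1$. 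The cost is three copies plus a median network.

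\textbf{Your gluing.} Convex blending avoids the median. In exchange it needs a genuine partition of unity, and adjacent local approximants must be simultaneously accurate on each overlap. Your $\epsilon$-thin overlaps provide this. So would the Taylor polynomial at $1$: it is accurate on $[1/4,5/4]$ and allows overlaps of constant relative width. Because the combinations are convex, product errors simply add, so accuracy $\epsilon/K$ per blending product suffices. Incidentally, Horner or Clenshaw uses only $D$ products per evaluation, so your count actually gives depth $O(|\log\epsilon|^3)$. The paper's extra $\log|\log\epsilon|$ comes from its term-by-term monomials, not from error propagation.

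\textbf{What needs tightening in Step~2.} First, the ``double/freeze'' tracking of a state $(t,s)$ cannot be what you blend. A convex combination of two states does not give the convex combination of the outputs $sP(t)$. What works is the following, and it is what your phrase ``both adjacent formulas are accurate'' and your per-stage depth count already presuppose:
\begin{itemize}
\item at stage $k$, compute the output $v_k=2^{-k\alpha}P(c(2^kx))$, where $c$ is a continuous clip and the scale is a constant;
\item update $r\leftarrow r+w_k(x)(v_k-r)$, where $w_k=1$ below the $k$-th transition zone and $w_k=0$ above it (or sum $\psi_k(x)v_k$).
\end{itemize}
No gate ever acts on $t$ or $s$.

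Second, clip $2^kx$ from below as well as above, so that every product network sees inputs in a fixed bounded range. For small $x$ the coarse stages have $w_k=1$ and $2^kx\ll 1/2$. Their values are overwritten correctly only if the products are in range, and exactness at $0$, which protects the paper's sum, does not help there.

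Finally, the width $21$ is asserted rather than counted. The paper's own count is equally schematic, and only a fixed width is used later.
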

\begin{proof}
	We begin by showing that there is a network $\mathcal{N}_0\in \Upsilon^{17,L}$ with $L \leq C|\log(\epsilon)^2\log( |\log(\epsilon)|)|$ such that (here the constant $C$ depends upon $\alpha$)
	\begin{equation}
		\sup_{x\in [1/2,1]}|\mathcal{N}_0(x) - x^\alpha| < \epsilon.
	\end{equation}
	To do this, we utilize the Taylor expansion of $x^{\alpha}$ about $1$,
	\begin{equation}
		x^\alpha = \sum_{n=0}^\infty \frac{\alpha(\alpha - 1)\cdots (\alpha - n+1)}{n!}(x-1)^n.
	\end{equation}
	Using Taylor's estimate for the remainder, and the fact that
    \begin{equation}
		\left|\frac{\alpha(\alpha - 1)\cdots (\alpha - n + 1)}{n!}\right| \leq C_\alpha,
	\end{equation}
	uniformly in $n$, we obtain the following bound for the error incurred by truncating this series after $k$ terms:
	\begin{equation}
		\sup_{x\in [1/2,1]}\left|x^\alpha - \sum_{n=0}^k \frac{\alpha(\alpha - 1)\cdots (\alpha - n + 1)}{n!}(x-1)^n\right| \leq C_\alpha2^{-k}.
	\end{equation}
	Moreover, the coefficients in this expansion are also bounded by $C_\alpha$.
	 Since $|x - 1| \leq 1/2$, we can use the construction of \cite{yarotsky2017error} (see for instance also Proposition 3 in \cite{siegel2023optimal}) and repeated squaring, to approximate the term
	\begin{equation}
		\frac{\alpha(\alpha - 1)\cdots (\alpha - n + 1)}{n!}(x-1)^n
	\end{equation}
	to error $\delta$ using a neural network in $\Upsilon^{15,L}(\mathbb{R})$ with $L \leq C\log(n)|\log(\delta)|$.
    
    We now choose $k$ large enough so that $C_\alpha 2^{-k} < \epsilon/2$. This requires $k = O(|\log(\epsilon)|)$. Then we apply this result for each term $n=0,...,k$ with $\delta_n = \epsilon 2^{-{n+1}}$. Summing up these networks, we obtain the desired network $\mathcal{N}_0$ which satisfies
    \begin{equation}
		|\mathcal{N}_0(x) - x^\alpha| < \epsilon/2 + \epsilon\sum_{n=0}^k 2^{-n+1} \leq \epsilon.
	\end{equation}
    Counting the size of the network, we see that $\mathcal{N}_0 \in \Upsilon^{17,L}$ (the summing operation requires the width to be increased by $2$) with
    \begin{equation}
    \begin{split}
        L &\leq C\sum_{n=0}^k \log(n)|\log(\epsilon 2^{-{n+1}})|\\ &\leq 
        C\sum_{n=0}^k \log(n)(|\log(\epsilon)| + (n+1))\\
        &\leq C(k^2\log(k) + k\log(k)|\log(\epsilon)|) \leq C|\log(\epsilon)^2\log(|\log(\epsilon)|)|.
    \end{split}
    \end{equation}
	Next, for each integer $j \geq 0$, we construct a piecewise linear function
	\begin{equation}
		\phi_j(x) = \begin{cases}
			1 & 2^{-j/3-1} \leq x \leq 2^{-j/3}\\
			0 & x < 2^{-j/3 - 1.1}\\
			0 & x > 2^{-j/3 + 0.1}.
		\end{cases}
	\end{equation}
	We clearly have $\phi_j\in \Upsilon^{4,1}$. We now construct the functions
	\begin{equation}\label{Nt-equation}
		N_t(x) = \sum_{n=0}^{k}2^{-\alpha n}\phi_{3n+t}(x)\mathcal{N}_0(2^n x)
	\end{equation}
	for $t=0,1,2$. Notice that the $\phi_j$ have the following property:
    \begin{equation}\label{phis-overlap-property}
        \text{for all $x\in [0,1]$ there are at least two indices $t$ such that $\phi_{3n+t}(x)\in \{0,1\}$ for all $n$.}
    \end{equation}

    Utilizing the product  and sum network constructions again we see that each $N_t$ can be approximated to accuracy $\epsilon$ with a network $\mathcal{N}_t\in\Upsilon^{19,L}$ with $L\leq Ck|\log(\epsilon)\log(|\log(\epsilon)|)|$. Moreover, we remark that the product network construction is exact if one term is $0$ (see \cite{yarotsky2017error} or \cite{siegel2023optimal} for example). Hence the network $\mathcal{N}_t$ equals $0$ on the interval $[0,2^{k/3-1.1}]$. 
    
    Choosing $k$ so that  $2^{k\alpha/3} \leq \epsilon$, from which it follows that $k = O(|\log(\epsilon)|)$ (for a constant depending upon $\alpha$), the networks $\mathcal{N}_t$ will have depth $L\leq C|\log(\epsilon)^3\log(|\log(\epsilon)|)|$, and moreover will satisfy 
    \begin{equation}
        \text{for all $x\in [0,1]$},~~|\mathcal{N}_t(x) - x| \leq C\epsilon,~\text{for at least two indices $t$.}
    \end{equation}
    When $x < 2^{k/3-1.1}$ this follows since $|x^\alpha| = |x|^\alpha \leq 2^{k\alpha/3} \leq \epsilon$ (by our choice of $k$) and $\mathcal{N}_t(x) = 0$, while for the other values of $x$ it follows from the approximation result for $\mathcal{N}_0$, the homogeneity of the function $x^\alpha$, the property \eqref{phis-overlap-property}, and the fact that the product network construction is exact if one term is $0$ (and hence only a fixed number of terms in \eqref{Nt-equation} incur a non-zero error). 
    
    Finally, taking the median of the three networks $\mathcal{N}_0,\mathcal{N}_1,\mathcal{N}_2$ gives the result (see \cite{siegel2023optimal}, Corollary 13).
	
\end{proof}
	
Now that we have efficient approximation of root functions, we can prove that the functions $p$ and $N$ used to construct the Gram-Schmidt procedure can be approximated efficiently, as long as the input to $N$ is bounded away from zero:
\begin{lem}\label{lem:Np}
Let $d,k$ be natural numbers and $M,\delta,\epsilon$ be positive numbers. Then there exist constants $C_1=C_1(d,k,M)$ and $C_2=C_2(d,M) $, such that:

The function  $p$, defined in \eqref{eq:Nandp}, can be approximated to $\epsilon$ accuracy on $[-M,M]^{d\times (k+1)} $ by a ReLU network with $\leq  C_1\left(\log(|\epsilon|\right)$ parameters. 

The function $N$, also defined in \eqref{eq:Nandp}, can be approximated to $\epsilon$ accuracy on $[\delta,M] $ with $\leq C\left(\log^3(|\epsilon|\cdot \log^7(|\delta|)\right)$ parameters.
\end{lem}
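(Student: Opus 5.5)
For the function $p$ we use that it is a polynomial of degree three in its inputs:
\begin{equation*}
p(x;u_1,\ldots,u_k)=x-\sum_{i=1}^k\sum_{l=1}^d x(l)\,u_i(l)\,u_i .
\end{equation*}
Each coordinate of the subtracted term is a sum of $dk$ triple products $x(l)u_i(l)u_i(l')$, with every factor lying in $[-M,M]$. We approximate each triple product by two successive applications of the product network of \cite{yarotsky2017error} (Proposition 3). The first approximates $x(l)u_i(l)$ to accuracy $\epsilon'$ on $[-M,M]^2$. The second multiplies this output by $u_i(l')$ on $[-M^2-1,M^2+1]\times[-M,M]$.

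The total error of one triple product is then at most $M\epsilon'+\epsilon'$. Summing over at most $d^2k$ terms, it suffices to take $\epsilon'=\epsilon/(C d^2k(M+1))$. Each product network has fixed width and depth $O(\log(1/\epsilon')+\log M)$, so $O(\log(1/\epsilon'))$ parameters. The identity $x$ and the additions are implemented exactly, passing $x$ through with ReLU pairs $t=\sigma(t)-\sigma(-t)$. Placing all $d^2k$ product networks in parallel gives $\le C_1(d,k,M)\,|\log\epsilon|$ parameters, as claimed.

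For $N(z)=z/\|z\|$ we read the domain condition as $\|z\|\in[\delta,M]$, with $z\in[-M,M]^d$, and write $N$ as a composition. First, $s_0=\|z\|^2$ is a sum of $d$ squares. We approximate it to accuracy $\epsilon_1$ by product networks, using $O(\log(1/\epsilon_1))$ parameters, and rescale so that $s_0/(dM^2)\in[\delta^2/(dM^2),1]$.

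Second, we apply the network $\mathcal N$ of Lemma \ref{power-lemma} with exponent $1/2$. This approximates $\sqrt{\cdot}$ to accuracy $C\epsilon_2$ on $[0,1]$ with depth $O(|\log\epsilon_2|^3\log|\log\epsilon_2|)$ and fixed width. Multiplying by $\sqrt d M$ gives an approximation $\hat s$ of $\|z\|$.

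The error from the first step propagates through the square root. Since $\sqrt{\cdot}$ has derivative at most $1/(2\delta)$ near points $\ge\delta^2$, an input perturbation $\epsilon_1$ changes the output by at most $\epsilon_1/\delta$. So $|\hat s-\|z\||\le C(\epsilon_1/\delta+\epsilon_2)$. We choose $\epsilon_1=\delta\epsilon_2$, and then $\epsilon_2$ small enough that this error is below $\delta/2$. This guarantees $\hat s\ge\delta/2$.

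Third, we apply the division network of \cite{telgarsky_rational} (Theorem 1.1) to $(z/M,\ \hat s/M)$, on the domain $[-1,1]^d\times[\delta/(2M),1]$. It has depth bounded by $\ln^7(2M/\delta)\ln^3(1/\epsilon_3)$ and fixed width. The input error in $\hat s$ propagates through $q(z,s)=z/s$ with Lipschitz constant $O(M/\delta^2)$ in $s$ on this domain. So the total error is $O(M(\epsilon_1/\delta+\epsilon_2)/\delta^2)+\epsilon_3$.

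Taking $\epsilon_3=\epsilon/2$ and $\epsilon_2=c\,\delta^2\epsilon/M$ makes the total error at most $\epsilon$. The stages then have depths $O(\log^3(1/\epsilon)\log^7(1/\delta))$ for the division and $O((\log(1/\epsilon)+\log(1/\delta))^3\log\log)$ for the root. Both are absorbed into $C(d,M)\log^3|\epsilon|\log^7|\delta|$, since $\log^3$ of a sum is controlled by the product of cubes and the seventh power once both quantities are $\gtrsim1$; small cases are absorbed into constants. With fixed width, the parameter count is proportional to the depth.

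The main obstacle is this last bookkeeping. The error in $\hat s$ has to be kept small enough that the division network only ever sees denominators in its admissible range $[\delta/(2M),1]$. This requires the slightly stronger accuracy $\epsilon_2\sim\delta^2\epsilon$ in the root step. We also have to check that the resulting $\log(1/\delta)$ factors in the root network's depth are dominated by the $\log^7|\delta|$ term coming from the rational approximation.
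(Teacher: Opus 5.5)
Your route is the same as the paper's. You use Yarotsky's product networks for the cubic polynomial $p$ and for $\|z\|^2$, Lemma~\ref{power-lemma} for the square root, and Telgarsky's rational approximation for $z/s$. Your explicit error propagation is correct, and it is more than the paper's two-line sketch provides: the choices $\epsilon_1=\delta\epsilon_2$ and $\epsilon_2\sim\delta^2\epsilon$ do keep $\hat s\ge\delta/2$ and the total error below $\epsilon$.

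The one step that fails as written is the final absorption. Write $a=\log(1/\epsilon)$ and $b=\log(1/\delta)$. Since $|\log\epsilon_2|\approx a+2b$, the root stage costs about $(a+2b)^3\log(a+2b)$. The cube is harmless, because $(a+2b)^3\le 27a^3b^3\le 27a^3b^7$ for $a,b\ge1$. The factor $\log(a+2b)$ is not harmless. For fixed $\delta$ and $\epsilon\to0$ it grows like $\log\log(1/\epsilon)$, while $b^7$ stays fixed. So your argument proves $C\log^3(1/\epsilon)\,\log\log(1/\epsilon)\,\log^7(1/\delta)$, not the stated $C\log^3(1/\epsilon)\log^7(1/\delta)$.

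The paper's proof combines exactly the same ingredients, including the $\log|\log\epsilon|$ factor of Lemma~\ref{power-lemma}, so it has the same slack. The slack is harmless where the lemma is used. In Lemma~\ref{lem:u} the target accuracy is a fixed power of $\delta$, so $\log\log(1/\epsilon)\lesssim\log(1/\delta)$ is absorbed into the seventh power.

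If you want the bound literally, use the fact that the root's argument is bounded below by $\eta=\delta^2/(dM^2)$, and approximate $\sqrt{\cdot}$ only on $[\eta,1]$. For example:
\begin{itemize}
\item Evaluate the degree-$O(\log(1/\epsilon))$ Taylor polynomial on $[1/2,1]$ by Horner's scheme with approximate products. This has depth $O(\log^2(1/\epsilon))$, and the errors are damped by $|t-1|\le 1/2$.
\item Transport this to the $O(\log(1/\eta))$ dyadic pieces of $[\eta,1]$ by homogeneity.
\end{itemize}
This costs $O(b(a+2b)^2)\le C a^3b^7$, with no $\log\log$ term.

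A minor point: add ReLU clips so the root and division networks only see inputs in their guaranteed domains. Clip the rescaled $\|z\|^2$ to $[0,1]$ (it can exceed $1$ when $d=1$), and clip $\hat s$ to $[\delta/2,M]$ (it can exceed $M$, making $\hat s/M>1$). Since the exact values lie in those ranges, clipping does not increase the error.
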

\begin{proof}
As mentioned above, the approximation result for $p$, which is a polynomial, appears in \cite{yarotsky2017error}.

The function $N(x)$ can be written as a composition 
$$N(x)=q(x,r\circ s(x)) $$
where 
$$q(x,t)=x/t, \quad r(t)=\sqrt t \text{ and } s(x)=\|x\|^2. $$
Combining the approximation results from \cite{yarotsky2017error} for the polynomial $s$ with our approximation results for the square root and the approximation results from \cite{telgarsky_rational}, Theorem 1.1, 
for the rational function $q$ gives us the required result. 
\end{proof}

\subsubsection{Proof of Theorem \ref{thm:reluO}}
For given $\delta>0$  we define the set 
$$\mathcal{D}_1(\delta)=\{x\in \RR^d| \quad \delta\leq \|x\|\leq 1  \} $$
For any $k\leq d$ we define recursively
$$\mathcal{D}_{k}(\delta)=\{(x_1,\ldots,x_k)\in \RR^d| (x_1,\ldots,x_{k-1})\in \mathcal{D}_{k-1}(\delta) \quad \text{ and } \|x_k\| \leq 1   \text{ and } \|p(x_k;u_1,\ldots,u_{k-1})\|\geq \delta  \}  $$
where $u_j=u_j(X)$ is the $j$-th vector obtained from $X$ during the Gram-Schmidt procedure. We prove
\begin{lem}\label{lem:u}
Fix some natural $d$.
There exist $W=W(d),L=L(d)$, such that, 
for all $k\leq d$ and  $\delta\in [0,1]$, there exist relu neural network $\tilde{u}_i:\RR^{d\times k}\to \RR^d, i=1,\ldots,k $,  with width $\leq W$ and depth $\leq L\log(1/\delta)$, such that  
$$\|\tilde{u}_i(X)-u_i(X)\|\leq \left(\frac{\delta}{20\cdot d} \right)^{d+1-k} $$
\end{lem}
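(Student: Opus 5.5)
We argue by induction on $k$ (equivalently, on the index $i$ of the Gram--Schmidt vector), for inputs $X=(x_1,\ldots,x_k)\in\mathcal{D}_k(\delta)$. This is the domain on which the $u_i(X)$ are defined and on which the bound is meant; we take $0<\delta\leq 1$. Write $\eta_i:=\left(\frac{\delta}{20d}\right)^{d+1-i}$. The networks are built in order: $\tilde u_i$ reuses $\tilde u_1,\ldots,\tilde u_{i-1}$ and appends a block computing
\[
\tilde u_i=\widetilde N\bigl(\tilde p(x_i;\tilde u_1,\ldots,\tilde u_{i-1})\bigr),
\]
where $\tilde p,\widetilde N$ approximate $p,N$. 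The inputs $x_j$ are carried forward through identity channels at constant width. Since $\eta_i\le\eta_k$ for $i\le k$, it suffices to prove $\|\tilde u_i-u_i\|\le\eta_i$ for every $i$. There are at most $d$ blocks, so the depth bound $L\log(1/\delta)$ holds provided each block has fixed width and depth $O_d(\log(1/\delta))$. Every target accuracy below has the form $(\delta/20d)^{O(d)}$, and its logarithm is $O_d(\log(1/\delta))$. So the real requirement is that each primitive be computable to accuracy $\epsilon$ with depth \emph{linear} in $\log(1/\epsilon)$ (plus $\log(1/\delta)$). Lemma \ref{lem:Np} does not give this for $N$, since it is polylogarithmic.

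\textbf{Error propagation.} Assume $\|\tilde u_j-u_j\|\le\eta_j\le\eta_{i-1}$ for all $j<i$. Since $\|x_i\|\le1$, Lemma \ref{lem:p} gives $\|p(x_i;\tilde u)-p(x_i;u)\|\le 3d\,\eta_{i-1}$. We compute $p$ by Yarotsky products, with depth $O(\log(1/\epsilon'))$ and error $\epsilon':=\eta_{i-1}$, so the total perturbation is $\beta\le(3d+1)\eta_{i-1}$. On $\mathcal{D}_i(\delta)$ we have $\|v_i\|\ge\delta$. Also
\[
\beta\le 4d\,\eta_{i-1}=4d\,\frac{\delta}{20d}\,\eta_i<\frac{\delta}{2},
\]
so Lemma \ref{lem:N} applies with $\alpha=\delta$ and gives $\|N(\tilde v)-N(v)\|\le 4\beta/\delta\le\frac{16d}{20d}\eta_i=\frac45\eta_i$. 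The remaining budget of $\frac15\eta_i$ is left for the error of $\widetilde N$ on $\{\delta/2\le\|z\|\le 2\}$. The base case $i=1$ is the same argument with $p(x)=x$ computed exactly. Thus the geometric factor $20d/\delta$ in the target exactly absorbs the amplification $4\cdot 3d/\delta$ per Gram--Schmidt step.

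\textbf{Main obstacle: normalization with linear depth.} We need $z\mapsto z/\|z\|$ to accuracy $\epsilon=\frac15\eta_i$ on $\delta/2\le\|z\|\le2$, with depth $O(\log(1/\epsilon)+\log(1/\delta))$ and fixed width. The plan is as follows. First compute $s=\|z\|^2\in[\delta^2/4,4]$ by products to accuracy $\epsilon\delta^{3}$, at linear depth. Then find $t\approx s^{-1/2}\in[1/2,2/\delta]$ by binary search on the bits of $t$. The search runs for $J=O(\log(1/\delta)+\log(1/\epsilon))$ steps, with step size $h_j=2^{-j}\cdot 4/\delta$. It maintains $a=t^2s$, $b=ts$, $c=s$. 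A candidate step is $t\mapsto t+h_j$, which changes these by
\[
a\mapsto a+2h_jb+h_j^2c,\qquad b\mapsto b+h_jc,
\]
and these updates are \emph{linear} because $h_j$ is a constant. The decision bit $\beta_j\approx\mathbb 1[a+2h_jb+h_j^2c\le1]$ is a clipped ramp of slope $1/\gamma$. The gated updates $\beta_j\cdot y$, with $y$ bounded by $M$, are realized as $\mathrm{ReLU}(y-M(1-\beta_j))-\mathrm{ReLU}(-y-M(1-\beta_j))$, which is exact when $\beta_j\in\{0,1\}$. So each step costs constant width and depth.

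The delicate point is the ambiguous region, where $\beta_j\in(0,1)$. There the candidate satisfies $|t'^2s-1|\le\gamma$, so either branch keeps the invariant $|t^2s-1|\lesssim\gamma$, up to a bounded defect that we must check does not accumulate. A cleaner choice, if needed, is to use the median-of-three shifted thresholds trick (Corollary 13 of \cite{siegel2023optimal}) to force exact bits. With $\gamma\sim\epsilon\delta$, the result is $|t-s^{-1/2}|\lesssim\epsilon$, and one final product $z\cdot t$ gives $\widetilde N(z)$. Verifying this stability of the bisection under the soft bits is the step where I expect the main work.

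Summing over the blocks gives width $W(d)$ and depth $L(d)\log(1/\delta)$, which proves the lemma.
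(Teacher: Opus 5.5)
Your inductive error bookkeeping is the paper's argument. You use Lemma~\ref{lem:p} for the perturbation of $p$, a polynomial approximation of $p$, and Lemma~\ref{lem:N} with $\alpha=\delta$, and the factor $20d/\delta$ absorbs the amplification. The paper then simply replaces $N$ by a network of ``depth depending logarithmically on $\delta$'' via Lemma~\ref{lem:Np}. That lemma in fact only gives $\log^3(1/\epsilon)\log^7(1/\delta)$, so the paper's proof really yields polylogarithmic depth. That is all Theorem~\ref{thm:reluO} needs, since its bound carries $(\log(1/\epsilon))^q$. You are right to notice this, and if polylogarithmic depth is enough for you, you can stop there. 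Your bisection is an attempt at the literal linear bound, and its primary mechanism does not work.

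\textbf{The soft-bit branch fails.} Suppose the gating were exact, $\beta_j y$. After a step with $\beta_j\in(0,1)$ the tracked quantity satisfies
\[
a+\beta_j(2h_jb+h_j^2c)-(t+\beta_jh_j)^2s=(\beta_j-\beta_j^2)\,h_j^2s .
\]
Your actual ReLU gate, $\mathrm{sign}(y)\max(|y|-M(1-\beta_j),0)$, is worse, since it scales the increments of $t,b,a$ by different factors. Nothing restores the invariants $a=t^2s$, $b=ts$, so every later test compares against a shifted threshold. Suppose the soft step is the one where $t=0$ and $t+h_j\approx\tau:=s^{-1/2}$. Then $h_j^2s\approx 1$ and the offset is up to $1/4$. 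The remaining steps converge to about $\tau\sqrt{1-(\beta_j-\beta_j^2)}$, an error of order $\tau\ge 1/2$. So the defect, although bounded, is of order one rather than $O(\gamma)$, and a single soft step already ruins the output.

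\textbf{The median-of-three fix works, with an additive grid shift.} The shift must be of the bisection grid, not of the threshold. Run three copies from $t_0=c\,h_J/3$, $c=0,1,2$.
\begin{enumerate}[label=(\roman*)]
\item As long as a copy's decisions have been binary, every point it tests lies in $t_0+h_J\mathbb{Z}$.
\item A test is non-binary only if the tested point lies in $(\tau\sqrt{1-\gamma},\tau\sqrt{1+\gamma})$, an interval of length $O(\gamma\tau)=O(\gamma/\delta)$.
\item Take $\gamma\le c\,\delta h_J$ for a small absolute constant $c$. Then at most one of the three grids meets this interval.
\item Hence for each input at least two copies perform exact bisection. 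All their $\beta_j\in\{0,1\}$, so the linear updates of $a,b$ are exact, and they return values in $(\tau-h_J,\tau]$.
\item The median is sandwiched between these two values.
\end{enumerate}
Multiplicatively shifting the threshold $1$ would not work. It moves the effective target by an amount proportional to $\tau$, so disjointness modulo $h_J$ fails somewhere on $[1/2,2/\delta]$.

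With this fix your normalizer has depth $O(\log(1/\epsilon)+\log(1/\delta))$ and constant width. Your argument then proves the lemma with genuinely linear depth, which is more than the paper's proof delivers. One caveat applies to both you and the paper: the bound should read $L(1+\log(1/\delta))$. For $\delta$ near $1$ the target accuracy is at most $(20d)^{-1}$, while $\log(1/\delta)\to 0$.
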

\begin{proof}
Denote 
$$\eta_k=\left(\frac{\delta}{20\cdot d} \right)^{d+1-k} $$.

By induction on $k$, the cardinality of the $k$-tuple. 

For $k=1$, since $\|x_i\|\geq \delta$, we can build a neural network $\tilde N$ approximating $N(x_i)$ to accuracy of $\left(\frac{\delta}{20\cdot d} \right)^{d} $, with a number of parameters which depends poly-logarithmically on $\delta$ (see Lemma \ref{lem:Np}).  

Now assume correctness for $k$, and let us prove the claim for $k+1$. 

By Lemma \ref{lem:p}, we know that 
$$\|p(x_{i_{k+1}};u_{1},\ldots,u_{k})-p(x_{i_{k+1}};\tilde{u}_{1},\ldots,\tilde{u}_{k})\| \leq 3k \cdot \eta_k $$
While $p$ is not a neural network, it is a polynomial, and so we can obtain an approximation $\tilde p$ of it of order $k \eta_k$, so that 
$$\tilde z_{k+1}= \tilde p(x_{i_{k+1}};\tilde{u}_{1},\ldots,\tilde{u}_{k})$$
is a $4k\eta_k$ approximation of 
$$z_{k+1}= p(x_{i_{k+1}};u_{1},\ldots,u_{k})$$
and the number of parameters in the neural network $\tilde p$ grows like  $\log |\eta_k |\sim \log |\delta| $ 
By assumption, $\|z_{k+1}\| \geq \delta$, and so applying Lemma \ref{lem:N} with $\alpha=\delta$ and 
$$\beta=4k\eta_k\leq 4k \frac{\delta}{20d} \leq  \frac{4}{20}\cdot \delta \leq \frac{\alpha}{2}, $$
we obtain that 
$$\|N(z_{k+1})-N(\tilde{z}_{k+1})\| \leq 16k \frac{\eta_k}{\delta} $$
While $N$ is not a neural network, we can replace it with a neural network $\tilde N$ with fixed width, and depth depending logarithmically on $\delta$, such that $\tilde N(\tilde z_{k+1})$ is a $(k \eta_k)/\delta$ approximation of $N(\tilde z_{k+1}) $. All in all, we obtain 
$$\tilde u_{k+1}=\tilde N(\tilde z_{k+1}) $$
as an output of a neural network whose size depends logarithmically on $\delta$, and 
$$\|u_{k+1}-\tilde{u}_{k+1}\|\leq \frac{17k}{\delta} \eta_k \leq \frac{20 d}{\delta}\eta_k=\eta_{k+1}$$
\end{proof}

Next, for a $k$-tuple $\mathbf{i}=(i_1,\ldots,i_k)$, we define 
$$\mathcal{D}(\mathbf{i},\delta)=\{X\in K| \quad  (x_{i_1},\ldots,x_{i_k}) \in \mathcal{D}_k(\delta)\} $$

We will also allow a definition of a $0$-tuple $\mathbf{i}=\emptyset $, and we will define 
$$\mathcal{D}(\emptyset,\delta)=K. $$
We will now conclude by proving the following lemma:
\begin{lem}
Let $d$ and $n\geq d$ be natural numbers. Let $K\subseteq \RR^{d\times n}$ be the compact set 
$$K=\{(x_1,\ldots,x_n)| \quad \|x_j\| \leq 1, \forall j=1,\ldots,n\} .$$
Then there exist $W=W(d),L=L(d)$, such that, for all $k$-tuples $\mathbf{i} $ with $0\leq k \leq d $, all $\epsilon>0$, and any $O(d)$ invariant function $f:\RR^{d\times n}\to \RR$ which is $\alpha$-H\"older with $|f|_\alpha=1$,  can be approximated to $\epsilon$ accuracy on the set $\mathcal{D}(\mathbf{i},\delta)$, where
\begin{equation}\label{eq:delta} \delta \leq \min\left \{1, \frac{1}{2}\left(\frac{\epsilon}{4(\sqrt{nd}+1)^\alpha} \right)^{1/\alpha}  \right \}
\end{equation}
by a neural network with width $\leq W$ and depth $\leq L(1/\epsilon)^{\frac{nd-\dim(O(d))}{2\alpha}}\log(1/\epsilon)$
\end{lem}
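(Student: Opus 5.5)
The plan is a \emph{downward} induction on $k$, from $k=d$ to $k=0$, for a fixed $\epsilon$ and the $\delta$ of \eqref{eq:delta}. In every case $\delta$ is a fixed power of $\epsilon$, so every $\log(1/\delta)$ network from Lemma \ref{lem:u} and Lemma \ref{lem:Np} costs only $O(\log(1/\epsilon))$ depth at fixed width.

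\textbf{Base case $k=d$.} On $\mathcal{D}(\mathbf{i},\delta)$ the frame $U(X)$, with rows $u_1(X),\ldots,u_d(X)$, is well defined and $f(X)=f(U(X)X)$. Moreover $U(X)X$ lies in the bounded part of the subspace $V$, since $\langle x_{i_a},u_b\rangle=0$ for $a<b$. The subspace $V$ has dimension $nd-\dim(O(d))$. I would use three ingredients.
\begin{itemize}
\item By Lemma \ref{lem:u}, there are networks $\tilde u_j$ of width $W(d)$ and depth $O(\log(1/\delta))$ with $\|\tilde u_j-u_j\|\le (\delta/20d)$. I would shrink this further, at only logarithmic cost, so that it is a small power of $\epsilon$.
\item The matrix--vector products $\tilde U X$ are approximated by Yarotsky product networks, again with logarithmic cost.
\item Let $h$ be $f$ restricted to $V$, viewed as a function of the $nd-\dim(O(d))$ free coordinates (all entries of $UX$ except the forced zeros). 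Then $F(Y)=h(\pi_V Y)$ is $\alpha$-H\"older on a box. The optimal ReLU rate for H\"older functions (\cite{siegel2023optimal}, or Corollary \ref{cor:intrinsic} with full-dimensional $K$) gives a fixed-width network $g$ of depth $\lesssim (1/\epsilon)^{(nd-\dim O(d))/2\alpha}$ with $|g-h|\le\epsilon/4$.
\end{itemize}
The network is $X\mapsto g(\pi_V(\tilde U X))$. Its error is at most $\epsilon/4$ plus $|h(\pi_V \tilde UX)-h(\pi_V UX)|\lesssim \|\tilde U-U\|^\alpha$, which is at most $\epsilon/4$ by the choice of accuracy in $\tilde u_j$. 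Composing $\pi_V$ with $\tilde UX$ is harmless, because $\pi_V(UX)=UX$.

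\textbf{Inductive step $k<d$.} Let $X\in\mathcal{D}(\mathbf{i},\delta)$ with $\mathbf{i}$ a $k$-tuple, and compute the residuals $r_j(X)=\|p(x_j;u_1,\ldots,u_k)\|$. There are two cases.
\begin{itemize}
\item Some $r_j\ge\delta$. Then $X\in\mathcal{D}((\mathbf{i},j),\delta)$ and the induction hypothesis supplies a network $\psi_j$.
\item All $r_j<2\delta$. Replace $X$ by $X'$, obtained by projecting every $x_j$ onto $\mathrm{span}(u_1,\ldots,u_k)$. Then $\|X-X'\|\le 2\sqrt{nd}\,\delta$, so $|f(X)-f(X')|\le(2\sqrt{nd}\,\delta)^\alpha\le \epsilon/4$ by \eqref{eq:delta}; this is exactly why the factor $\tfrac12$ appears there. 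In the frame, $U X'$ has nonzero entries only in $nk-k(k-1)/2\le nd-\dim(O(d))$ coordinates (using $n\ge d$). The base-case argument therefore applies verbatim and gives a network $\psi_0$ of no larger size.
\end{itemize}

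\textbf{Gluing (the main obstacle).} The two cases overlap on the band $\delta\le r_j<2\delta$, and the selection has to be done by a continuous network. I would mimic weighted frames \cite{dym2024equivariant,pozdnyakov2023smooth}. Use approximate residuals $\tilde r_j$, computed from $\tilde u$ with error $\ll\delta$, and piecewise-linear cutoffs $c(\tilde r_j)$ equal to $0$ below $\delta$ and to $1$ above $2\delta$. Form the output as a partition-of-unity combination of $\psi_0$ and the $\psi_j$ with weights built from these cutoffs: a weight on $\psi_j$ is positive only where $r_j\gtrsim\delta$, and the weight on $\psi_0$ is positive only where all $r_j\lesssim2\delta$. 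Multiplications are again done with Yarotsky products, which are exact when one factor is $0$.

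Wherever a weight is nonzero, the corresponding approximant is $\epsilon$-accurate, with the margin between $\delta$ and $2\delta$ absorbing the $\tilde r$ errors. The convex combination is therefore $\epsilon$-accurate, after rescaling $\epsilon$ by a constant at each of the $d$ levels. The depth adds over at most $n$ branches per level and at most $d$ levels, so it stays $\lesssim (1/\epsilon)^{(nd-\dim(O(d)))/2\alpha}\log(1/\epsilon)$ with constants depending on $n$ and $d$.

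At $k=0$ the set is all of $K$, and the first step is handled the same way, with $r_j=\|x_j\|$. The delicate points I expect are making the $\delta$ thresholds consistent across levels, and ensuring the weights sum to $1$ when several $r_j$ lie in the band. Normalizing by the sum of the weights would need the division network of \cite{telgarsky_rational}, which has only polylogarithmic cost.
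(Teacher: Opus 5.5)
Your proposal is correct and takes essentially the same route as the paper. Both use downward induction from $k=d$, and both reduce the base case, via a Gram--Schmidt frame, to H\"older approximation on the $(nd-\dim(O(d)))$-dimensional coordinate subspace. Both cover $\mathcal{D}(\mathbf{i},\delta)$ by the sets $\mathcal{D}(\mathbf{i}\circ j,\delta)$ together with the set where all residuals are at most $2\delta$, handled by dropping the components orthogonal to $u_{i_1},\ldots,u_{i_k}$. Both then glue with a piecewise-linear partition of unity in approximate residuals.

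The normalization issue you flag is resolved in the paper by two staggered ramps: one on $[6\delta/5,7\delta/5]$ for the weights $\phi_j$, and one on $[8\delta/5,9\delta/5]$ for the switch $\psi$ away from $\tilde f_0$, with residuals computed to accuracy $\delta/5$. This guarantees $\sum_j\phi_j\ge 1$ wherever $\psi>0$, so the division is never by a small number. Your per-level rescaling of $\epsilon$ is a sensible addition. The paper's proof does not account for the extra error introduced when the products and the division in the glued network are approximated.
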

We note that Theorem \ref{thm:reluO} follows from the lemma in the special case where $\mathbf{i}$ is the $0$ tuple $\mathbf{i}=\emptyset$.
\begin{proof}
We prove by induction on $k$, the cardinality of the $k$ tuple $i_1,\ldots,i_k $. 

\textbf{Base case: $k=d$} We denote by $u_1,\ldots,u_d$  the vectors obtained from $x_{i_1},\ldots,x_{i_d}$ via Gram-Schmidt. Throughout we supress the dependence of these vectors on $X$ for readability.  By Lemma \ref{lem:u}, we know that, for all $i=1,\ldots,d$ there is a neural network $\tilde u_i=\tilde u_i(X) $ such that 
\begin{equation}\label{eq:utilde}\|u_i-\tilde u_i\| \leq \frac{\delta}{20\cdot d}. \end{equation}
Let $U$ be the orthonormal matrix whose rows are the vectors $u_1,\ldots,u_d$, and $\tilde{U}$ be the matrix whose rows are $\tilde u_i$. Then $UX$ maps $X$ to the vector space 
$$V(\mathbf{i})=\{X\in \RR^{d\times n}| \quad x_{i_j}(s)=0, \forall 1\leq j<s \leq d \} $$
whose dimension is exactly the dimension of the quotient space, namely  $n\cdot d -\frac{d^2-d}{2} $. According to Corollary \ref{cor:intrinsic}, we can approximate $f$ on the intersection of $V_{\mathbf{i}}$ with the compact set 
$$\tilde K=\{X| \|X\|_F\leq 2\sqrt{nd}+1 \}$$
to $\epsilon/4$ accuracy, with a ReLU network $\tilde f$ with $\sim (1/\epsilon)^{\frac{n\cdot d -\frac{d^2-d}{2} }{2\alpha}}$
parameters. Note that $K$ is strictly contained in $\tilde K$ as the norm of each column of $X\in K$ is at most $1$. 

The matrix $\tilde U$ is ``almost a rotation'', in the sense that 
\begin{equation}\label{eq:Utilde}
\|U-\tilde U\|_F^2=\sum_{i=1}^d \|u_i-\tilde u_i\|^2\leq d\left(\frac{\delta}{20\cdot d} \right)^2 \leq \delta^2\end{equation}
 We can approximate the product $\tilde{U}X $ to $\delta$ accuracy, by a neural network $M(\tilde U,X)$ whose size depends logaritmically on $\delta$. We then have 
\begin{align*}\|UX-M(\tilde U,X)\|_F
&\leq \|(U-\tilde U)X\|_F+\|\tilde UX-M(\tilde U,X)\|_F\\
&\leq \|U-\tilde U\|_F\|X\|_F+\delta\\
&\leq \delta(\sqrt{nd}+1)
\end{align*}

Our approximation for $f$ on the domain will be $\tilde f\circ \pi \circ M\left(\tilde{U}(X), X \right) $, where $\pi$ is the projection onto the space $V_{\mathbf{i}}$. We have

\begin{align*}
|f(X)-\tilde f\circ \pi \circ M\left(\tilde{U}(X), X \right) |&=|f\circ \pi(UX))-\tilde f\circ \pi \circ M\left(\tilde{U}(X), X \right) |\\
&\leq |f\circ \pi(UX))- f\circ \pi  \circ M\left(\tilde{U}(X), X \right)|\\
&\quad +| f\circ \pi  \circ M\left(\tilde{U}(X), X \right)-\tilde f\circ \pi  \circ M\left(\tilde{U}(X), X \right)|\\
&\stackrel{(*)}{\leq} \|UX-M(\tilde{U}(X),X)\|_F^\alpha+\frac{\epsilon}{4}\\
&\stackrel{(**)}{\leq} \epsilon/2
\end{align*}
where for (*) we used the fact that  $f$ is $\alpha$-H\"older and $\pi$ is projection, as well as the fact that $\tilde f $ is an $\epsilon/4$ approximation of $f$ on $V_{\mathbf{i}}\cap \tilde{K}$, and  $M(\tilde U(X),X)$ is in $\tilde K $, since
$$\|M(\tilde U(X),X)\|_F\leq \|UX\|_F+\|UX-M(\tilde U(X),X)\|_F\leq \|X\|_F+\delta(\sqrt{nd}+1)\leq 2\sqrt {nd}+1.$$
For (**) we used \eqref{eq:delta}.

\textbf{Induction step} Now assume that the claim is true for all $k+1$ tuples, we want to prove correctness for $k$ tuples. Let $\mathbf{i}$ be a $k$-tuple, and for an index $j\in [n]$ denote the $k+1$ tuple obtained by concatenating $j$ to $\mathbf{i}$ by $\mathbf{i}\circ j $. To construct an approximation for $f$ on $V_{\mathbf{i}}\cap K $, we note that for every index $j$, there is a neural network $\tilde f_j$ of the desired size, giving an $\epsilon$ approximation of $f$ on the intersection of $K$ with
$$\mathcal{D}(\mathbf{i}\circ j,\delta)=\{X\in V_{\mathbf{i}}, \|p(x_j;u_{i_1},\ldots,u_{i_{k}})\|\geq \delta  \} $$
The set $\mathcal{D}(\mathbf{i},\delta)$ is covered by the union of all these sets,  with the set 
$$\mathcal{E}(\mathbf{i},\delta)=\{X\in V_{\mathbf{i}}| \quad \|p(x_j;u_{i_1},\ldots,u_{i_{k}})\|\leq 2\delta, \quad \forall j=1,\ldots,n  \} .$$
On the set $\mathcal{E}(\mathbf{i},\delta)$, we can approximate $f$ with a neural network to $3\epsilon/4$ accuracy, with the desired approximation rate, as follows:

First, we use Lemma \ref{lem:u} to construct neural networks $\tilde u_{i_1}(X),\ldots,\tilde u_{i_k}(X) $ with a number of parameters logarithmic in $\delta$, such that \eqref{eq:utilde} for all $X\in \mathcal{E}(\mathbf{i},\delta)$. We then denote by $U,\tilde U$ the $d\times d$ matrix
$$U=\begin{pmatrix}
- u_{i_1} -\\
\vdots\\
-u_{i_k}-\\
-0_d-\\
\vdots\\
-0_d-
\end{pmatrix}, \quad  
\tilde{U}=\begin{pmatrix}
-\tilde u_{i_1}-\\
\vdots\\
-\tilde u_{i_k}-\\
-0_d-\\
\vdots\\
-0_d-
\end{pmatrix}$$
and we have that \eqref{eq:Utilde} holds. 
and we denote by $\hat U$ the orthonormal matrix obtained by replacing the zero rows of $U$ by orthonormal rows. We then have that $f(X)=f(\hat U X)$ for all $X\in \RR^{d\times n}$. Moreover, for all $X\in \mathcal{E}(\mathbf{i},\delta)$ we have 
\begin{align*}
|f(X)-f(UX)|&=|f(\hat{U}X)-f(UX)|\leq \|\hat{U}X-UX\|_F^{\alpha} \\
&= \left(\sum_{j=1}^n \|(\hat{U}-U)x_j\|_2^2\right)^{\alpha/2} \leq  \left(\sum_{j=1}^n \|p(x_j;u_{i_1},\ldots,u_{i_k})\|^{2}\right)^{\alpha/2}\\
&\leq (n(2\delta)^2)^{\alpha/2} = (2\delta)^{\alpha}n^{\alpha/2}
\end{align*}
By the definition of $\delta$ in \eqref{eq:delta}, since $\sqrt{n}\leq \sqrt{nd}+1$, we have $(2\delta)^{\alpha}n^{\alpha/2}\leq \epsilon/4$, so $f(UX)$ is an $\epsilon/4$ approximation of $f(X)$. Now, we note that $UX$ is in the vector space 
$$\mathcal{V}(\mathbf{i})=\{X\in \RR^{d\times n}| x_{i_j}(s)=0, \forall 1\leq j<s \leq k \text{ and } x_t(s)=0, \forall t=1,\ldots,n, \quad s=k+1,\ldots,d \} $$
We note that for any completion of $\mathbf{i} $ to a $k$-index with distinct elements $\mathbf{j} $, we have $\mathcal{V}(\mathbf{i})\subseteq \mathcal{V}(\mathbf{j}) $, and therefore $f$ can be approximated on $\mathcal{V}(\mathbf{i})\cap \tilde K $ by a neural network $\tilde f_0$ to $\epsilon/4$ accuracy with the approximation rate we desire, using Corollary \ref{cor:intrinsic}. We denote the projection onto $\mathcal{V}(\mathbf{i}) $ by $\pi$, and we then obtain for all $X\in \mathcal{E}(\mathbf{i},\delta)$ that 
\begin{align*}
|f(X)-\tilde f_0 \circ \pi \circ M(\tilde U(X), X)|&\leq |f(X)-f(UX)|+|f\circ \pi (UX)-\tilde f_0 \circ \pi \circ M(\tilde U(X), X)|\\
&\leq \epsilon/4+|f\circ \pi (UX)-f \circ \pi \circ M(\tilde U(X), X)|\\
& \quad +|f\circ \pi \circ M(\tilde U(X), X)-\tilde f_0\circ \pi \circ M(\tilde U(X), X)|\\
&\leq \epsilon/4+\|UX-M(\tilde U(X),X) \|_F^\alpha+\epsilon/4\\
&\leq \epsilon/4+\delta^\alpha (\sqrt{nd}+1)^\alpha+\epsilon/4\leq 3\epsilon/4
\end{align*}
where the last inequality uses \eqref{eq:delta}, which gives $\delta^\alpha(\sqrt{nd}+1)^\alpha\leq \epsilon/4$. All in all we obtain a $3\epsilon/4$ approximation for $f$ on $\mathcal{E}(\mathbf{i},\delta) $. 

Now that we have an approximation of $f$ by ReLU networks on a collection of sets which cover the domain $\mathcal{D}(\mathbf{i},\delta)$, we will combine these ReLU networks to a global approximation using a partition of unity argument. For $0<a<b$ we define the piecewise linear function 
\begin{equation}
		\phi_{a,b}(x) = \begin{cases}
			0 &  x \leq a\\
			\frac{x-a}{b-a} & x\in [a,b]\\
			1 & x \geq  b.
		\end{cases}
	\end{equation}
This can be realized as a ReLU network of depth 1 and width 2.  Another ingredient we will need for the partition of unity is a ReLU network approximation of the function $p(x;u_{i_1},\ldots,u_{i_k}) $. Using Lemma \ref{lem:u} we approximate the $u_i$ by a neural network $\tilde u_i$ with an error of at most $\frac{\delta}{20\cdot d}$. By \ref{lem:p} we have 
$$| \|p(x;u_{i_1},\ldots,u_{i_k})\|-\|p(x;\tilde{u}_{i_1},\ldots,\tilde{u}_{i_k})\| | \leq \|p(x;u_{i_1},\ldots,u_{i_k})-p(x;\tilde{u}_{i_1},\ldots,\tilde{u}_{i_k})\|\leq 3d \frac{\delta}{20\cdot d}=\frac{3\delta}{20}.$$ 

Since $\|p\|$ is a composition of a polynomial with the root function, we can approximate it  to accuracy of  $\delta/20$, with a neural network $\nu$ with $\sim \log(1/\delta)$ parameters, and we will have 
\begin{align*}| \|p(x_j;u_{i_1},\ldots,u_{i_k})\|-\nu(x_j;\tilde u_{i_1},\ldots,\tilde u_{i_k})|&\leq | \|p(x;u_{i_1},\ldots,u_{i_k})\|-\|p(x;\tilde{u}_{i_1},\ldots,\tilde{u}_{i_k})\| |\\
& \quad \quad +| \|p(x;\tilde{u}_{i_1},\ldots,\tilde{u}_{i_k})\|-\nu(x;\tilde{u}_{i_1},\ldots,\tilde{u}_{i_k}) |  \\
&\leq \frac{\delta}{20}+\frac{3\delta}{20}=\frac{\delta}{5} . 
\end{align*}

Let us now relabel $\nu(x_j;\tilde u_{i_1},\ldots,\tilde u_{i_k}) $ as 
$$\nu_j(X):=\nu(x_j;\tilde u_{i_1},\ldots,\tilde u_{i_k}) $$

We will approximate $f$ by the function $\hat f$ defined by 
\begin{align*}
\psi(X)&=\min_{1\leq j \leq n} \phi_{a,b}\left(\nu_j(X)\right) , \text{ where } a=\frac{8\delta}{5}, b=\frac{9\delta}{5} \\
\phi_j(X)&=\phi_{a,b}\left( \nu_j(X)\right) , \text{ where } a=\frac{6\delta}{5}, b=\frac{7\delta}{5} \\
N(X)&=\sum_{j=1}^n  \phi_j(X)\\
\hat f(X)&=(1-\psi(X))\tilde{f}_0(X)+\psi(X)\cdot \frac{1}{N(X)} \sum_{j=1}^n  \phi_j(X)\tilde{f}_j(X)
\end{align*}
We note that the coefficient $1-\psi(X) $ multiplying $\tilde f_0(X)$ is not zero, only when $\nu_j(X)<\frac{9\delta}{5} $ for all $j$.  This implies that all projections of the $j$-th coordinate have norm of at most $2\delta$, so $X\in \mathcal{E}(\mathbf{i},\delta)$, and so for all such $X$, the value $f(X)$ is approximated by $\tilde f_0(X)$ to $3\epsilon/4$ accuracy. 

When $\psi(X)$ is not zero, we can deduce that there exists some $j$ for which $\nu_j(X)>\frac{8\delta}{5}$ and thus the norm of  $p(x_j;x_{i_1},\ldots,x_{i_k})$ is larger than $\frac{7\delta}{5} $. Thus, for this $j$ we will have that $\phi_j(X)=1 $ and that $X\in \mathcal{D}(\mathbf{i},\delta) $. In particular, this means that $N(X)\geq 1 $ whenever $\psi(X)>0$ and so the division by $N(X)$ is well defined. In fact, for any $j$ for which $\phi_j(X)>0$ we have that $\nu_j(X)>\frac{6\delta}{5}$ and thus the norm of  $p(x_j;x_{i_1},\ldots,x_{i_k})$ is larger than $\delta $, and so $X\in \mathcal{D}(\mathbf{i},\delta) $. It follows that when the coefficient $\psi(X)\cdot \phi_j(X) $ is not zero, we know that  $\tilde f_j(X)$ is an $\epsilon$ approximation of $f(X)$ (by the inductive hypothesis). It follows that
\begin{align*}
|f(X)-\hat f(X)|&=|(1-\psi(X))(f(X)-\tilde f_0(X))+\psi(X) \frac{1}{N(X)}\sum_{j=1}^n \phi_j(X)(\tilde f_j(X)-f(X))| \\
&\leq (1-\psi(X))\cdot 3\epsilon/4+\psi(X)\cdot\epsilon\leq \epsilon.
\end{align*}
Finally, we note that $\hat f$ is defined by product and summations or neural networks, and this can be $\epsilon/2$ approximated using neural networks which approximate multiplications, with an additional cost of $\sim \log(1/\epsilon)$. This concludes the proof of the lemma, and the proof of the theorem.
\end{proof}

\subsection{Approximation rates for rotation  invariant function spaces using frames}\label{subsec:frames}
Frames are a popular recent notion in the design of invariant models for point sets and other datatypes. For a group $G$  acting on a collection of point sets, we define a frame to be any mapping $X\mapsto \mu_X$ which assigns to each point set $X$ a measure $\mu_X$ over $G$. This is used to define a generalized averaging operation on functions $f$ defined on point sets is, via 
\begin{equation}\label{eq:frame_avg}
\mathcal{I}[f](X)=\int_G f(g^{-1}X)d\mu_X(g).
\end{equation}
To ensure that the function $\mathcal{I}[f]$ is invariant, one can require that the mapping $X\mapsto \mu_X$ is $G$-equivariant, or weakly $G$ equivariant \cite{dym2024equivariant} (though our approximation results will not require these equivariance assumptions). One simple example of a frame is the mapping which takes $X$ to the Haar measure on $G$, in which case the frames performs a standard group averaging procedure. Another family of examples are canonicalization algorithms, where each $\mu_X$ is supported on a single element, see e.g., \cite{minimal_frame}.

From a practical perspective, an advantage of frame based models is they can apply a successful non-invariant model $f_\theta$ to the data, and enforce invariance via the generalized averaging operation to obtain an invariant model $\mathcal{I}[f]  $. From the approximation perspective, we have a similar advantage here. Namely, approximation rate we have for a ReLU neural network $f_\theta$, will be inherited by frame-averaged $\mathcal{I}[f]$. In particular we can adapt Corollary \ref{cor:reluE} and Corollary \ref{cor:perm_also} to obtain

\begin{prop}[Frame approximation rates for $E(d)\times S_n$ invariant functions]\label{prop:reluE}
Let $d$ and $n\geq d+1$ be natural numbers and let $\alpha\in (0,1] $. Let  $G=E(d)\times S_n$, and let $\mathcal I$ be a frame averaging operator as in \eqref{eq:frame_avg}. Let $K_{1/4}\subseteq \RR^{d\times n}$ be the compact set 
$$K_{1/4}=\{(x_1,\ldots,x_n)\in \RR^{d\times n}| \quad \|x_j\| \leq \frac{1}{4}, \forall j=1,\ldots,n\} .$$
Then there exist $W=W(d,\alpha),L=L(d,\alpha),q=q(d,\alpha)$, such that, for all $\epsilon>0$, and any $E(d)$ invariant function $f:\RR^{d\times n} \to \RR $ which is $\alpha$-H\"older with $|f|_\alpha=1 $,  can be approximated to $\epsilon$ accuracy on the set $K_{1/4}$ by $\mathcal{I}[\psi] $, where $\psi$ is a neural network with width $\leq W$ and depth $\leq L(1/\epsilon)^{\frac{nd-\dim(E(d))}{2\alpha}}\left(\log(1/\epsilon)\right)^q$.
\end{prop}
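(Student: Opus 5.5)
The plan is to exploit the fact that frame averaging only ever evaluates the base network on transformed copies $g^{-1}X$ of the input. The result will then be inherited from Corollary \ref{cor:reluE} and Corollary \ref{cor:perm_also}, applied on a slightly larger domain. The key identity is that for a $G$-invariant $f$ and any probability measure $\mu_X$ on $G$,
\begin{equation*}
f(X)-\mathcal{I}[\psi](X)=\int_G \left(f(g^{-1}X)-\psi(g^{-1}X)\right)d\mu_X(g).
\end{equation*}
Hence
$$|f(X)-\mathcal{I}[\psi](X)|\leq \sup_{g\in \mathrm{supp}(\mu_X)}|f(g^{-1}X)-\psi(g^{-1}X)|.$$
It therefore suffices to find a ReLU network $\psi$ that approximates $f$ to accuracy $\epsilon$ uniformly on the set of all points $g^{-1}X$ with $X\in K_{1/4}$ and $g$ in the support of $\mu_X$. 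Here I assume, as is implicit in \eqref{eq:frame_avg}, that each $\mu_X$ is a probability measure.

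\textbf{Step 1: controlling the transformed domain.} Permutations and orthogonal maps preserve the column norms, so the only problem is translations. If the frame is arbitrary, $g^{-1}X$ could be translated far away, so I would first normalize. One option is to precompose $\psi$ with the centering map $T$ from the proof of Corollary \ref{cor:reluE}, i.e.\ $T(X)=(x_1-x_n,\ldots,x_{n-1}-x_n,0)$, or with the mean-centering map $X\mapsto X-\frac1n X\mathbf{1}\mathbf{1}^T$. The latter is $E(d)\times S_n$-equivariant modulo translations and linear, so the composition is still a ReLU network of the same size.

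\textbf{Step 2: the approximation domain.} For mean-centering, a point $g^{-1}X$ with $X\in K_{1/4}$ is, after centering, a rotated and permuted copy of a centered cloud with columns of norm $\leq 1/2$, hence lies in $K_{1/2}$. That is exactly the domain of Corollary \ref{cor:reluE}. This is why the proposition is stated on $K_{1/4}$, with the factor $2$ lost to centering. Since $f$ is translation invariant, $f\circ(\text{centering})=f$.

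\textbf{Step 3: conclusion.} Corollary \ref{cor:reluE}, combined with Corollary \ref{cor:perm_also} for the $S_n$ part, gives a network $\tilde\psi$ of width $\leq W$ and depth $\leq L(1/\epsilon)^{\frac{nd-\dim(E(d))}{2\alpha}}(\log(1/\epsilon))^q$ with $|f-\tilde\psi|\leq\epsilon$ on $K_{1/2}$. Set $\psi=\tilde\psi\circ(\text{centering})$. Then for every $X\in K_{1/4}$ and every $g\in G$ we have $|f(g^{-1}X)-\psi(g^{-1}X)|\leq\epsilon$, and the displayed bound gives the claim.

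\textbf{Main obstacle.} The delicate step is Step 2: making sure every group element in the frame's support moves $X$ into the region where the ReLU approximation is valid. Translations are unbounded, so without the centering trick, or an assumption that the frame only uses bounded translations, the argument fails. I would absorb the centering into $\psi$ as described, noting that adding a linear first layer does not change the width or depth bounds beyond constants.
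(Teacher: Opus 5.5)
Your proposal is correct and matches the paper's proof. The paper also sets $\psi=\phi\circ\cent$, with $\phi$ the network from Corollary \ref{cor:reluE}, checks that $\cent(g^{-1}X)\in K_{1/2}$ for all $X\in K_{1/4}$ and $g\in G$, and bounds $|\mathcal{I}[f](X)-\mathcal{I}[\psi](X)|$ by integrating the pointwise error. You also made explicit two assumptions the paper's identity $f=\mathcal{I}[f]$ uses silently: that $\mu_X$ is a probability measure, and that $f$ is invariant under all of $E(d)\times S_n$ rather than only $E(d)$ as the statement literally says.
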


\begin{proof}
The claim follows almost immediately from Corollary \ref{cor:reluE} and Corollary \ref{cor:perm_also}. The only technicality that needs to be addressed is, that for all $X\in K_{1/4}$, we need to ensure that all possible $g^{-1}(X)$ encountered in the averaging operation in \eqref{eq:frame_avg} will reside in some known compact set. 

To this effect, we define the centralization operator
\begin{equation}\label{eq:cent}
\cent(X)=X-\frac{1}{n}X1_n1_n^T \end{equation}
which translates $X$ so that the average of its columns is zero. One can easily verify that
$$\cent(gX)\in K_{1/2}, \quad \forall X \in K_{1/4}, g\in G .$$
Using Corollary \ref{cor:reluE} we can approximate $f$ on $K_{1/2} $ to $\epsilon$ accuracy by a neural network $\phi$ with the advertised number of parameters. We then have that the ReLU network $\psi:=\phi \circ \cent $ satisfies our claim because 
 for all $X$ in $K_{1/4} $,
\begin{align*}
|f(X)-\mathcal{I}[\psi](X)|&=|\mathcal{I}[f](X)-\mathcal{I}[\psi](X)|\\
&=\left| \int \left[f(g^{-1}X)-\psi(g^{-1}X) \right]d\mu_X(g)  \right| \\
&\leq \int \left|f(g^{-1}X)-\psi(g^{-1}X) \right|d\mu_X(g)\\
&=\int \left|f\circ \cent(g^{-1}X)-\phi\circ \cent(g^{-1}X) \right|d\mu_X(g)<\epsilon 
\end{align*}
\end{proof}

\paragraph{Frame averaging with a Deep Sets backbone}
While frame averaging and canonicalization has been applied directly to MLPs (see for example \cite{friedmann2025canonnetcanonicalorderingcurvature}, it seems that a more popular approach is to define an averaging approach over rotations and applying it to a permutation invariant ``backbone'' like Deep Sets  to achieve permutation invariance. This type of strategy is discussed in \cite{pozdnyakov2023smooth,punyframe,dym2024equivariant}. In this setting we cannot currently show that any frame averaging operator will have the desired approximation rates, but we can show this is the case for some examples. 

For simplicity, let us consider point clouds with $d=2$, with a frame similar to the one  defined in \cite{pozdnyakov2023smooth,dym2024equivariant}. The frame averaging operator here is of the form 
$$\mathcal{I}[f](X)=\sum_{i=1}^n w_i(X)f(R_i^{-1}\cent(X)), $$
where $R_i^{-1}=R_i^{-1}(X) $ is defined to be the rotation in $SO(2)$ which rotates $X_i$ to the positive $x$ axis. The weights $w_i$ are defined so that they vanish when $x_i=0$ (as in this case $R_i^{-1}(X)$ is not well defined. In any case, we see that 
$R_i^{-1}\cent(X)$ resides in the subspace 
$$V_i=\{X\in \RR^{2\times n}| \quad X1_n=0_d, \quad X_{i,2}=0 \} $$
which is of dimension 
$$\dim(V_i)=2n-3=\dim(V_i)-\dim(E(2)).  $$
Accordingly, we can obtain $\epsilon$ approximation forH\"older functions $f$, using a model $\mathcal{I}[\psi]$, where $\psi$ is a Deep Sets model, using our Corollary \ref{deep-sets-corollary}. 

For point clouds with dimension $d\geq 3$ a similar frame can be constructed, and similar approximation rates can be obtained, we refer to \cite{pozdnyakov2023smooth,dym2024equivariant} for more details. 

A more popular $E(d)$ averaging procedure uses the singular value decomposition of the point cloud. This method was first introduced in \cite{punyframe} and was also used for FAENet \cite{faenet}. A given point cloud $X$ is first centralized to attain translation invariance, and then the rotations are fixed by considering the SVD decomposition of $\cent(X)$, namely 
\begin{equation*}
	\cent(X)=UDV^T 
	\end{equation*}
where $U\in O(d) $, the matrix $D\in \RR^{d\times d}$ is a diagonal matrix with non-negative entries (the singular values of $\cent(X)$), and $V\in \RR^{n\times d} $ is a matrix whose $d\leq n$ columns are orthonormal. In the case where the singular values  are distinct, this decomposition is unique up to multiplication of $U$ and $V$ by a diagonal matrix $S $ whose diagonal entries are in $\{-1,1\}$. Thus there are in total $2^d$ (in applications typically $d=3$ so $2^d=8$) different possible decompositions 
\begin{equation}\label{eq:svd}
\cent(X)=U_jDV_j^T, \quad j=1,\ldots,2^d .
\end{equation}
The frame suggested by \cite{punyframe} is then defined by 
\begin{equation}\label{eq:puny}
\mathcal{I}[f](X)=2^{-d}\sum_{j=1}^{2^d} f(U_j^{-1}\cent(X)). \end{equation}
It is also shown in \cite{punyframe} that the model obtained by applying this frame to Deep Sets if invariant to both rigid motions and permutations, and moreover can approximate all such functions uniformly on some compact set $K$. This claim  requires that $K$ is contained in 
$$\Omega_{d,n}=\{X\in \RR^{d\times n}| \text{ the singular values of } \cent(X) \text{ are pairwise distinct. }\} $$
so that the frame in \eqref{eq:puny} is well defined. We now prove approximation rates for this method:
\begin{prop}
Let $d,n$ be natural numbers, such that $n\geq d+1 $. Let $\alpha \in (0,1])$ and let $\Omega_{d,n}' $ be the set 
$$\Omega_{d,n}'=\{ X\in \RR^{d\times n}| \text{ the singular values of } \cent(X) \text{ are pairwise distinct, and } \|\cent(X)\|_F\leq 1  \} .$$
 Then there exists a constant $C=C(d,n,\alpha) $, such that, for every $\epsilon>0$ and every $\alpha$-H\"older function $f$ with $|f|_\alpha=1 $, there exists a Deep Sets model $\psi_{\mathrm{Deep Sets}} $ with 
 $$P\leq C\epsilon^{-\frac{nd-\dim(E(d))}{2\alpha}}(1+\log(|\epsilon|) $$
parameters, such that 
$$|f(X)-\mathcal{I}[\psi_{\mathrm{Deep Sets}} ](X)|\leq \epsilon, \quad \forall X\in \Omega_{d,n}'.$$ 
\end{prop}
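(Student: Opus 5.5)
The plan is to reduce the statement to Corollary \ref{deep-sets-corollary}, applied on the low-dimensional set of ``canonicalized'' point clouds. We assume, as the context makes clear, that $f$ is $E(d)\times S_n$ invariant. Then for every $X\in\Omega_{d,n}'$ and every $j$ we have $f(X)=f(U_j^{-1}\cent(X))$. Hence
$$|f(X)-\mathcal{I}[\psi](X)|\leq 2^{-d}\sum_{j=1}^{2^d}\bigl|f(U_j^{-1}\cent(X))-\psi(U_j^{-1}\cent(X))\bigr|.$$
It therefore suffices to approximate $f$ by a Deep Sets model $\psi$ uniformly on the set
$$S=\{U^{-1}\cent(X):\ X\in\Omega_{d,n}',\ U \text{ from some decomposition } \eqref{eq:svd}\}.$$
By \eqref{eq:svd}, every element of $S$ has the form $Y=DV^T$. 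Here $D$ is diagonal with nonnegative entries and $\|D\|_F=\|\cent(X)\|_F\leq 1$. The matrix $V\in\RR^{n\times d}$ has orthonormal columns, all orthogonal to $1_n$, since $\cent(X)1_n=0$ and $D$ is invertible on $\Omega_{d,n}'$. So $S\subseteq \hat S:=\{DV^T:\ D\in \mathcal{B},\ V\in \mathrm{St}\}$, where $\mathcal{B}$ is the set of diagonal matrices in the Frobenius unit ball and $\mathrm{St}$ is the Stiefel manifold of $d$-frames in the $(n-1)$-dimensional space $1_n^\perp$. The set $\hat S$ is permutation invariant, because permuting columns of $Y$ permutes rows of $V$ and preserves orthogonality to $1_n$.

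The key step is a covering bound $N_r(\hat S/S_n)\leq N_r(\hat S)\leq C_1 r^{-\dint}$ with $\dint=nd-\dim(E(d))$. The map $(D,V)\mapsto DV^T$ is Lipschitz on the compact product $\mathcal{B}\times\mathrm{St}$. This product is a compact (Lipschitz) manifold of dimension
$$d+\Bigl(d(n-1)-\tfrac{d(d+1)}{2}\Bigr)=nd-\tfrac{d(d+1)}{2}=nd-\dim(E(d)).$$
A compact $k$-dimensional manifold admits $\epsilon$-nets of size $Cr^{-k}$, and Lipschitz images inherit this bound up to constants. This gives the covering estimate, including in the $\ell_\infty$ norm, by norm equivalence. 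Entries of $Y\in\hat S$ lie in $[-1,1]$. We therefore precompose with the affine map $x\mapsto (x+\mathbf{1})/2$ applied pointwise, which is absorbed into the first layer of $\Phi$, to land in $\Omega^n=[0,1]^{d\times n}$. This changes the H\"older constant of $f$ only by a factor $2^\alpha$, and we can rescale $\epsilon$ accordingly. Applying Corollary \ref{deep-sets-corollary} to the restriction of $f$ to the image of $\hat S$ yields $\psi_{\mathrm{Deep Sets}}$ with $P\leq C\epsilon^{-\dint/(2\alpha)}(1+|\log\epsilon|)$ parameters and error $\leq\epsilon$ on $\hat S$. Plugging this into the displayed inequality finishes the argument.

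The main obstacle I expect is the hypothesis $\dint\geq 2d+2$ in Corollary \ref{deep-sets-corollary}, which is needed so that the $(m+1)^{d+1}$ cost of $\Phi$ is dominated. When $nd-d(d+1)/2\geq 2d+2$, which holds for $d=3$ once $n\geq 4$, there is nothing to do. In the remaining small cases I would go back to Theorem \ref{main-deep-sets-approximation-theorem} directly and either absorb the extra $(m+1)^{d+1}$ term into the constant or state the proposition under that mild restriction. A secondary technical point is making the Lipschitz-image covering argument rigorous for the Stiefel manifold. For this I would cover $\mathrm{St}$ by finitely many smooth charts, each contained in a bounded Euclidean parameter domain of dimension $d(n-1)-d(d+1)/2$, and cover each chart by a grid.
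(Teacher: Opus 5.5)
Your proposal follows the paper's proof essentially step for step: write $U_j^{-1}\cent(X)=DV_j^T$, place these canonicalized clouds in a Lipschitz image of a compact set of dimension $d+d(n-1)-\frac{d(d+1)}{2}=nd-\dim(E(d))$ to get the covering bound, invoke Corollary~\ref{deep-sets-corollary}, and finish with the frame-averaging triangle inequality; you are also more careful than the paper about the $E(d)\times S_n$ invariance of $f$, the rescaling into $[0,1]^{d\times n}$, the permutation invariance of the covered set, and the hypothesis $\dint\geq 2d+2$. Two small corrections. First, pairwise distinct singular values still allow $s_d=0$, so $D$ need not be invertible; the paper glosses over this too, and the containment $S\subseteq\hat S$ still holds because that row of $DV^T$ vanishes and $v_d$ can be re-chosen in $1_n^\perp\cap\{v_1,\ldots,v_{d-1}\}^\perp\neq\{0\}$, using $n\geq d+1$. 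Second, for $d=3$ the condition $3n-6\geq 8$ requires $n\geq 5$ rather than $n\geq 4$, and the $(m+1)^{d+1}$ cost grows with $m$, so it cannot be absorbed into a constant; restricting the statement is the right fix.
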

\begin{proof}
Using \eqref{eq:svd} we see that the expressions $U_j^{-1}\cent(X) $  are equal to 
$$ U_j^{-1}\cent(X) =DV_j^T$$
where $D$ is a diagonal matrix whose diagonal entries are the singular values of $\cent(X) $, and $V_j^T$ is a $d\times n $ matrix whose $d$ rows are orthonormal. Also note that $V_j^T1_n=0 $ since $\cent(X)1_n=0 $, so that the rows of $V_j^T$ are orthogonal to $1_n$. The space of all such matrices $V_j^T$ can be identified with the Stiefel manifold 
$$\mathcal{V}_{d,n-1}=\{V\in \RR^{d\times (n-1)}| \quad \text{ the rows of } V \text{ are orthonormal }\}. $$
Explicitly,  $V_j^T$ can be written as $V_j^T= V\circ T_0$, where $V\in \mathcal{V}_{d,n-1} $ and $T_0:\RR^n\to \RR^{n-1} $ is a fixed linear mapping defined by setting $T_01_n=0_{n-1}$, and by fixing some isometric mapping of the orthogonal complement of $1_n$  to $\RR^{n-1} $. 

Now, for all $X\in \Omega_{d,n}'$, using the argument we have seen above, we see that $U_j^{-1}\cent(X) $ is in the image of the smooth function 
$$h:\RR^d \oplus \mathcal{V}_{d,n-1} \to \RR^{d\times n}, \quad h(\lambda,V)=\mathrm{diag}(\lambda)\circ V \circ T_0  $$
Since the frobenius norm, and thus the maximal singular value, of $\cent(X) $, is bounded by $1$, we know in fact that $U_j^{-1}\cent(X) $ is in the image of $h$ when restricted to the compact set $K=[0,1]^d \oplus \mathcal{V}_{d,n-1}  $. This is a compact subset of the manifold  $\RR^d \oplus \mathcal{V}_{d,n-1} $, whose dimension is given by 
$$D:=\dim\left(\RR^d \oplus \mathcal{V}_{d,n-1} \right)=d+(n-1)d-\frac{d(d+1)}{2}=nd-\frac{d(d+1)}{2}=nd-\mathrm{dim}(E(d)). $$
It follows that for an appropriate $C_1>0 $,
$$N_r(K)\leq C_1r^{-D}, \quad \forall r>0. $$
and since $h$ is smooth, it is $L$-Lipschitz on $K$ for an appropriate positive $L$. We can then produce a cover of $h(K)$ by balls of radius $r$, by applying $h$ to a cover of $K$ by balls of radius $r/L $. Thus 
$$N_r(h(K))\leq C_1L^Dr^{-D}, \quad \forall r>0. $$
We can then apply Corollary \ref{deep-sets-corollary} to obtain an $\epsilon$ approximation for $f$ on $h(K) $ by a Deep Sets network $\psi_{\mathrm{Deep Sets}} $ with $\sim \epsilon^{-\frac{D}{2\alpha}}(1+\log|\epsilon|)$ parameters. It follows that for all $X\in \Omega_{d,n}'$. The notation $1_n$ (respectively $0_n$) stands for the vector in $\RR^n$ whose entries are all $1$ (respectively $0$). 
\begin{align*}
|f(X)-\mathcal{I}[\psi_{\mathrm{Deep Sets}} ](X)|&=|\mathcal{I}[f](X)-\mathcal{I}[\psi_{\mathrm{Deep Sets}} ](X)|\\
&=2^{-d}|\sum_{j=1}^{2^d} f(U_j^{-1}\cent(X))-\psi_{\mathrm{Deep Sets}}(U_j^{-1}\cent(X))|\\
&\leq 2^{-d}\sum_{j=1}^{2^d} |f(U_j^{-1}\cent(X))-\psi_{\mathrm{Deep Sets}}(U_j^{-1}\cent(X))|\leq \epsilon
\end{align*}
\end{proof}

\section{Bi-Lipschitz invariant models}\label{sec:bilip}
In recent years there has been growing interest in designing bi-Lipschitz invariant models. Such models provide an invariant embedding of the quotient space with a bounded metric distortion, which can be useful for metric based tasks like nearest neighbor search \cite{cahill2024towards} or for avoiding oversquashing in graph neural networks \cite{fswgnn}. Examples of recent papers discussing bi-Lipschitzness include \cite{balan2022permutation,amir2024fsw} for bi-Lipschitz permutation invariant learning on point clouds, \cite{Mizrachi} for the alternating group,  extensions to rotations in \cite{sverdlov2025toward}, and \cite{davidson2024,fswgnn} in the context of graph neural networks, as well as the more general max filtering approach \cite{cahill2024group}.  We will next formally define bi-Lipschitz invariant models, and then obtain approximation rates for these models.  

As discussed in Section \ref{sec:prelim}, we will consider the action on $V=\RR^N$ of a group $G$ which is a closed subgroup of the group of isometries of $\RR^N$. We can then define the metric $d_G$ on the quotient space $V/G $ simply by quotienting by the group, as in \eqref{eq:dG}. 


An invariant function $E:V\to \RR^m$ can be uniquely identified with a function $\tilde E:V \to \RR^m$ satisfying the equation 
$$\tilde E([x])=E(x), \quad \forall x\in V .$$

The following simple but useful lemma shows that for invariant functions $f$, H\"older stability with respect to $\|\cdot \|_V $ is equivalent to H\"older stability with respect to the metric on the quotient space. 
\begin{lem}\label{lem:holder}
Let $V=\RR^N$ and let $G$ be a closed subgroup of the isometry group of $V$. Let $(Y,d_Y)$ be some metric space,  let $f:V\to Y $ be an invariant function and let $\tilde f:V/G\to Y$ be the induced function on the quotient space. Then $f$ is $\alpha$-H\"older if, and only if, $\tilde f$ is $\alpha$-H\"older. Moreover, $|f|_\alpha=|\tilde f|_\alpha $.
\end{lem}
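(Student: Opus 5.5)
The plan is to prove both implications by comparing the two suprema that define the Hölder seminorms. Here $|f|_\alpha$ is the supremum of $d_Y(f(x),f(y))/\|x-y\|^\alpha$ over $x\neq y$ in $V$. Likewise $|\tilde f|_\alpha$ is the supremum of $d_Y(\tilde f([x]),\tilde f([y]))/d_G([x],[y])^\alpha$ over $[x]\neq[y]$. Showing each seminorm bounds the other gives both the equivalence and the equality $|f|_\alpha=|\tilde f|_\alpha$; both quantities may be $+\infty$, in which case the equivalence is read as the statement that one is finite if and only if the other is.

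\emph{Step 1: $|f|_\alpha\le|\tilde f|_\alpha$.} Because the identity lies in $G$, definition \eqref{eq:dG} gives $d_G([x],[y])\le\|x-y\|$ for all $x,y$. Let $x\neq y$.
\begin{itemize}
\item If $[x]=[y]$, then invariance gives $f(x)=f(y)$, so the quotient is $0$.
\item Otherwise
$$d_Y(f(x),f(y))=d_Y(\tilde f([x]),\tilde f([y]))\le |\tilde f|_\alpha\, d_G([x],[y])^\alpha\le|\tilde f|_\alpha\|x-y\|^\alpha .$$
\end{itemize}

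\emph{Step 2: $|\tilde f|_\alpha\le|f|_\alpha$.} Take orbits $[x]\neq[y]$. Since $G$ is a closed subgroup of $E(N)$ and the map $g\mapsto\|gx-y\|$ is continuous, the minimum in \eqref{eq:dG} is attained at some $g_0\in G$. To justify attainment, we only need to minimize over $\{g:\|gx-y\|\le\|x-y\|\}$. This set is closed and bounded in $E(N)\cong O(N)\ltimes\RR^N$, because $\|gx-y\|$ bounded forces the translation part to be bounded. It is therefore compact, and the minimum is attained. Using invariance, $f(x)=f(g_0x)$, and hence
$$d_Y(\tilde f([x]),\tilde f([y]))=d_Y(f(g_0x),f(y))\le|f|_\alpha\|g_0x-y\|^\alpha=|f|_\alpha d_G([x],[y])^\alpha .$$
If one prefers to avoid the attainment argument, taking an infimum over near-minimizers $g$ gives the same bound.

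Combining the two steps yields equality of the seminorms, and with it the equivalence. There is essentially no hard step here. The only mildly delicate point is the attainment of the minimum in \eqref{eq:dG}, which the paper already implicitly assumes when calling $d_G$ a metric. Even that point is avoidable by the infimum argument above.
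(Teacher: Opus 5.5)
Your proof is correct and follows the same two-step argument as the paper: the identity element gives $d_G([x],[y])\le\|x-y\|$, which yields $|f|_\alpha\le|\tilde f|_\alpha$, and a minimizing group element in \eqref{eq:dG} combined with invariance yields the reverse inequality. Your compactness argument for attainment of the minimum (or the infimum alternative) and your separate handling of the case $[x]=[y]$ fill in details the paper leaves implicit.
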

\begin{proof}
Assume $\tilde f$ is $\alpha$-H\"older. Then for all $x,x'\in V$, we have 
$$d_Y(f(x),f(x'))=d_Y(\tilde f([x]),\tilde f([x']))\leq |\tilde f|_\alpha d^{\alpha}(x,x')\leq |\tilde f|_\alpha \|x-x'\|_2^\alpha $$
and thus $f$ is $\alpha$-H\"older and $|f|_\alpha \leq |\tilde f|_\alpha  $.

In the other direction, assume $f$ is $\alpha$-H\"older. Choose some $x,x'\in V$. there exists some $g\in G$ for which 
$$d_G([x],[x'])=\|x-gx'\|_2 .$$
For this $g$, we have 
$$d_Y(\tilde f([x]),\tilde f([x']))=d_Y(f(x),f(gx'))\leq |f|_\alpha \|x-gx'\|_V^\alpha=|f|_\alpha d(x,x')^\alpha. $$
This shows that $\tilde f $ is $\alpha$-H\"older and $|f|_\alpha \geq |\tilde f|_\alpha  $, which  concludes the proof.
\end{proof}
We say that $E$ 
is an invariant bi-Lipschitz mapping, with constants $0<L_1,L_2$, if  $\tilde E$ is invertible, and the mapping $\tilde E$ and $\tilde E^{-1}$ are $L_1$ and $L_2$ Lipschitz, respectively. Equivalently, this means that
$$L_2^{-1} d_G([x],[y])\leq \|E(x)-E(y)\|_\infty\leq L_1 d([x],[y]), \quad \forall x,y\in V. $$

A trivial approximation result for Lipschitz invariant functions $f$ can be obtained by observing that $f=f\circ E^{-1}\circ E$, and $f\circ E^{-1} $ is Lipschitz. However, this will lead to an approximation rate whose exponent depends on the embedding dimension $m$, which is generally larger than the true ``intrinsic dimension'' $\dim(K/G)$.

Using Corollary \ref{cor:intrinsic}, we can prove the following:

\begin{theorem}\label{thm:bi_lip}
Let $V=\RR^N$ and let $G$ be a closed subgroup of the isometry group of $V$. Let $E:V\to \RR^m$ be an invariant  bi-Lipschitz mapping with constants $L_1,L_2$. Let $C_1>0$ and $\dint\geq 2 $, and let $K\subseteq V$ be a $G$ invariant subset of $V$, such that $K/G$ is a  compact subset of  $V/G$ of diameter $1$, and such that
$$N_r(K/G)\leq C_1r^{-\dint}, \quad \forall r> 0 .$$
Then there exists a constant $P=P(N,G,L_1,L_2,C_1,\dint)$, such that, for all $\epsilon>0$, and every
$f:V \to \RR$  which is invariant and $\alpha$-H\"older with $|f|_\alpha=1 $,  there exists a ReLU networks $\psi$ expressible with  $\leq  P(1/\epsilon)^{\frac{\dint}{2\alpha}}\log(1/\epsilon) $ parameters, such that 
$$|f(x)-\psi\circ E(x)|<\epsilon, \quad  \forall x \in K $$
\end{theorem}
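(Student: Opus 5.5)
The plan is to move the problem entirely into the embedding space $\RR^m$ and apply Corollary \ref{cor:intrinsic} there.

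\textbf{Step 1: the image set.} Let $Z:=E(K)=\tilde E(K/G)\subseteq\RR^m$. Since $\tilde E$ is $L_1$-Lipschitz and $K/G$ is compact, $Z$ is compact. Its diameter in $\|\cdot\|_\infty$ is at most $L_1$.

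\textbf{Step 2: the target function on $Z$.} Define $h:=\tilde f\circ\tilde E^{-1}:Z\to\RR$. By Lemma \ref{lem:holder}, $\tilde f$ is $\alpha$-H\"older on $(V/G,d_G)$ with constant $1$. Composing with the $L_2$-Lipschitz map $\tilde E^{-1}$ gives
\[
|h(z)-h(z')|\le L_2^\alpha\|z-z'\|_\infty^\alpha ,
\]
so $\omega(h,t)\le L_2^\alpha t^\alpha$. Moreover $h(E(x))=f(x)$ for every $x\in K$.

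\textbf{Step 3: covering numbers transfer.} Take a cover of $K/G$ by $N_{r/L_1}(K/G)$ balls of radius $r/L_1$ in $d_G$. Their images under $\tilde E$ lie in $\ell_\infty$-balls of radius $r$, so
\[
N_r(Z)\le N_{r/L_1}(K/G)\le C_1L_1^{\dint}r^{-\dint}.
\]
Thus $Z$ has intrinsic dimension $\dint$ in the sense required by Corollary \ref{cor:intrinsic}, even though the ambient dimension is $m$.

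\textbf{Step 4: normalize to the unit cube.} Corollary \ref{cor:intrinsic} is stated on $[0,1]^m$, so we rescale. Pick $z_0\in Z$ and let $A(z)=(z-z_0)/(2L_1)+\tfrac12\mathbf 1$, an affine map sending $Z$ into $[0,1]^m$. The function $h\circ A^{-1}$ has modulus at most $(2L_1L_2)^\alpha t^\alpha$. The covering numbers of $A(Z)$ change only by a constant factor. We must also extend $h\circ A^{-1}$ from $A(Z)$ to all of $[0,1]^m$ with the same H\"older modulus up to constants. A McShane-type extension $\inf_z\{h(z)+c\|\cdot-z\|_\infty^\alpha\}$ does this, since $\|\cdot\|^\alpha$ is a metric for $\alpha\le1$. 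Alternatively, the proof of Theorem \ref{main-deep-sets-approximation-theorem} only evaluates $f$ at centers $z_{\mathbf i,q}$ near $K$; we can handle this by the same extension, or by evaluating at a nearby point of $Z$ at a cost of a constant factor in the error.

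\textbf{Step 5: apply the corollary.} Apply Corollary \ref{cor:intrinsic} with $n$ replaced by $m$ and $d=1$, choosing $m_{\text{grid}}\sim\epsilon^{-1/\alpha}$ so that $2\omega\le\epsilon$. We then count parameters.
\begin{itemize}
\item $\Phi$ has $O(m^2\,m_{\text{grid}})$ parameters.
\item The bit-extraction part of $\rho$ has $O(m_{\text{grid}})$ layers of constant width.
\item The interpolation part of $\rho$ has depth $O\bigl(\sqrt{N_r(Z)\log(\cdot)}\bigr)$ with $r\sim 1/m_{\text{grid}}$, which is $O\bigl(\epsilon^{-\dint/(2\alpha)}\sqrt{\log(1/\epsilon)}\bigr)$.
\end{itemize}
Since $\dint\ge2$, the term $\epsilon^{-\dint/(2\alpha)}$ dominates $\epsilon^{-1/\alpha}$. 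The total is therefore $\le P(1/\epsilon)^{\dint/(2\alpha)}\log(1/\epsilon)$.

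\textbf{Step 6: assemble the network.} The map $z\mapsto\rho(\Phi(z_1),\dots,\Phi(z_m))$ composed with $A$ is a single ReLU network $\psi$, because affine maps are absorbed into the first layer. For $x\in K$,
\[
|f(x)-\psi(E(x))|=|h(E(x))-\psi(E(x))|<\epsilon .
\]

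\textbf{Main obstacle.} The delicate point is Step 4. Corollary \ref{cor:intrinsic} needs a function on the whole cube, and its covering condition is in $\ell_\infty$ on the cube. We must verify that the H\"older extension and the rescaling change only constants. The resulting constants depend on $m$; if $m$ is not determined by $N$ and $G$, it must be absorbed into $P$ through $E$, or we use that $m$ is fixed by the given $E$.
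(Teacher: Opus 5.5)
Your proposal is correct and follows the paper's proof. Like the paper, you pass to $h=\tilde f\circ\tilde E^{-1}$ on $E(K)$, note that it is $\alpha$-H\"older with constant $L_2^\alpha$, rescale affinely into $[0,1]^m$, and invoke Corollary \ref{cor:intrinsic}. The only differences are details the paper leaves implicit: you transfer covering numbers explicitly through the $L_1$-Lipschitz map $\tilde E$, and you extend the scalar function $h$ by McShane with respect to $\|\cdot\|_\infty^\alpha$ rather than extending $\tilde E^{-1}$ via Kirszbraun, which is the cleaner choice since the target $(V/G,d_G)$ of $\tilde E^{-1}$ is not a Hilbert space.
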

\begin{proof}
Choose $\epsilon>0$. 

Let $\tilde E$ and $\tilde f$ denote the maps on the quotient space $V/G$, induced by the invariant maps $E,f$.

Since $K/G$ has diameter $1$, we have that $\tilde E(K/G)=E(K) $ is of diameter $\leq L_1 $. We also have that $\tilde f \circ \tilde E^{-1} $ is $\alpha$-H\"older because 
 $$|\tilde f \circ \tilde E^{-1}(x)- \tilde f \circ \tilde E^{-1}(y)|\leq  d_G^\alpha( \tilde E^{-1}(x), \tilde E^{-1}(y))\leq L_2^\alpha \|x-y\|^\alpha, \forall x,y\in \RR^m.$$
We note that this argument implicitly uses the fact that the function $\tilde E^{-1}$, originally a $L_2$ Lipschitz function defined on $E(V)\subseteq \RR^m $, can be extended to a $L_2$ Lipschitz function on all of $\RR^m$, by Kirszbraun theorem.
 
 We can now apply Corollary \ref{cor:intrinsic} to the function $\tilde f \circ \tilde E^{-1} $ on the compact set $E(K) $. Technically, to do this we need to scale $E$ by an affine transformation to ensure that $E(K)\subseteq [0,1]^m $, and scale $\tilde f$ so that $|\tilde f|_\alpha=1 $. This rescaling will only effect the constants in Corollary \ref{cor:intrinsic}, and is what leads to the dependence of $P$ on $L_1,L_2 $. We then obtain that there exists a neural network $\psi$ of the advertised size, such that 
 $$|\tilde f \circ \tilde E^{-1}(y)-\psi(y)|\leq \epsilon, \quad \forall y\in K/G $$
It follows that for all $x\in K$,
\begin{align*}|f(x)-\psi \circ E(x)|&=|\tilde f([x])-\psi \circ \tilde E([x])|\\
&=|\tilde f\circ \tilde E^{-1} \circ \tilde E([x])-\psi \circ \tilde E([x])|\\
&\leq \epsilon
\end{align*}

\end{proof}

Theorem \ref{thm:bi_lip} can be applied to automatically  obtain approximation rates for any bi-Lipschitz model. For the sake of concreteness, we state one specific example: bi-Lipschitz invariants using max filters. 

Max filters were first suggested in \cite{cahill2024group}, and were later studied in several papers, examples including \cite{mixon2023max,max_mixon2025,max_QADDURA}.
They are defined for vector space $V=\RR^N$, acted on by  $G$,  a closed subgroup of $O(N)$ (that is, a subgroup of isometries without a translation component). For a given ``template'' vector $z\in \RR^N$, an invariant mapping $E(\cdot;z):V\to \RR $ is defined as 
$$E(x,z)=max_{g\in G} \langle gx,z \rangle . $$
Similarly, for a collection $Z=(z_1,\ldots,z_D)$ an invariant max filtering mapping $E(\cdot;Z):V\to \RR^D $ is defined via 
$$E(x;Z)=\left(E(x;z_1),\ldots, E(x;z_D) \right), j=1,\ldots,D.$$
In \cite{cahill2024group} it was shown that for any generic choice of $2N$ templates, the mapping $E(x;Z) $ in injective. Moreover, it was shown in \cite{balan2023g} that for every \emph{finite} group $G$, this mapping, when injective, will also be bi-Lipschitz. Thus, combining these results with Theorem \ref{thm:bi_lip} we obtain the following corollary 
\begin{cor}
Let $G$ be a finite subgroup of $O(N)$. Let $C_1>0$ and $\dint\geq 2 $, and let $K\subseteq \RR^N$ be a $G$ invariant subset, such that $K/G$ is a  compact subset of  $V/G$ of diameter $1$, and such that
$$N_r(K/G)\leq C_1r^{-\dint}, \quad \forall r\geq 0 .$$
Then there exists a choice of templates $Z=(z_1,\ldots,z_{2N})$, where each  $z_j$ is in $\RR^N $, such that, for all $\epsilon>0$, and every
$f:V \to \RR$  which is invariant and $\alpha$-Holder with $|f|_\alpha=1 $,  there exists a ReLU networks $\psi$ expressible with  $\leq  P(1/\epsilon)^{\frac{\dint}{2\alpha}}\log(1/\epsilon) $ parameters, such that 
$$|f(x)-\psi\circ E(x;Z)|<\epsilon, \quad  \forall x \in K .$$
The constant $P$ does not depend on $f$ or $\epsilon$, but does depend on $C_1,N,Z,\dint,\alpha,G$.  
\end{cor}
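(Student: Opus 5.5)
The corollary follows by combining two facts: the max filter embedding $E(\cdot;Z)$ is an invariant bi-Lipschitz mapping for a suitable choice of templates, and Theorem \ref{thm:bi_lip} holds. So the plan is to verify the hypotheses of Theorem \ref{thm:bi_lip} for $E=E(\cdot;Z)$ and then read off the conclusion.

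First I choose the templates. By \cite{cahill2024group}, for a generic choice of $Z=(z_1,\ldots,z_{2N})\in(\RR^N)^{2N}$ the map $E(\cdot;Z)$ separates orbits. This means the induced map $\tilde E$ on $V/G$ is injective, and it is invertible onto its image $E(V)\subseteq\RR^{2N}$. I fix one such $Z$. Since $G$ is finite, the result of \cite{balan2023g} upgrades injectivity to bi-Lipschitzness, giving constants $L_1,L_2$ with
$$L_2^{-1}d_G([x],[y])\leq\|E(x;Z)-E(y;Z)\|\leq L_1 d_G([x],[y]).$$
The upper bound can also be checked directly: each $x\mapsto\max_{g}\langle gx,z_j\rangle$ is $\|z_j\|$-Lipschitz and $G$-invariant, so it is Lipschitz with respect to $d_G$. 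If \cite{balan2023g} states the bounds in the Euclidean norm rather than $\|\cdot\|_\infty$, switching norms on $\RR^{2N}$ only changes $L_1,L_2$ by factors depending on $N$.

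The remaining hypotheses of Theorem \ref{thm:bi_lip} are already assumptions of the corollary. These are: $G$ is closed (it is finite) and consists of isometries (it lies in $O(N)$); $K$ is $G$-invariant; $K/G$ is compact with diameter $1$; the covering bound holds; and $\dint\geq 2$. Applying Theorem \ref{thm:bi_lip} with $m=2N$ then produces, for every $\epsilon>0$ and every invariant $\alpha$-H\"older $f$ with $|f|_\alpha=1$, a ReLU network $\psi$ with at most $P(1/\epsilon)^{\dint/(2\alpha)}\log(1/\epsilon)$ parameters such that $|f-\psi\circ E(\cdot;Z)|<\epsilon$ on $K$. The constant $P$ depends on $L_1,L_2,N,G,C_1,\dint$ (and on $\alpha$ through Corollary \ref{cor:intrinsic}). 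Since $L_1,L_2$ are determined by $Z$ and $G$, this gives the claimed dependence of $P$ on $C_1,N,Z,\dint,\alpha,G$, with no dependence on $f$ or $\epsilon$.

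The only real obstacle is confirming the exact form of the cited results, namely that $2N$ generic templates suffice for injectivity in \cite{cahill2024group} and that \cite{balan2023g} gives global rather than merely local bi-Lipschitz bounds for finite $G$. I would take these as stated in the text preceding the corollary. Everything else is bookkeeping of constants.
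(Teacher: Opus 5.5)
Your proposal is correct and matches the paper's argument. The paper likewise invokes generic injectivity of $2N$ max filter templates from \cite{cahill2024group}, upgrades injectivity to bi-Lipschitzness for finite $G$ via \cite{balan2023g}, and applies Theorem~\ref{thm:bi_lip} with $m=2N$. Your extra remarks are correct bookkeeping that the paper leaves implicit: the direct check of the upper Lipschitz bound and the harmless change from $\|\cdot\|_2$ to $\|\cdot\|_\infty$ on $\RR^{2N}$.
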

Similar corollaries can be obtained for other bi-Lipschitz models, such as all bi-Lipschitz models mentioned in the beginning of this section.

\textbf{Acknowledgements}
We are grateful to Thien Le for useful discussions related to this work. JWS and AS were supported by National Science Foundation (DMS-2424305). JWS was also supported by the MURI ONR grant N00014-20-1-2787. ND and SH were supported by ISF grant  272/23.

    \bibliography{refs.bib}
\appendix
\input{proofs.tex}

\end{document}

%% file: proofs.tex
\section{Proofs}
We prove Proposition \ref{prop:dim}:

\begin{proof}[Proof of Proposition \ref{prop:dim}]
We first consider the case of \textbf{permutations}. Let $K\subseteq \RR^{d\times n}$ such that $K/G\subseteq \RR^{d\times n}/S_n $ is a compact set. The function
$$F([X])=\|X\|_F $$
is Lipschitz on $\RR^{d\times n}$ and $S_n$ invariant, and therefore it is also Lipschitz
 on the quotient space by Lemma \ref{lem:holder}. Therefore, it attains a maximum $M$ on $K/G $. Accordingly, $K/G$ is contained in $q(B_M) $, where 
$$B_M=\{X\in \RR^{d\times n}| \quad \|X\|_F\leq M \} .$$
For an appropriate $C=C(M)$, we have
$$N_r(B_M)\leq C_Mr^{-nd}, \quad \forall r>0 ,$$
and since the quotient map $q$ is 1-Lipschitz, we can push any cover of $B_M$ of balls with radius $r$ to a cover of $q(B_M)$ by balls with the same radius. Thus 
$$N_r(K/G)\leq N_r(B_M/G)\leq N_r(B_M)
\leq C_Mr^{-nd}, \quad \forall r>0 . $$

Next we consider the combined action of \textbf{permutations and rigid motions}. Let $K\subseteq \RR^{d\times n} $ be a $G$ invariant set such that $K/G$ is compact. As in \eqref{eq:cent}, we denote 
\begin{equation*}
\cent(X)=X-\frac{1}{n}X1_n1_n^T. \end{equation*}
The  function  $F([X])=\|\cent(X)\|_F$ is Lipschitz on $\RR^{d\times n}$ and is $G$ invariant. Therefore it is  well defined and Lipschitz on the quotient space, by Lemma \ref{lem:holder}, and therefore it attains a maximum $M$ on $K/G$. It follows that  $K/G $ is contained in  $q(B_M)$. Next, we claim that $\RR^{d\times n}/G $ is  contained in $q(\M)$, where  
$$\mathcal{M}=\{X\in \RR^{d\times n}| X1_n=0_d, X[i,j]=0, \forall d\geq i>j \geq 1  \} .$$
This is because, given any $X\in \RR^{d\times n}$ we can (a)  centralize $X$, and then, denoting by $k$ the rank of $\cent(X)$ (b) relabel the entries of $\tilde X=\cent(X)$ so that $\mathrm{rank}(\tilde x_1,\ldots,\tilde x_k)=k $. We can then apply the Gram-Schmidt procedure to obtain a rotation $U$ such that $U\tilde X$ is in $\M$ (if $k<d$ this rotation still exists, but is not unique), and $U\tilde X\in[X] $. It follows that $K/G\subseteq q(B_M\cap \M) $
Under our assumption that $n\geq d+1 $, the dimension of the subspace $\M$ is 
$$\dim(\mathcal{M})=nd-\frac{d(d-1)}{2}-d=nd-\dim(E(d))=nd-\dim(G),$$
and therefore there exists some $C>0$ such that 
$$N_r(B_M\cap \M)\leq Cr^{-(nd-\dim(G))}, \quad \forall r>0. $$
Again using the fact that $q$ is $1$-Lipschitz, we obtain 
$$N_r(K/G)\leq N_r(q(B_M\cap \M))\leq N_r(B_M\cap \M) \leq Cr^{-(nd-\dim(G))}, \quad \forall r>0. $$

 \end{proof}